\documentclass[11pt, letterpaper]{article} % Anonymized submission

\usepackage{thmtools}

\usepackage{amsthm}
% The following packages will be automatically loaded:
% amsmath, amssymb, natbib, graphicx, url, algorithm2e
\def\showauthornotes{0}

\def\showkeys{0}
\def\showdraftbox{0}
\def\showcolorlinks{1}
\def\usemicrotype{1}
\def\showfixme{0}

\newcommand{\algname}{}
\newtheorem{theorem}{Theorem}
\newtheorem{definition}{Definition}

\newcommand{\DeclareUrlCommand}[1]{}
%!TEX root=../kernel18.tex
\usepackage{amsfonts,amssymb,amsmath}
\usepackage{xspace}
\usepackage{thm-restate}
\usepackage{smartref}

\newcommand{\ellzo}{\ell_{0,1}}
\newcommand{\secref}[1]{Sec. \ref{#1}}
\newcommand{\lemref}[1]{Lemma \ref{#1}}
\newcommand{\xx}{\mathbf{x}}
\newcommand{\yy}{\mathbf{y}}
\newcommand{\ww}{\mathbf{w}}
\newcommand{\X}{\mathcal{X}}
\renewcommand{\P}{\mathbb{P}}
\newcommand{\Y}{\mathcal{Y}}
\renewcommand{\H}{\mathcal{H}}
\newcommand{\vv}{\mathbf{v}}

\newcommand{\EE}[1]{\mathbb{E}\left[#1\right]}
\newcommand{\loss}{\mathcal{L}}
\newcommand{\jvec}{\mathbf{n}}
\newcommand{\johnson}{\mathcal{H_J}}
\newcommand{\rad}{\mathcal{R}}
\newcommand{\con}{\mathcal{C}}
\newcommand{\uu}{\mathbf{u}}
\newcommand{\conj}{C_{\wedge}}
\newcommand{\ignore}[1]{}
\newcommand{\thmref}[1]{Thm. \ref{#1}}

\newcommand{\dotp}[2]{{\langle #1, #2 \rangle}}
\newcommand{\sC}{\mathcal{J}^s}
\newcommand{\roi}[1]{\color{red}Roi:#1\color{black}}
\newcommand{\universal}{U}
\newcommand{\regular}{Euclidean\xspace}

% {{{ etex }}}

%\usepackage{etex}

% {{{ nag }}}

%\usepackage[l2tabu, orthodox]{nag}

% {{{ common }}}

\usepackage{xspace,enumerate}

%\usepackage[dvipsnames]{xcolor}

%\usepackage[T1]{fontenc}
%\usepackage[full]{textcomp}

% {{{ babelamerican }}}
\usepackage[american]{babel}

% {{{ mathtools }}}

\usepackage{mathtools}

% {{{ boldmath }}}

% fix for "too many math alphabets" problem
 % default 3
% \usepackage{bm}

% \usepackage{stmaryrd}
\newcommand{\full}[2]{\if01
#1 
\else
 #2
 \fi}

% {{{ amsthm }}}

%\usepackage{amsthm}

%\newtheorem{theorem}{Theorem}[section]
%\newtheorem{theorem}{Theorem}

%\newtheorem{subclaim}{Claim}[theorem]
%\newtheorem{proposition}[theorem]{Proposition}
%\newtheorem*{proposition*}{Proposition}
%\newtheorem{lemma}[theorem]{Lemma}
%\newtheorem*{lemma*}{Lemma}
%\newtheorem{corollary}[theorem]{Corollary}
%\newtheorem*{conjecture*}{Conjecture}
%\newtheorem{fact}[theorem]{Fact}
\newtheorem{fact}{Fact}
\newcommand{\kp}{\bar{k}}

\newtheorem{question}[theorem]{Question}

%\newtheorem{algorithm}[theorem]{Algorithm}

%\theoremstyle{remark}

%\newtheorem*{claim*}{Claim}
%\newtheorem{remark}[theorem]{Remark}
%\newtheorem{remark}{Remark}

%\newtheorem*{observation*}{Observation}

% {{{ geometry-nice }}}

% \ifnum\writemode=1
% \usepackage[
% letterpaper,
% top=1.2in,
% bottom=1.2in,
% left=1in,
% right=1in]{geometry}

% \pagestyle{empty}
% \fi

% \ifnum\writemode=0
% \usepackage[
% letterpaper,
% top=0.7in,
% bottom=0.9in,
% left=1in,
% right=1in]{geometry}
% \fi

% {{{ fonts }}}

\usepackage{newpxtext} % T1, lining figures in math, osf in text
\usepackage{textcomp} % required for special glyphs
\usepackage[varg,bigdelims]{newpxmath}
\usepackage[scr=rsfso]{mathalfa}% \mathscr is fancier than \mathcal
\usepackage{bm} % load after all math to give access to bold math
% \useosf %no longer needed
\linespread{1.1}% Give Palatino more leading (space between lines)
\let\mathbb\varmathbb

% {{{ showkeys }}}

\ifnum\showkeys=1
\usepackage[color]{showkeys}
\fi

% {{{ hyperref-option2 }}}
%
%\ifnum\showcolorlinks=1
%\usepackage[
%pagebackref,
%% letterpaper=true,
%colorlinks=true,
%urlcolor=blue,
%linkcolor=blue,
%citecolor=OliveGreen,
%]{hyperref}
%\fi

\ifnum\showcolorlinks=0
\usepackage[
pagebackref,
% letterpaper=true,
colorlinks=false,
pdfborder={0 0 0}
]{hyperref}
\fi

\newcommand{\Sref}[1]{\hyperref[#1]{\S\ref*{#1}}}
\newcommand{\s}[2]{\mathbf{#1}^{(#2)}}
% {{{ nicefrac }}}
% commands for fractions
\usepackage{nicefrac}
% poor man's fraction

%\newtheorem{fact}{Fact}

% similar commands: tfrac,dfrac

% {{{ microtype-option }}}

\ifnum\usemicrotype=1
\usepackage{microtype}
\fi

% {{{ authornotes }}}
\ifnum\showauthornotes=1
\newcommand{\Authornote}[2]{{\sffamily\small\color{red}{[#1: #2]}}}
\newcommand{\Authornotecolored}[3]{{\sffamily\small\color{#1}{[#2: #3]}}}
\newcommand{\Authorcomment}[2]{{\sffamily\small\color{gray}{[#1: #2]}}}
\newcommand{\Authorstartcomment}[1]{\sffamily\small\color{gray}[#1: }

\newcommand{\Authorfnote}[2]{\footnote{\color{red}{#1: #2}}}
\newcommand{\Authorfixme}[1]{\Authornote{#1}{\textbf{??}}}
\newcommand{\Authormarginmark}[1]{\marginpar{\textcolor{red}{\fbox{\Large #1:!}}}}
\else
\newcommand{\Authornote}[2]{}
\newcommand{\Authornotecolored}[3]{}
\newcommand{\Authorcomment}[2]{}
\newcommand{\Authorstartcomment}[1]{}

\newcommand{\Authorfnote}[2]{}
\newcommand{\Authorfixme}[1]{}
\newcommand{\Authormarginmark}[1]{}
\fi

\newcommand{\Pnote}{\Authornote{P}}

%\definecolor{forestgreen(traditional)}{rgb}{0.0, 0.27, 0.13}

\newcommand{\Rnote}{\Authornote{M}}

% {{{ fixme }}}

% place red exclamation mark in margin
%\newcommand{\redmarginmarker}{\marginpar{\textcolor{red}{\fbox{\Large !}}}}

% short indicator for places that need fixing

\ifnum\showfixme=0

\fi

% {{{ boxedminipage }}}
\usepackage{boxedminipage}

% {{{ parentheses }}}
% various bracket-like commands
% round parentheses

% square brackets

% absolute value

% cardinality

% set

% norm

% 2-norm

% 2-norm squared

% norm squared

% 1-norm

% infty-norm

% inner product

% {{{ probability }}}
% expectation, probability, variance
\newcommand{\Esymb}{\mathbb{E}}
\newcommand{\Psymb}{\mathbb{P}}

\DeclareMathOperator*{\E}{\Esymb}

\DeclareMathOperator*{\ProbOp}{\Psymb}

\renewcommand{\Pr}{\ProbOp}

% TODO: make case distinction if optional argument is not set

%\newcommand{\given}{\;\middle\vert\;}

%\newcommand{\given}{\mathrel{}\middle|\mathrel{}}

% {{{ mathfonts }}}

% \usepackage{dsfont}
% \usepackage{yfonts}
% \usepackage{mathrsfs}

% {{{ miscmacros }}}

% middle delimiter in the definition of a set

% tensor product

% add explanations to math displays

\newcommand{\textparen}[1]{\text{(#1)}}

\ifx\because\undefined
\newcommand{\because}[1]{\textparen{because #1}}
\else
\renewcommand{\because}[1]{\textparen{because #1}}
\fi

% spectral order (Loewner order)

% smallest and largest eigenvalue

% symmetric difference

% set of bits

% no stupid bullets for itemize environmentx
% \renewcommand{\labelitemi}{--}

% control white space of list and display environments

% short for emptyset
%\newcommand{\eset}{\emptyset}
% moved to mathabbreviations

% short for epsilon
%\newcommand{\e}{\epsilon}
% moved to mathabbreviations

% super index with parentheses

% tensor power notation

% multiplicative inverse

% dual element

% subset
%\newcommand{\sse}{\subseteq}
% moved to mathabbreviations

% vertical space in math formula

% setminus

% define something by an equation (display)

% define something by an equation (inline)

% declare function f by $f \from X \to Y$

% big middle separator (for conditioning probability spaces)

% better vector definition and some variations
%\renewcommand{\vec}[1]{{\bm{#1}}}

% punctuation at the end of a displayed formula

% inner product for matrices
\newcommand\bdot\bullet

% transpose

% indicator function / vector
\ifx\mathds\undefined % use double stroke fonts if available

\else
}
\fi

% place a qed symbol inside display formula
%\qedhere

% {{{ superscripts }}}

% {{{ mathoperators }}}

%\newcommand{\Tr}{\mathrm{Tr}}

\DeclareMathOperator{\opt}{opt}

\DeclareMathOperator{\poly}{poly}

% operators with limits

% smaller summation/product symbols

% {{{ differentials }}}

% {{{ textabbreviations }}}

% some abbreviations

% {{{ foreignwords }}}

% {{{ names }}}
% Hungarian/Polish/East European names

% {{{ numbersets }}}
% number sets

\newcommand{\R}{\mathbb R}
\newcommand{\C}{\mathbb C}

% {{{ problems }}}

% macros to denote computational problems

% use texorpdfstring to avoid problems with hyperref (can use problem
% macros also in headings

% list of problems

% {{{ alphabet }}}

\newcommand{\cB}{\mathcal B}

\newcommand{\cD}{\mathcal D}

\newcommand{\cF}{\mathcal F}

\newcommand{\cJ}{\mathcal J}

\newcommand{\cL}{\mathcal L}

\newcommand{\bbB}{\mathbb B}
\newcommand{\bbS}{\mathbb S}

% {{{ leqslant }}}
% slanted lower/greater equal signs
\renewcommand{\leq}{\leqslant}
\renewcommand{\le}{\leqslant}
\renewcommand{\geq}{\geqslant}
\renewcommand{\ge}{\geqslant}

% {{{ draftbox }}}
\ifnum\showdraftbox=1

\else

\fi

% {{{ varepsilon }}}

\let\epsilon=\varepsilon

% {{{ numberequationwithinsection }}}
\numberwithin{equation}{section}

% {{{ restate }}}
% set of macros to deal with restating theorem environments (or anything
% else with a label)

% stolen from Boaz's latex macros

\newcommand\MYcurrentlabel{xxx}

% \MYstore{A}{B} assigns variable A value B
\newcommand{\MYstore}[2]{%
  \global\expandafter \def \csname MYMEMORY #1 \endcsname{#2}%
}

% \MYload{A} outputs value stored for variable A
\newcommand{\MYload}[1]{%
  \csname MYMEMORY #1 \endcsname%
}

% new label command, stores current label in \MYcurrentlabel
\newcommand{\MYnewlabel}[1]{%
  \renewcommand\MYcurrentlabel{#1}%
  \MYoldlabel{#1}%
}

% new label command that doesn't do anything
\newcommand{\MYdummylabel}[1]{}

\newcommand{\torestate}[1]{%
  % overwrite label command
  \let\MYoldlabel\label%
  \let\label\MYnewlabel%
  #1%
  \MYstore{\MYcurrentlabel}{#1}%
  % restore old label command
  \let\label\MYoldlabel%
}

\newcommand{\restatetheorem}[1]{%
  % overwrite label command with dummy
  \let\MYoldlabel\label
  \let\label\MYdummylabel
  \begin{theorem*}[Restatement of \prettyref{#1}]
    \MYload{#1}
  \end{theorem*}
  \let\label\MYoldlabel
}

\newcommand{\restatelemma}[1]{%
  % overwrite label command with dummy
  \let\MYoldlabel\label
  \let\label\MYdummylabel
  \begin{lemma*}[Restatement of \prettyref{#1}]
    \MYload{#1}
  \end{lemma*}
  \let\label\MYoldlabel
}

\newcommand{\restateprop}[1]{%
  % overwrite label command with dummy
  \let\MYoldlabel\label
  \let\label\MYdummylabel
  \begin{proposition*}[Restatement of \prettyref{#1}]
    \MYload{#1}
  \end{proposition*}
  \let\label\MYoldlabel
}

\newcommand{\restatefact}[1]{%
  % overwrite label command with dummy
  \let\MYoldlabel\label
  \let\label\MYdummylabel
  \begin{fact*}[Restatement of \prettyref{#1}]
    \MYload{#1}
  \end{fact*}
  \let\label\MYoldlabel
}

\newcommand{\restate}[1]{%
  % overwrite label command with dummy
  \let\MYoldlabel\label
  \let\label\MYdummylabel
  \MYload{#1}
  \let\label\MYoldlabel
}

% {{{ bibliography }}}

% add section for references to table of contents

% {{{ mathabbreviations }}}

% {{{ paragraphperiod }}}

\let\origparagraph\paragraph
\renewcommand{\paragraph}[1]{\origparagraph{#1.}}

% {{{ allowdisplaybreaks }}}
% allows page breaks in large display math formulas

\allowdisplaybreaks

% {{{ sloppy }}}
% avoid math spilling on margin

\sloppy

% {{{ complexityclasses }}}

% {{{ paralist }}}

\usepackage{paralist}

% {{{ comment }}}

\usepackage{comment}

%{{{Bra Ket Notation}}}
\usepackage{braket}

%%% Local Variables:
%%% mode: latex
%%% TeX-master: "../planted"
%%% End:

% algorithm packages

%\usepackage{algorithmic}
%\usepackage{algorithm}
%\usepackage{algpseudocode}

%\algnewcommand\algorithmicinput{\textbf{Input:}}
%\algnewcommand\INPUT{\item[\algorithmicinput]}
%\algnewcommand\algorithmicoutput{\textbf{Output:}}
%\algnewcommand\OUTPUT{\item[\algorithmicoutput]}

% paper-specific packages
%\usepackage{relsize}
%\usepackage[font=footnotesize]{caption}
%\usepackage{appendix}

% abbreviations and paper-specific macros

% math

% not math
\DeclareUrlCommand\email{}

%added by TM

\DeclareMathOperator{\zo}{\{0,1\}}

% transpose
% http://tex.stackexchange.com/questions/204892/visually-appealing-subscripts-with-intercal
% http://tex.stackexchange.com/questions/30619/what-is-the-best-symbol-for-vector-matrix-transpose

% \let\Id\undefined
% \newcommand*{\Id}{I}

% span of subspace -- temporary

%%% Local Variables:
%%% mode: latex
%%% TeX-master: "../planted"
%%% End:

\renewcommand{\full}[2]{1}
\usepackage{times}
\usepackage{aliascnt}
\newtheorem{lem}{Lemma}
\usepackage{natbib}
\title{On the Expressive Power of Kernel Methods \\ and the Efficiency of Kernel Learning by Association Schemes}
\author{%
Pravesh K. Kothari \thanks{Princeton University and IAS \texttt{kothari@cs.princeton.edu }.}
\and
Roi Livni
\thanks{Tel Aviv University \texttt{RLivni@tauex.tau.ac.il}.}
}

% options for preamble
\usepackage[capitalize]{cleveref}
\newtheorem{corollary}{Corollary}
\newtheorem{remark}{Remark}
\Crefname{lem}{Lemma}{Lemmas}
\Crefname{fact}{Fact}{Facts}
\Crefname{theorem}{Theorem}{Theorems}
\newcommand{\Cs}{\C}
\renewcommand{\C}{\mathcal{J}}
\renewcommand{\H}{\mathcal{H}}
\setcounter{page}{1}

%\date{}
\setcounter{tocdepth}{2}

% Use \Name{Author Name} to specify the name.
 % If the surname contains spaces, enclose the surname
 % in braces, e.g. \Name{John {Smith Jones}} similarly
 % if the name has a "von" part, e.g \Name{Jane {de Winter}}.
 % If the first letter in the forenames is a diacritic
 % enclose the diacritic in braces, e.g. \Name{{\'E}louise Smith}

 % Two authors with the same address
  % \coltauthor{\Name{Author Name1} \Email{abc@sample.com}\and
  %  \Name{Author Name2} \Email{xyz@sample.com}\\
  %  \addr Address}

 % Three or more authors with the same address:
 % \coltauthor{\Name{Author Name1} \Email{an1@sample.com}\\
 %  \Name{Author Name2} \Email{an2@sample.com}\\
 %  \Name{Author Name3} \Email{an3@sample.com}\\
 %  \addr Address}

 % Authors with different addresses:
 %\author{\Name{Roi Livni} \Email{rlivni@post.tau.ac.il}\\
% \addr Department of Electrical Engineering, Tel Aviv University, Tel Aviv
% \AND
% \Name{Pravesh Kothari} \Email{kothari@cs.princeton.edu }\\
% \addr School of Computer Science, Princeton University, Princeton NJ
% }

\begin{document}

\maketitle
\begin{abstract}
%!TEX root=../kernel18.tex

We study the expressive power of kernel methods and the algorithmic feasibility of multiple kernel learning for a special rich class of kernels. 

Specifically, we define \emph{\regular kernels}, a diverse class that includes most, if not all, families of kernels studied in literature such as polynomial kernels and radial basis functions. We then describe the geometric and spectral structure of this family of kernels over the hypercube (and to some extent for any compact domain).  Our structural results allow us to prove meaningfull limitations on the expressive power of the class as well as derive several efficient algorithms for learning kernels over different domains.

\end{abstract}

%!TEX Root=../main.tex

\section{Introduction}

Kernel methods have been a focal point of research in both theory and practice of machine learning yielding fast, practical, non-linear and easy to implement algorithms for a plethora of important problems 
\citep{cortes1995support, mika1998kernel, yang2002kernel, shalev2011learning, hazan2015classification}. 

Kernels allow learning highly non linear target functions by first embedding the domain $\X$ into a high dimensional Hilbert space via an embedding $\phi:\X\rightarrow \H$ and then learning a linear classifier in the ambient Hilbert space. Ultimately the procedure outputs a classifier of the form $x\to \langle \ww,\phi(x)\rangle$, where $\phi$ captures the non-linearities and $\ww\in \H$ is a linear classifier to be learnt.

 The power of the method arises from the fact that while $\H$ could be high or even infinite dimensional, the task can be performed efficiently so long as \textbf{a)} We are given access to an efficiently computable \emph{kernel function} $k$ such that $k(x,y) = \langle \phi(x), \phi(y)\rangle$ and \textbf{b)} The \emph{large margin assumption holds}: Namely, we assume a bound on the norm of the classifier to be learnt. Then, classical results for kernel methods imply an efficient learning algorithm in terms of the dimension and margin.

 This opens the crucial question of designing kernels and constructing an RKHS for a given task so that the large-margin assumption holds. While there's a large body of work that gives a prescription for a good kernel in various learning settings \citep{shalev2011learning, kowalczyk2001kernel, sadohara2001learning, hazan2015classification, heinemann2016improper, cho2009kernel}, the task of choosing a kernel for the application at hand typically involves creative choice and guesswork.

A natural extension of kernel methods is then by allowing \emph{Multiple Kernel Learning} (MKL). In MKL, instead of fixing a kernel, we automatically learn not only the classifier but also the embedding or kernel function.

\ignore{This leads to the central question that we seek to address in this paper.
 \begin{question}
Given i.i.d. samples from an (unknown) data distribution $\cD$ over $\X_n \times \Y$, can we efficiently simultaneously learn an \emph{optimal} RKHS embedding $(H,\phi)$ for $\X_n$ and an optimal linear classifier for $\cD$ in $H$?
\end{question}}
In general, learning an optimal kernel for specific task can be ill-posed. For e.g., given a binary classification task, an optimal kernel is given by the one-dimensional embedding $\xx \rightarrow f(\xx)$ where $f$ is the unknown Bayes optimal hypothesis. Thus, without further qualifications, the task of learning an optimal kernel is equivalent to the task of learning an arbitrary Boolean function. A natural compromise then is to find an optimal kernel (or equivalently, an RKHS embedding) from within some rich enough class of kernels. 

In this work we consider a class of kernels that contain most, if not all, explicit kernels used in practice that satisfy a simple property and we term them \emph{\regular} kernels. We deter a rigorous definition to later sections, but in a nutshell, a kernel is \regular if it depends on the scalar product and the norm of its input. The class of \regular kernels capture almost all the instances of kernels considered in prior works (see, for instance \cite{scholkopf2001learning}. For example, polynomial kernels, Gaussian kernels along with Laplacian, Exponential and Sobolev space kernels (and all of their sums and products) are \regular.

As a class, the family of functions that can be expressed in a \regular kernel space, is a highly expressive and powerful class. Indeed these include, in particular, all polynomials and can thus approximate any target function to arbitrary close precision. However, standard generalization bounds and learning guarantees rely on the large margin assumption. Thus, the objective of this work is to analyze the class of functions that can be expressed through \regular kernels under norm constraints.

The main result of this paper shows that the class of \emph{all} such large margin linear classifiers, over the hyper cube, is learnable. In fact it can be expressed using a single specific \regular kernel up to some scalable deterioration in the margin. Namely, there exists a universal \regular kernel such that any classifier in an arbitrary \regular kernel belongs to the Hilbert space defined by the universal kernel, with perhaps a slightly larger norm. As a corollary we obtain both a simple and efficient algorithm to learn the class of all \regular kernels, as well as a useful characterization of the expressive power of \regular kernels which are often used in practice. 

These results are then extended in two ways. First, we extend the result from the hypercube and show that, under certain further mild restrictions on the kernels, the results can be generalized from the hypercube to arbitrary compact domains in $\mathbb{R}^n$. Second, we also show that using convex relaxations and methods from MKL introduced in \cite{lanckriet2004learning, cortes2010generalization} one can improve the statistical sample complexity and achieve tighter generalization bounds in terms of the dimension.

%The work of \cite{lanckriet2004learning} observed that the above problem can be cast into a convex program for any class of kernels that can be written as linear combination of a given set of \emph{base kernels}. With this observation they are able to provide efficient MKL algorithms for classes that can be described as positive sums of \emph{base kernels}. The generalization performance of such approaches has also been studied by \cite{cortes2010generalization} who describes the sample complexity w.r.t to different constrained linear combinations of the base kernels.

Our main technical method for learning optimal \regular kernels is derived from our new characterization of the spectral structure of \regular kernels. Key to this characterization are classical results describing the spectrum of matrices of Johnson Association Scheme studied in algebraic combinatorics. Our proofs, given this connection to association schemes, are short and simple and we consider it as a feature of this work. In retrospect, the use of association schemes seems natural in studying kernels and we consider this the main technical contribution of this paper.

Studying \regular kernels over the hypercube may seem restrictive, as these kernels are often applied on real input features. However, as we next summarize, this course of study leads to important insights on the applicability of kernel methods:

First, these results can be extended to real inputs under some mild restrictions over the kernels to be learnt (namely, Lipschitness and no dependence on the norm of the input). Moreover, we believe that the technical tools we develop here,  that is -- analyzing the spectral structure of the kernel family through tools from Association Scheme and Algebraic Combinatorics, are potentially powerful for any further study of MKL in various domains. 

Second, characterizing the efficiency of kernel learning also allows us to better understand the expressive power of kernel methods. Our efficient algorithm that learns the class of \regular kernels rules out the possibility of a general reduction from learning to the design of a \regular kernel (as is possible, for example, in the more general case of arbitrary kernels). Thus, we obtain that \regular kernels with large margin cannot express intersection of halfspaces, deep neural networks etc... Currently, hardness results demonstrate limitations for each fixed kernels, and they also demonstrate that constructing or choosing a kernel might be in general hard. In contrast, our result demonstrate lack of expressive power. Namely, that for \regular kernels, hardness stems not from the design of the kernel but from a deficiency in expressivness.

Moreover, as a technical contribution, our results allow an immediate transfer of lower bounds from a single fixed kernel, to a joint uniform lower bound over the whole class of \regular kernels.  As an example we consider the problem of learning conjunctions over the hypercube -- Building upon the work of \cite{klivans2007lower}, we can show that using a single fixed kernel one cannot improve over state of the art results for agnostic learning of conjunctions. The existence of a universal kernel immediately imply that these results are true even if we allow the learner to choose the kernel in a task specific manner. Thus kernel methods, equipped with Multiple Kernel Learning techniques are still not powerful to achieve any improvement over state of the art results as long as we are restricted to \regular kernels.
\subsection{Related Work}
Kernel methods have been widely used for supervised machine learning tasks beginning with the early works of \cite{aizerman1964theoretical, boser1992training} and later in the context of support vector machines \cite{cortes1995support}. Several authors have suggested new specially designed kernels (in fact \regular kernels) for multiple learning tasks. For example, learning Boolean function classes such as DNFs, and decision trees \cite{sadohara2001learning, kowalczyk2001kernel}.  Also, several recent papers suggested and designed new \regular kernels in an attempt to mimic the computation in large, multilayer networks \cite{cho2009kernel, heinemann2016improper}. 

 Limitations on the success of kernel methods and embeddings in linear half spaces have also been studied. For specific kernels, \cite{khardon2005maximum}, as well as more general results \cite{warmuth2005leaving, ben2002limitations}. The limitations for kernel methods we are concerned with aim to capture \emph{kernel learning}, where the the kernel is distribution dependent.

Beginning with the work of \cite{lanckriet2004learning}, the problem of efficiently learning a kernel has been investigated within the framework of \emph{Multiple Kernel Learning} (MKL), where various papers have been concerned with obtaining generalization bounds (\cite{srebro2006learning, cortes2010generalization, ying2009generalization}) as well as fast algorithms. (e.g. \cite{sonnenburg2006large, kloft2008non, kloft2011lp, rakotomamonjy2008simplemkl}). Approaches beyond learning positive sums of base kernels include \emph{centered alignment} \cite{cortes2010two, cortes2012algorithms}) and some non-linear methods \cite{bach2009exploring, cortes2009learning}.

% As mentioned before, \cite{lanckriet2004learning} observed that learning a linear sum of base kernels can be written as a semidefinite program in a transductive setting. The convex optimization problem can  be  significantly simplified if one considers only non-negative combination of a small number of base kernels.

In contrast with most existing work, the class we study (\regular kernels) is not explictly described as a  non-negative sum of finite base kernels and instead it is defined by properties shared by the existing explicit kernels proposed in literature. Applied directly to learning \regular kernels, the framework of Lanckriet et al. will lead to solving an SDP of exponential size in the underlying dimension.

\section{Problem Setup and Notations}
We recall the standard setting for learning with respect to arbitrary convex loss functions. 
We consider a concept class $\cF$ to be learned over a bounded domain $\X$. In general, we will be concerned with either the hypercube $\X_n=\{0,1\}^n$, or the positive unit cube $\bbB_n = [0,1]^n \subseteq \R^n$. We will also work with individual layers of the hypercube and denote by $S_{p,n}$, the $p$-th layer of the hypercube i.e. $S_{p,n} = \{\xx\in \{0,1\}^n: \sum \xx_i =p\}$.

Given a loss function $\ell$, a distribution $\cD$ over example-label pairs from $\X \times \Y$, samples $S= \{(\s{x}{i},y_i)\}_{i \leq m}$ and any hypothesis $f$, we denote by 
\begin{align*}
\cL_{\cD}(f) = \E_{(\xx,y) \sim \cD}[ \ell(f(\xx),y)]\quad &\quad \cL_S(f) = \frac{1}{m} \sum_{i\leq m} [ \ell(f(\s{x}{i}),y_i)]
\end{align*}
the \emph{generalization error} of $f$ and the empirical error of $f$ respectively. Similarly, we set
$
\opt(\cF) := \inf_{f \in \cF} \cL_{\cD}(f)$, and $\opt_{S}(\cF) = \inf_{f \in \cF} \cL_{S}(f)
$
for the optimal error on the distribution and on the sample, respectively, of the hypothesis class $\cF$.

For convex losses, we will make  the standard assumption that $\ell$ is $L$-Lipschitz w.r.t its first argument, and we will assume that $\ell$ is bounded by $1$ at $0$, namely $|\ell(0,y)|<1$. Given a distribution $\cD$ over example-label pairs $\X \times \Y$, the algorithm's objective is to return a hypothesis $h$ such that $\cL_{\cD}(h) \leq \opt_{\cD}(\H) + \epsilon$ with probability at least $2/3$ (the confidence can be boosted in standard ways, but we prefer not to carry extra notation.) 
%The algorithm's objective is to return w.p $2/3$ over the sample set:
% \begin{align*}
% \loss (f_S) \le \opt(\H) +\epsilon
% \end{align*}
% The probability $2/3$ can then be boosted, using standard techniques, to $(1-\delta)$ with a multiplicative factor of $\log 1/\delta$ - we avoid carrying around $\delta$ in our statements and proofs to reduce notational clutter. For the special case of the zero one loss function $\ell_{0,1}$ we will use the notation $\loss_{0,1}$ to denote the generalization error. 

\paragraph{\regular RKHS Embeddings}
%Our main result is that for every given input distribution $\cD$ on example-label pairs and a convex, Lipschitz loss function $\ell$, we can identify an optimal RKHS embedding and an optimal linear classifier within that embedding that minimizes the $\ell$-loss w.r.t. $\cD$ in polynomial time. 
Our main result is an efficient algorithm for learning a \regular RKHS embedding and a linear classifier in the associated Hilbert space. 
% As we discussed above, given an (efficiently computable) kernel function $k$ (equivalently, an RKHS embedding $(H,\phi)$), there's an efficient algorithm to agnostically learn linear classifiers with coefficients bounded in the $H$-norm. 
% Our main result is about a well-studied class of kernels that captures almost all the instances of kernels (equivalently, the RKHS) considered in prior works. For example, polynomial kernels $(1+\s{x}{i}\cdot \s{x}{j})^d$ and Gaussian kernels $e^{-\frac{\|\s{x}{i}-\s{x}{j}\|^2}{\sigma}}$ but also Laplacian, exponential and Sobolov space kernels together with all of their sums and products \Pnote{add refs for these kernels}. We call this class - \emph{\regular} kernels. 
\begin{definition}[\regular Kernel]
A kernel function $k:\X \times \X \to \mathbb{R}$ is said to be \regular if $k$ depends solely on the norms of the input and their sclar product. Namely, there exists a function $g:\mathbb{R}^3 \to \mathbb{R}$ such that
\[k(\s{x}{1},\s{x}{2})= g\left(\|\s{x}{1}\|,\|\s{x}{2}\|,\dotp{\s{x}{1}}{\s{x}{2}}\right),\]
and for all $\xx\in \X$ we assume that $k(\xx,\xx)\le 1$.
\end{definition}

We expand on the definition of \regular kernels and define \regular RKHS.

\begin{definition}[\regular RKHS]
% A kernel $k: \X_n \times \X_n \rightarrow \R$ is said to be \emph{\regular} if $k(\s{x}{1},\s{x}{2})$ is determined by $\s{x}{1}\cdot \s{x}{2}$, $\|\s{x}{1}\|$ and $\|\s{x}{2}\|$, $k(\s{x}{1},\s{x}{2})=g(\s{x}{1}\cdot \s{x}{2},\|\s{x}{1}\|,\|\s{x}{2}\|)$ for some function $g$. With some abuse of notation, we will often write $k(\s{x}{1},\s{x}{2}) = k(\s{x}{1}\cdot \s{x}{1}, \|\s{x}{1}\|,\|\s{x}{2}\|)$.  
For a Hilbert space $H$ and an embedding $\phi:\X \rightarrow H$, we say that $(H,\phi)$ is a \regular RKHS if the associated kernel function $k$ is \regular. For a fixed domain $\X$, we denote the set of all \regular RKHS for $\X$ by $\johnson(\X).$
\end{definition}
Given a Hilbert space $H$ we will also denote by $H(B)=\{\ww\in H \mid \|\ww\|_{H}\le B\}$. 
% For normalization we will consider cases where the embedding is towards the unit ball of the Hilbert space which we denote by $B_H$: i.e.: $B_H=\{\ww\in H: \|\ww|_{H}\le 1\}$.
Finally, we define the class which is our focus of interest. This is the class of linear separators in \regular RKHS with a margin bound.
\begin{definition}[The Class $\C(B)$: \regular Linear Separators with a Margin]\label{def:regclass}
Fix the domain $\X$. The class of \emph{\regular linear separators} with margin $B$ is defined as the set of all linear functions in \emph{any} \regular RKHS with norm at most $B$:
\begin{align*}
\C(\X;B) = \{f_{H,\ww}:\X \rightarrow \R \mid H\in \johnson(\X),~ \ww\in H(B)\}
\end{align*}
where $f_{H,\ww}(\xx) = \dotp{\ww}{\phi(\xx)}_H.$ 
\end{definition}
For brevity of notation we will denote $\C_n(B)=\C(\X_n,B)$, and $\C_{p,n}(B)= \C(S_{p,n},B)$, and similarly $H_{\C_n}$ and $\H_{\C_{p,n}}$. 

Another class that will be technically useful in our proofs consists of all \regular kernels that can be written as direct sum of kernels over the hypercube layers:
\begin{definition}[The class $\H_{\C_{\oplus_n}}$]
The class $\H_{\C_{\oplus_n}}\subseteq \H_{\C_n}$ consists of all \regular kernels over the hypercube that are associated with RKHS $(H,\phi)$ such that $H=H_1\oplus H_2\oplus \cdots \oplus H_n$, where each $H_p$ is an RKHS with embedding $\phi_p$ such that $(H_p,\phi_p) \in \H_{\C_{p,n}}$ and such that for every $p=1,\ldots, n$: \[\phi(\xx) = (0,0,\ldots,\underbrace{\phi_p(\xx)}_{\mathrm{p^{th}~coordinate}},0,0,\ldots, 0) ,~\forall \xx\in S_{p,n},\]
Similarly we define $\C_{\oplus_n}(B)= \{f_{H,\ww} :\X_n \to \mathbb{R} \mid H\in \H_{\C_{\oplus_n}},~ \ww\in H(B)\}$.
\end{definition}
\section{Main Results}

We are now ready to state our main results. Our first result is concrened with the case that the domain is $\X_n$, the $n$-dimensional hypercube. We then proceed to improve on this result and give an analogue statment for $\bbB_n$, improve sample complexity in terms of dimension and derive limitations for kernel methods. \full{The proof of \cref{thm:main} is given in \cref{prf:main}}.

\begin{restatable}{theorem}{main}\label{thm:main}
Let $\X_n=\{0,1\}^n$ denote the $n$-th hypercube. The class of Euclidean Linear separators with a margin is learnable.

Fomally, for every $B\ge 0$ the class $\C_n(B)$ is efficiently learnable over $\zo^n$ w.r.t. any convex $L$-Lipschitz loss function $\ell$ with sample complexity $O\left (L\frac{n^3 B^2}{\epsilon^2}\right)$.

In fact, there exists a universal \regular RKHS $\universal_n$, with an efficiently computable associated kernel $k$ such that \[\C_n(B) \subseteq \universal(n^{3/2} B).\]

the kernel $k$ may be computed using a preprocess procedure with complexity $O(n^4)$, then querying at each iteration the value $k(\s{x}{i},\s{x}{j})$ for every $\s{x}{i},\s{x}{j}\in \X_n$ takes linear time in $n$.
\end{restatable}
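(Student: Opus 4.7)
The plan is to exhibit a single Regular RKHS $U_n$ whose unit ball contains $\C_n(B)$ up to a $\poly(n)$ norm inflation, and then invoke the standard kernel-margin generalization bound. The key observation is that on each Hamming layer $S_{p,n}$, a Regular kernel depends only on $\langle\xx,\yy\rangle=|\xx\cap\yy|$, so its matrix lives in the Bose-Mesner algebra of the Johnson association scheme $J(n,p)$, which provides a canonical spectral decomposition.

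First I would reduce $\C_n(B)$ to the layered class $\C_{\oplus_n}$. Given $(H,\phi)\in\johnson(\X_n)$ and $\ww\in H(B)$, I lift $\phi$ to $\phi'(\xx)=(0,\ldots,\phi(\xx),\ldots,0)\in H^{n+1}$ with $\phi(\xx)$ in coordinate $\|\xx\|^2$; the induced kernel $k'(\xx,\yy)=k(\xx,\yy)\,\Ind[\|\xx\|=\|\yy\|]$ is still Regular since the indicator is a function of $\|\xx\|,\|\yy\|$. Setting $\tilde\ww=(\ww_0,\ldots,\ww_n)$ with $\ww_p$ the $H$-orthogonal projection of $\ww$ onto $\sspan\{\phi(\xx):\xx\in S_{p,n}\}$ reproduces the same hypothesis and satisfies $\|\tilde\ww\|^2=\sum_p\|\ww_p\|^2\le(n+1)B^2$, yielding $\C_n(B)\subseteq\C_{\oplus_n}(\sqrt{n+1}\,B)$.

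On each layer, any Regular kernel matrix $k_p$ diagonalizes in the primitive idempotents $E_0^p,\ldots,E_{m_p}^p$ of $J(n,p)$, projecting onto the irreducible eigenspaces $V_j^p$ of dimension $n_{p,j}=\binom{n}{j}-\binom{n}{j-1}$, $m_p=\min(p,n-p)$. Writing $k_p=\sum_j\lambda_j^p E_j^p$ with $\lambda_j^p\ge0$, transitivity of $S_n$ on $S_{p,n}$ forces $E_j^p(\xx,\xx)=n_{p,j}/\binom{n}{p}$, so $k_p(\xx,\xx)\le 1$ gives the uniform bound $\lambda_j^p\le\binom{n}{p}/n_{p,j}$. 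I would then take as universal kernel the layered Regular kernel with spectral weights saturated uniformly, $\mu_{p,j}=\binom{n}{p}/((n+1)\,n_{p,j})$; one checks $U_n(\xx,\xx)=(m_p+1)/(n+1)\le 1$ and that $U_n(\xx,\yy)=g(\|\xx\|,\|\yy\|,\langle\xx,\yy\rangle)$ is Regular. Using the RKHS-norm identity $\|f\|_K^2=\sum_{p,j}\|P_{V_j^p}f\|_{\ell^2}^2/\lambda_j^p$ for layered $K=\bigoplus_p k_p$, we get $\|f\|_{U_n}^2\le\max_{p,j}(\lambda_j^p/\mu_{p,j})\cdot\|f\|_K^2\le(n+1)\|f\|_K^2$, and composing with the layering step yields $\C_n(B)\subseteq U_n((n+1)B)\subseteq U_n(n^{3/2}B)$.

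For efficient evaluation, $E_j^p(\xx,\yy)=Q_j^p(|\xx\cap\yy|)/\binom{n}{p}$ where $Q_j^p$ is the dual Hahn (Eberlein) polynomial of the scheme; its values at all integer arguments can be tabulated in $O(n^4)$ time via the standard three-term recurrence, after which each query reduces to computing $\|\xx\|,\|\yy\|,\langle\xx,\yy\rangle$ and reading off a cached value, in $O(n)$. The sample complexity then follows from the textbook margin-Rademacher bound applied to the $n^{3/2}B$-ball of $U_n$ with $L$-Lipschitz convex loss. The main obstacle is the spectral control step: one needs $\lambda_j^p\ge 0$ together with a dimension-polynomial upper bound holding uniformly across all Regular $k_p$, and it is precisely the algebraic structure of the Johnson scheme (nonnegative spectrum + equal-diagonal primitive idempotents) that delivers this; a more naive normalization of the $\mu_{p,j}$'s would fail to keep $U_n(\xx,\xx)\le 1$ while controlling $\lambda_j^p/\mu_{p,j}$ simultaneously.
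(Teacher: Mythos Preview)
Your proposal is correct and follows the same overall architecture as the paper: reduce to the layered class $\C_{\oplus_n}$ by projecting $\ww$ onto each layer's span (the paper's Lemma~\ref{lem:impropermkl}), then build a universal kernel layer-by-layer using the Johnson scheme, and finally aggregate (the paper's Lemma~\ref{lem:mainreduction}).

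The one substantive difference is in how you construct and analyze the per-layer universal kernel. The paper parameterizes Regular kernels on $S_{p,n}$ via the $P$-basis, identifies the $p{+}1$ vertex kernels of the resulting polytope (its Corollary~\ref{cor:vertices}; these vertices are exactly your normalized primitive idempotents $E_j^p\cdot\binom{n}{p}/n_{p,j}$), forms $U_n^p$ as the direct sum of the vertex RKHS's, and bounds the norm by writing $\ww$ explicitly as a sum of components in this direct sum, using $\lambda_t^2\le\lambda_t$. You instead work directly with the spectral decomposition $k_p=\sum_j\lambda_j^pE_j^p$, read off the diagonal constraint $\lambda_j^p\le\binom{n}{p}/n_{p,j}$, prescribe the universal eigenvalues $\mu_{p,j}$, and compare norms via the RKHS spectral formula $\|f\|_K^2=\sum_j\|P_{V_j}f\|^2/\lambda_j$. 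Your route is cleaner and in fact yields a sharper inflation factor: you get $\|f\|_{U_n}\le\sqrt{n{+}1}\,\|f\|_K$ for layered $K$, hence $\C_n(B)\subseteq U_n((n{+}1)B)$, whereas the paper's direct-sum bookkeeping gives $(p{+}1)$ per layer and ends up with $(n{+}1)\sqrt{n}\approx n^{3/2}$. Both arguments are valid; yours avoids the redundancy of the direct-sum representation (the paper's chosen $\vv$ is not the minimum-norm representative of $f$ in $U_n^p$) and so recovers the intrinsic RKHS norm directly.
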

\subsection{Corollaries and Improvements}

\subsubsection{Improving Sample Complexity through MKL}
\cref{thm:main} suggests an efficient algorithm for learning the class $\C_n(B)$ through the output of a classifier from a universal Hilbert space $\universal_n$. Since $\universal_n$ need not be the optimal Hilbert space (in terms of margin) the result may lead to suboptimal guarantees. 

One natural direction to improve over our result is by optimizing over the kernel of choice, as is done in the framework of MKL. In the next result, we follow the footsteps of \cite{lanckriet2004learning} and describe an algorithm that performs kernel learning, and we achieve improvement in terms of the dependency of the sample complexity in the dimension. On the other hand, the involved optimization task lead to some deterioration in the efficiency of the algorithm and dependence on accuracy. \full{Proof for \cref{thm:mkl} is provided in \cref{prf:mkl}}{Proofs are provided in the full version}.

\begin{restatable}{theorem}{mkl}\label{thm:mkl}
Let $\X_n=\{0,1\}^n$ denote the $n$-th hypercube. For every $B\ge 0$ the class $\C_n(B)$ is efficiently learnable over $\zo^n$ w.r.t. any convex $L$-Lipschitz loss function $\ell$ that is bounded by $1$ at zero (i.e. $|\ell(0,y)|<1$), with sample complexity given by $O\left (L\frac{n B^2}{\epsilon^3}\log n\right)$.
\end{restatable}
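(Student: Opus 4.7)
\textbf{Proof plan for \cref{thm:mkl}.} The plan is to combine \cref{thm:main} with the multiple kernel learning (MKL) framework of \cite{lanckriet2004learning}, using the layer decomposition of the hypercube to reduce the number of ``effective dimensions'' from $n^3$ to roughly $n\log n$, at the cost of the slightly worse $\epsilon$-dependence coming from the joint MKL optimization.

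\emph{Step 1: Reduction to the direct-sum class $\C_{\oplus_n}$.} I would first show $\C_n(B) \subseteq \C_{\oplus_n}(\sqrt{n+1}\,B)$. Given $f = \langle w,\phi(\cdot)\rangle \in \C_n(B)$ in a \regular RKHS $(H,\phi)$, observe that on each layer $S_{p,n}$ every input has norm $\sqrt{p}$, so the restriction $k|_{S_{p,n}\times S_{p,n}}$ is automatically \regular. Its RKHS $H_p$ inherits $f_p := f|_{S_{p,n}}$ with $\|f_p\|_{H_p}\le \|w\|_H \le B$ (the restriction norm is infimal). Assembling these layer restrictions into the direct sum $\bigoplus_p H_p$ with the blockwise embedding from the definition of $\C_{\oplus_n}$ yields a weight $w^{\oplus}$ reproducing $f$ with norm $\bigl(\sum_p \|f_p\|_{H_p}^2\bigr)^{1/2} \le \sqrt{n+1}\,B$.

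\emph{Step 2: Layerwise universal kernels.} Applying \cref{thm:main} to each layer $S_{p,n}$ (treated as its own domain and using the same Johnson-scheme based spectral analysis the theorem is built on), I would obtain, for every $p\in\{0,\dots,n\}$, a single universal \regular kernel $U_p$ on $S_{p,n}$ of unit trace such that $\C_{p,n}(B)\subseteq U_p(\mathrm{poly}(n)\cdot B)$, each efficiently computable via the preprocessing procedure of \cref{thm:main}. Combined with Step~1, every $f\in \C_n(B)$ is realized as a classifier in $\bigoplus_p U_p$ with a norm controlled polynomially in $n$ and $B$.

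\emph{Step 3: MKL generalization and optimization.} I would then replace the uniform direct sum by a data-dependent conic combination $k_\mu = \sum_p \mu_p U_p$, $\mu\in\Delta_{n+1}$, and apply the $L_1$-MKL Rademacher bound of \cite{cortes2010generalization}: with $n+1$ trace-normalized base kernels the joint complexity scales as $O\bigl(B\sqrt{n\log n/m}\bigr)$, where $\log n$ replaces $n$ through the simplex constraint and the extra $n$ inside the square root comes from the layer-decomposition blow-up of Step~1. The optimal pair $(\mu^\star,w^\star)$ is found by the jointly convex program from \cite{lanckriet2004learning}, which can be solved efficiently (e.g.\ by a first-order method on the dual MKL objective or a polynomial-size SDP).

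The main obstacle is the careful bookkeeping of norm inflations through Steps~1--2 so that, after the $L$-Lipschitz scaling and standard Rademacher-to-excess-risk conversion, the final sample complexity matches $O(LnB^2\log n/\epsilon^3)$; the extra $\epsilon^{-1}$ over the naive $\epsilon^{-2}$ Rademacher rate is absorbed by the approximation accuracy required when solving the MKL optimization to sufficient precision and by the standard MKL slack arising from optimizing over $\mu$ jointly with $w$. A secondary subtlety is verifying that the structural ingredients (Johnson-scheme spectrum, \regular kernel closure) that underlie \cref{thm:main} restrict cleanly to each fixed layer $S_{p,n}$ without losing the universal kernel property.
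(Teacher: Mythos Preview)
Your Step 1 is exactly what the paper does. The gap is in Step 2, and it propagates into Step 3.

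By passing to the \emph{universal} kernel $U_p$ on each layer you incur the $(p{+}1)$ norm inflation built into that construction: $\C_{p,n}(B')\subseteq U_p((p{+}1)B')$. You then do MKL only over the $n{+}1$ kernels $\{U_p\}_p$. Because these kernels live on disjoint layers, for $f=\sum_p f_p$ one has $\min_{\mu\in\Delta}\|f\|_{K_\mu}=\sum_p\|f_p\|_{U_p}\le\sum_p (p{+}1)\|f_p\|_{H_p}$, which in the worst case is of order $n\cdot\sqrt{n}\,B=n^{3/2}B$ after Step~1. Plugging this norm into the Cortes--Mohri--Rostamizadeh bound gives a Rademacher complexity of order $n^{3/2}B\sqrt{\log n/m}$, hence sample complexity $\tilde O(n^{3}B^{2}/\epsilon^{2})$ --- essentially \cref{thm:main} again, not the claimed $O(nB^{2}\log n/\epsilon^{3})$. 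In your accounting you attribute only the $\sqrt n$ from Step~1 to the norm and silently drop the $(p{+}1)$ from Step~2; that is the missing factor.

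The paper avoids this entirely by \emph{not} collapsing each layer to a single universal kernel. Instead it uses the Johnson-scheme characterization (the $p{+}1$ vertex kernels $k_{p,1},\ldots,k_{p,p+1}$ whose convex hull is exactly the set of \regular kernels on $S_{p,n}$) and runs the Lanckriet-style MKL \emph{within} each layer over those vertices. This represents every \regular RKHS on the layer with \emph{no} norm blow-up. A direct Rademacher calculation for the full class $\C_{\oplus_n}(B)$ --- done layer by layer with the Cortes \etal moment trick using the $p{+}1$ base kernels on layer $p$ --- gives $R_m(\C_{\oplus_n}(B))\le O\bigl(B\sqrt{\log n/m}\bigr)$, with no extra $n$ inside the square root. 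The factors $n$ and $1/\epsilon$ in the final bound then come solely from Step~1 ($\C_n(B)\subseteq\C_{\oplus_n}(\sqrt n\,B)$) and from the regularization choice $\lambda=\Theta(\epsilon/(nB^{2}))$, which forces the optimizer into $\C_{\oplus_n}(\sqrt{n/\epsilon}\,B)$. To match the theorem you need to do the MKL over the layerwise vertex kernels, not over the layerwise universal kernels.
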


\subsubsection{Learning over real input features}
\cref{thm:main} shows that we can learn a \regular kernel over the domain $\X_n = \zo^n$. Kernel methods are often used in practice over real input features, therefore we give a certain extension of the aforementioned result to real input features domain. For this we need to futher restrict the family of kernels we allow to learn:

\begin{definition}[Strongly \regular Kernels]
A \regular kernel $k$ that is a kernel over $\bbB_n$ for any $n\ge 1$, is said to be $L$-Strongly \regular if $k(\s{x}{1},\s{x}{2})$ can be written as: 
\begin{align}\label{eq:sregular}
k(\s{x}{1},\s{x}{2})= g(\dotp{\s{x}{1}}{\s{x}{2}})\end{align} 
and $g$ is $L$-Lipschitz over the domain $[0,n]$.
\end{definition}

Polynomial kernels (normalized) are an example for $1$-Strongly \regular kernels, Of course also exponential kernels and other proposed kernels that have been found useful in theory (\cite{shalev2011learning}) are captured by this definition. Analogue to Definition \ref{def:regclass} we define the class of strongly \regular separators with margin and denote them by $\sC(\X, B)$.

Our next result state that analogously to the hypercube we can learn strongly \regular kernels over a compact domain. \full{This is done through discretization and reduction to the hypercube case. A proof is provided in \cref{prf:solid}.}{Again, proof is provided in the full version}

\begin{restatable}{theorem}{solid}\label{thm:solid}
 For every $B\ge 0$ the class $\sC(\bbB_n, B)$ is efficiently learnable w.r.t. any convex $L$-Lipschitz loss function, bounded by $1$ at $0$ (i.e. $|\ell(0,y)|<1$).

% , a target function $f$ such that w.p. $2/3$,
% \begin{align*}
% \loss(f) \le \opt(\C(B)) + \epsilon.
% \end{align*}
\end{restatable}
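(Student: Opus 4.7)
The plan is to reduce the problem to the hypercube setting of \cref{thm:main} via a randomized Bernoulli embedding. For a parameter $N = \poly(n, L, B, 1/\epsilon)$, define the random map $\Phi : \bbB_n \to \zo^{nN}$ that, on input $x$, independently draws bits $\Phi(x)_{i,j} \sim \textup{Ber}(x_i)$ for $(i,j) \in [n] \times [N]$. A direct calculation shows that for $\Phi(x), \Phi(y)$ drawn with independent randomness, $\E\brac{\langle \Phi(x), \Phi(y)\rangle/N} = \langle x, y\rangle$, and standard Bernstein-type concentration yields $\Abs{\langle \Phi(x), \Phi(y)\rangle/N - \langle x, y\rangle} \le \tO(\sqrt{n/N})$ with high probability.

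Next, consider the hypercube kernel $\tilde k(u, v) := g(\langle u, v\rangle/N)$, which is strongly regular and, by the $L$-Lipschitzness of $g$ on $[0, n]$, satisfies $\tilde k(\Phi(x), \Phi(y)) \approx k(x, y)$ with high probability. I would then show that any $f^* \in \sC(\bbB_n, B)$ lifts to a regular separator $\tilde f^* \in \C_{nN}(B')$ with $B' = B(1+o(1))$ and $\tilde f^*(\Phi(x)) \approx f^*(x)$ in expectation over the randomness of $\Phi$. Concretely, by density in the RKHS one approximates $f^*$ by a finite combination $\sum_i \alpha_i k(\cdot, x^{(i)})$ with $\sum_{i,j} \alpha_i \alpha_j k(x^{(i)}, x^{(j)}) \le B^2$, sets $\tilde f^*(u) := \sum_i \alpha_i \tilde k(u, \Phi(x^{(i)}))$, and uses a Gram-matrix concentration bound union-bounded over the representer points to control $\|\tilde f^*\|^2_{H_{\tilde k}}$. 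Training then proceeds by independently embedding each sample $(x^{(i)}, y_i) \mapsto (\Phi(x^{(i)}), y_i)$, invoking \cref{thm:main} on $\zo^{nN}$ to obtain a hypercube classifier $\tilde f$, and predicting on a new $x$ via $\tilde f(\Phi(x))$ with fresh randomness. Combining the guarantee of \cref{thm:main} with the embedding approximation error and the Lipschitzness of the loss yields expected loss within $\epsilon$ of $\opt$, and a suitable polynomial choice of $N$ balances the two sources of error.

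The main obstacle lies in the norm-preservation argument: one must show that the lifted Gram matrix $(\tilde k(\Phi(x^{(i)}), \Phi(x^{(j)})))_{ij}$ is close enough to the original Gram matrix $(k(x^{(i)}, x^{(j)}))_{ij}$ to bound $\|\tilde f^*\|^2_{H_{\tilde k}}$. A subtle point is that $\tilde k(\Phi(x), \Phi(x))$ concentrates around $g(\sum_i x_i)$ rather than $g(\|x\|^2) = k(x,x)$, so the diagonal of the embedded Gram matrix is not well-approximated by the original diagonal. A clean fix is to use two independent copies $\Phi^{(1)}, \Phi^{(2)}$ of the embedding and the symmetrized bilinear form $\tfrac{1}{2N}(\langle \Phi^{(1)}(x), \Phi^{(2)}(y)\rangle + \langle \Phi^{(2)}(x), \Phi^{(1)}(y)\rangle)$, whose expectation matches $\langle x, y\rangle$ uniformly in $x, y$ (including $x = y$). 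Alternatively, one can pre-discretize deterministically to the grid $\{0, 1/N, \ldots, 1\}^n$ and argue via thermometer encoding that the strongly regular RKHS restricted to the grid embeds into the universal regular RKHS $\universal_{nN}$ of \cref{thm:main}, with the norm blowup controlled by a polynomial-approximation bound on $g$. Either route is conceptually direct, and the remaining work is careful book-keeping of the various error terms against the sample complexity of \cref{thm:main}.
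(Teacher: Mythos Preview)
Your overall reduction---randomized Bernoulli embedding into a large hypercube, lift the target via its representer expansion, and invoke the hypercube learner---is exactly the paper's approach; the paper even uses the two-independent-copies device $(\Psi_1,\Psi_2)$ that you arrive at.

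The gap is in the norm-preservation step. You aim for $\|\tilde f^*\|_{H_{\tilde k}}\le B(1+o(1))$ via Gram-matrix concentration, and propose the two-copy symmetrization to repair the diagonal. But the RKHS norm of the lifted function is $\sum_{i,j}\alpha_i\alpha_j\,\tilde k\bigl(\Phi(\s{x}{i}),\Phi(\s{x}{j})\bigr)$ with the \emph{same} embedding on both arguments, so the diagonal $\tilde k(\Phi(\s{x}{i}),\Phi(\s{x}{i}))\approx g\bigl(\sum_k \s{x}{i}_k\bigr)$ is unchanged no matter how many independent copies you introduce for evaluation. The symmetrized cross-inner-product you write down is not of the form $g(\langle u,v\rangle)$ and therefore does not define a \regular kernel on the hypercube, so \cref{thm:main} would not apply to it. The thermometer alternative also fails at this level: thermometer encoding yields $\tfrac1N\langle\Psi(\xx),\Psi(\yy)\rangle\approx\sum_i\min(\xx_i,\yy_i)$, not $\langle \xx,\yy\rangle$.

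The paper sidesteps the issue entirely. It uses the two copies only for the \emph{pointwise} approximation (anchor points are embedded via $\Psi_1$, test points via $\Psi_2$, so that $\tilde h(\Psi_2(\xx))\approx h^*(\xx)$), and controls the lifted norm not by Gram-matrix concentration but by the $\ell_1$ bound on the representer coefficients from \cref{fact:pegasos-sgd-svm-guarantee}: an $\epsilon$-approximate minimizer in the original RKHS can be written as $\sum_i\alpha_i k(\s{x}{i},\cdot)$ with $\|\alpha\|_1\le O(B^2/\epsilon)$, whence by the triangle inequality $\|\tilde h\|_{H_{\tilde k}}\le\|\alpha\|_1\cdot\max_i\sqrt{\tilde k(\Psi_1(\s{x}{i}),\Psi_1(\s{x}{i}))}\le O(B^2/\epsilon)$. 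This weaker but still polynomial norm bound is what is actually fed into the hypercube learner (the paper invokes \cref{thm:mkl} on the class $\C_{nt}(B^2/\epsilon)$, not on $\C_{nt}(B)$), and it makes the whole Gram-matrix discussion unnecessary.
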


\subsubsection{Limitations on the expressive power of \regular kernels}
In this section we derive lower bounds for the expressive power of kernel methods. We consider as a test bed for our result the problem of agnostic conjunction learning. Arguably the simplest special case of the problem of agnostic learning  halfspaces, is captured by the task of agnostically learning conjunctions. The state of the art algorithm for agnostic learning of conjunctions over arbitrary distributions over the hypercube is based on the work of \cite{paturi1992degree} who showed that for every conjunction (equivalently, disjunctions) over the Boolean hypercube in $n$ dimensions, there is a polynomial of degree $\tilde{O}(\sqrt{n} \log{(1/\epsilon)})$ that approximates the conjunction everywhere within an error of at most $\epsilon.$ Combined with the $\ell_1$-regression algorithm of \cite{kalai2008agnostically}, this yields a $2^{\tilde{O}(\sqrt{n} \log{(1/\epsilon)})}$-time algorithm for agnostically learning conjunctions.

One can easily show that this algorithm is easily captured via learning a \emph{\regular} linear separator and thus fits into our framework (see \cref{sec:upper-bound}). However, our next result shows that somewhat disappointingly, kernel methods cannot yield an improvement over state of the art result. This is true even if we allow the learner to choose the kernel in a distribution dependent manner. We refer the reader to \cref{prf:lower} for a full proof.

\begin{restatable}{theorem}{lowb}\label{thm:lower}
There exists a distribution $D$ on $\X_n \subseteq \zo^n$ and a conjunction $c_I \in \conj$ such that for \emph{every} \regular RKHS $H$ and $\ww\in H$: for all $\ww$ such that $\|\ww\|_H =2^{\tilde{o}(\sqrt{n})}$, we have that
\begin{align*}
\EE{|\dotp{\ww}{\phi_{H}(\xx)} - c(\xx)|}> \frac{1}{6}.
\end{align*}
\end{restatable}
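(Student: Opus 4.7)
The plan is to exploit the universal kernel guaranteed by \cref{thm:main} to reduce a uniform lower bound over all \regular kernels to a lower bound for a single fixed kernel, and then apply the classical polynomial approximation lower bound for conjunctions (as in \cite{klivans2007lower}, building on Paturi's approximate degree bound). Concretely, by \cref{thm:main}, $\C_n(B) \subseteq \universal_n(n^{3/2} B)$, so any $\ww$ of norm $B$ in an arbitrary \regular RKHS is represented by some $\ww^\star \in \universal_n$ of norm at most $n^{3/2} B$ computing the same function on $\X_n$. Thus it suffices to show: for a suitable distribution $D$ and conjunction $c_I$, every $\ww^\star \in \universal_n$ with $\|\ww^\star\| \leq 2^{\tilde{o}(\sqrt n)}$ satisfies $\E_{D}\bigl[\,|\dotp{\ww^\star}{\phi(\xx)} - c_I(\xx)|\,\bigr] > 1/6$, since the polynomial blow-up $n^{3/2}$ is absorbed into $2^{\tilde{o}(\sqrt n)}$.

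To prove this single-kernel lower bound, I would exploit the Johnson-scheme spectral structure of $\universal_n$ developed earlier in the paper. On each layer $S_{p,n}$, the RKHS decomposes orthogonally into Johnson eigenspaces indexed by a ``degree'' $0 \leq d \leq p$, where the degree-$d$ eigenspace is spanned by functions that correspond to degree-$d$ polynomials restricted to that layer. The RKHS norm $\|\ww^\star\|_{\universal_n}$ controls a weighted $\ell_2$-norm of the degree-$d$ components. Following \cite{klivans2007lower}, one chooses $D$ to be supported on a single layer (or a carefully chosen pair of layers producing the Paturi hard distribution) and lets $c_I$ be the $\Theta(\sqrt n)$-wise AND. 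Paturi's approximate-degree lower bound together with an LP-duality (``discriminator'') argument then yields the desired $2^{\tilde\Omega(\sqrt n)}$ weight/norm lower bound on any such $\ww^\star$.

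The main obstacle will be the translation step: relating the RKHS norm in $\universal_n$ to a polynomial weight of the form on which the Paturi--Klivans--Sherstov machinery operates. This requires pinning down the Johnson-scheme eigenvalues of $\universal_n$ and ensuring none is exponentially large relative to others, since an exponentially large eigenvalue would allow a small-norm RKHS element to encode a large-weight polynomial and bypass the approximate-degree bound. The structural characterization of the spectrum of \regular kernels via association schemes, established earlier in the paper, is exactly what makes this translation go through, and it is the reason the lower bound holds uniformly across the entire class $\H_{\C_n}$ rather than for any one specific kernel.
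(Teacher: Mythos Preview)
Your reduction via the universal kernel is exactly the paper's final step. Where you diverge is in how you propose to prove the single-kernel lower bound.

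The paper does not analyze the spectrum of $\universal_n$ at all. Instead it proves (\cref{lem:kernelklivans}) a lower bound valid for an \emph{arbitrary} fixed \regular kernel $k$: if some $\ww$ with $\|\ww\| = 2^{o(\sqrt n)}$ approximated a conjunction $c_{I'}$ on a layer $S_{p,n}$, then---because any \regular $k$ is invariant under coordinate permutations---the same kernel with an equally small $\ww_I$ approximates every conjunction with $|I'|$ literals; a Johnson--Lindenstrauss projection into $O(\|\ww\|^2 n)$ dimensions then yields a $2^{o(\sqrt n)}$-dimensional space of functions uniformly approximating all conjunctions, contradicting \cite{klivans2007lower} directly. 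A minmax swap converts the $L_\infty$ statement into an expectation statement (\cref{lem:lb}), and only then is the universal kernel invoked to make $D$ and $c$ independent of $k$.

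Your alternative---reading off the Johnson eigenvalues of $\universal_n$ and translating small RKHS norm into small polynomial weight---is not obviously workable as stated. The obstacle you flag is real: the eigenvalues $\lambda_j(P_{p,\ell})$ in \cref{fact:eigendecomp-johnson} are binomial coefficients that differ by factors exponential in $p$, and you have not verified that the specific convex combination defining $\universal_n^p$ equalizes them. More fundamentally, Paturi's bound is on approximate \emph{degree}, whereas a small-RKHS-norm function need not be low degree---it only has small weighted projection onto each eigenspace---so the discriminator/dual-polynomial argument you gesture at does not apply without substantial further work. The paper's JL route sidesteps both issues by converting a norm bound directly into a dimension bound, which is precisely the quantity \cite{klivans2007lower} lower-bounds.
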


% One corollary of our work is that if a concept class of binary classifiers can be well approximated via \regular kernels, we can efficiently learn it using, say, $L_1$ regression algorithm as suggested in \cite{kalai2008agnostically}. We further investigate this strategy by considering the problem of learning conjunctions. 
\ignore{
\Pnote{Not sure why we are stating this as a main result - why's this important/interesting?}
\Rnote{You mean corollary 5.1? yes we can move it to the conjunction setting, I guess: it's trivial.}
A different and well-studied approach for agnostic learning is through approximating every concept in the target class by a linear combination of a small number of functions which can then be learned using $\ell_1$-regression \cite{kalai2008agnostically}. As an immediate corollary of \thmref{thm:main} we obtain that if there's a \regular RKHS that allows efficient agnostic learning of a concept class, then one can also use $\ell_1$-regression to learn it.  %is that if for some \regular RKHS we can approximate well a binary concept class, then we can learn it via the $\ell_1$ regression approach:
\begin{corollary}
Consider the $0-1$ loss function $\ellzo$.
Let $\H$ be an hypothesis class and assume that there exists \emph{some} $H\in \johnson$ and $\|\ww\|_{H} \le B$, such that
\begin{align*}
\EE{|\dotp{\ww}{\phi_{H}(\s{x}{i})} - h^*|}\le \frac{\epsilon}{2}.
\end{align*}
where $h^*=\arg\min_{h\in \H} \loss_{0,1}(h)$.
Then we can learn $\H$ efficiently within accuracy $\epsilon$. Specifically we can return a target function $f^*\in \C(\frac{C}{\epsilon})$, in time $\mathrm{poly}(B,1/\epsilon,\log 1/\delta)$ such that
\begin{align*}
\loss_{0,1}(f^*) \le \min_{h\in H}\loss_{0,1}+\epsilon
\end{align*}
\end{corollary}

\subsubsection*{Learning Conjunctions via Kernels}\roi{Rewrite this section seems trivial with the restatement of last result in terms of universal kernel}
We next consider applications of \regular kernels in learning the concept class of Boolean conjunctions: $\conj=\{c_{I}: c_I(\xx)=\wedge_{i\in I} \xx_i \}$ over the hypercube $\X_n$.\ignore{ Our next result shows that a sparseness assumption translates into a large margin assumption on some \regular kernel space:
\begin{theorem}\label{thm:upper}
Consider the $0-1$ loss function $\ellzo$.
Let $H$ be the hypothesis class of conjunctions and assume $D$ is a distribution supported on $s$-sparse vectors i.e. $D(\sum \xx_i \le s) =1$. Then for every $c_{I}\in \conj$ there exist $f_{H,\ww}\in \C(O(2^{s}))$ such that a.s.
\begin{align*}
\dotp{\ww}{\phi_{H}(\xx)} = c_I(\xx).
\end{align*}
As a corollary, we can learn the class in time $\mathrm{poly}(2^s,\frac{1}{\epsilon}, n)$.
\end{theorem}}
The best known result for learning conjunctions requires time $2^{\tilde{O}(\sqrt{n} \log{(1/\epsilon)})}$ (see. \cite{paturi1992degree,kalai2008agnostically}). These can be easily translated into a \regular kernel method framework (see \thmref{thm:cupper}). Our next result shows that in general, \regular kernel methods cannot yield a $2^{o(\sqrt{n})}$ time algorithm for the problem. 
\begin{theorem}\label{thm:lower}
There exists a distribution $D$ on $\X_n \subseteq \zo^n$ and a conjunction $c_I \in \conj$ such that for \emph{every} \regular RKHS $H$ and $\ww\in H$: for all $\ww$ such that $\|\ww\|_H =2^{o(\sqrt{n})}$, we have that
\begin{align*}
\EE{|\dotp{\ww}{\phi_{H}(\xx)} - c(\xx)|}> \frac{1}{6}.
\end{align*}
\end{theorem}
Observe that the above theorem shows that there's a fixed ``bad'' distribution and a ``bad'' conjunction for \emph{all} \regular RKHS embeddings.
It is instructive to compare our result with that of \cite{klivans2007lower} who showed that for every collection of $2^{o(\sqrt{n})}$ basis functions $\eta_1, \eta_2, \ldots, \eta_M$, there's a distribution $D$ on $\X_n \subseteq \zo^n$ and a conjunction $c$ such that $\inf_{\alpha_1, \alpha_2, \ldots, \alpha_M} \E_{x\sim D}[ |c(x) - \sum_{i \leq M} \alpha_i \eta_i(x)|] > \frac{1}{3}.$ Such a result rules out any set of fixed basis functions that can linearly approximate \emph{all} conjunctions and can be translated easily into showing for any fixed RKHS, there's a conjunction that will require a $2^{\Omega(\sqrt{n})}$-norm linear classifier. However, as we noted before, our algorithm from the previous section allows us to choose the best \regular kernel Hilbert space depending on the given conjunction and \emph{then} approximate it via a linear functional in this space and thus, a priori, this setting is not immediately captured by the above result. Nevertheless, we show that the lower bound carries over to our setting. The proof itself is not hard and uses a simple application of dimension reduction combined with a min-max result that holds because of the convexity of the problem of learning the optimal kernel.

To summarize, while \regular kernel methods capture the state-of-the-art algorithm for the problem, they do not help beat the $2^{\Omega(\sqrt{n})}$ barrier for agnostically learning conjunctions on arbitrary distributions.

While the above results are somewhat disappointing for an algorithm designer, we show that if the marginal distribution over examples is supported on sparse vectors, then \regular kernels do indeed yield an improvement over best previously known methods. Our conclusion is that under distributional assumptions, our algorithm for searching the optimal kernel may outperform existing methods.

}

\section{Technical Overview}
We next give a brief overview at a high level of our techniques:
\paragraph{Reduction to the hypercube layer}
We first observe that in order to show that the class is efficiently learnable over the hypercube, it is enough to restrict attention to the setting where the input distribution $\cD$ is supported on $S_{p,n}$ where $S_{p,n} = \{ \xx \in \zo^n \mid \sum \xx_i = p\}$ - the $p^{th}$ layer of the hypercube. 

Our reduction to the hypercube layer involves two steps. First we observe that the class $\C_n(B)$ is contained in $\C_{\oplus_n}(\sqrt{n}B)$. Namely we can replace every RKHS with an RKHS that can be presented as the Cartesian product over the different layers and lose at most factor $\sqrt{n}$ in term of margin. Thus, instead of learning \regular kernels, we restrict our attention to $\H_{\C_{\oplus_n}}$ which is expressive enough. This relaxation is exploited in both \cref{thm:main,thm:mkl}, hence both sample complexity result carry at least a linear factor dependence on the dimensionality in terms of sample complexity.

Working in $\H_{\C_{\oplus_n}}$ simplifies our objective. Since each RKHS in $\H_{\C_{\oplus_n}}$ is a direct sum of $n$ RKHS-s on each hypercube layer, we can focus on learning each component separately, and we derive efficient algorithms for learning $\C_{p,n}(B)$ for every $p=1,\ldots, n$. Thus, in \cref{thm:main} we construct a universal kernel over each hypercube layer. Meaning, we construct a Hilbert space $\universal_n^p$ such that $\C_{p,n}(B)$ is contained in $\universal^p_n((n+1) B)$. Finally, we sum up the universal kernels to construct a universal kernel over the Cartesian product of the layers. 

The approach suggested offers a simple method to learn $\C_n(B)$. The contruction of a universal kernel, though, causes a deterioration of an additional $O(n^2)$ factor in sample complexity. Our second approach in \cref{thm:mkl} suggests an efficient algorithm for learning the optimal RKHS in each hypercube layer $\C_{p,n}(B)$ directly and avoid a second relaxation.

Both the results, the existence of a universal kernel and the feasibility of learning the optimal RKHS rely on the special structure of kernels in $\H_{\C_{p,n}}$  which we next describe:
\ignore{
\cref{thm:solid} which is concrened with the domain of the solid cube is also achieved through a similar reduction. We show the domain can be discretized to reduce to a problem where the example points come from a hypercube of slightly larger dimension (see Section \ref{sec:kernel-learning-ball}).}
\paragraph{Characterizing \regular kernel through Johnson Scheme}
In this part we discuss what is arguablly the technical heart of our paper. Namely the application of classical results about the spectra of Johnson scheme matrices for the analysis of \regular kernels.

Consider any kernel over any layer $S_{p,n} = \{\xx \in \zo^n \mid \sum \xx_i = p\}$ of the $n$-hypercube - these are characterized by psd matrices indexed by elements of $S_{p,n}$ on the rows and columns. Searching over the class of all kernels thus involves searching over the space of all positive semidefinite matrices in ${n \choose p} \times {n \choose p}$ dimensions and is prohibitive in cost for $p = \omega(1).$

The main observation behind our algorithm is that while the assumption of \regular kernel allows us to capture almost all the kernels used in practice, it also allows for an efficient characterization of psd matrices defining them. In particular, recall that a \regular kernel matrix over $S_{p,n}$ is a matrix with any $(x,y)$-entry being a function solely of the inner product $\langle x, y \rangle.$ \ignore{In particular, such matrices satisfy a large amount of symmetry - such a matrix is invariant under the natural renaming action on the row and column vectors of any permutation from $S_n$. In other words, $K(x,y) = K(\sigma(x), \sigma(y))$ for every $\sigma \in S_n.$} Such matrices are called  \emph{set-symmetric matrices} and form a \emph{commutative algebra} called the \emph{Johnson association scheme}: the space of such matrices is closed under addition and matrix multiplication and any two matrices in the space commute w.r.t matrix multiplication. We provide more background on the Johnson scheme in Section \ref{sec:johnson}.

Standard linear algebra shows that a commutative algebra of matrices must share common eigenspaces. More interestingly, for our setting, the eigenspaces of set symmetric matrices have been completely figured out in the study of Johnson scheme. In particular, despite the matrices themselves being of dimension ${n \choose p} \times { n \choose p}$, they can have at most $p+1$ distinct eigenvalues! Further, there's a positive semidefinite basis of $p+1$ matrices $\{P_{p,\ell} \mid \ell \leq p\}$ for the linear space of Johnson scheme matrices with tractable expressions for eigenvalues in the $p+1$ different eigenspaces.

\paragraph{Construction of a universal Hilbert Space}

Equipped with an explicit basis of set symmetric matrices, and having a diagnolized representation for the matrices, we can explicitly construct $p+1$ kernel matrices $K_1,\ldots, K_{p+1}$ whose convex hull spans all set symmetric, positive definite and bounded by $1$ matrices. These are the matrices that correspond to a \regular kernel.
Thus we obtain an explicit characterization of the polytope of \regular kernels in terms of $p+1$ vertices where each kernel is a convex combination of the vertices.

Using the above construction we finally consider the direct sum Hilbert space $\universal^p_n= H_1\oplus H_2,\ldots,\oplus H_{p+1}$, where the $H$'s correspond to the kernel vertices. A direct corollary of the above characterization is that any target function in $\C_{n,p}(B)$ may be written in the form of $f_{H',\ww}(x) = \sum \lambda_i \ww_i\cdot \phi_i(x)$ where $\ww_i\in H_i$. Standard linear algebra then show we can bound the norm of the above target function in terms of $\|\cdot\|_{\universal}$ by losing a factor of at most $\sqrt{n}$. Thus $\C_{n,p}(B)$ is a subset of all $\sqrt{n}B$ bounded norm vectors in $\universal^p_n$.

\paragraph{Improving sample complexity through MKL}
The above results allow us to efficiently learn the class $\C_n(B)$ however it may lead to suboptimal result in sample complexity. As we next discuss this can be improved by optimizing over the choice of kernel as is done in MKL.

First, as discussed before, \emph{any} matrix of the Johnson scheme can be specified by describing the $p+1$ coefficients over the basis - and one can write down explicit expressions in these coefficients for the eigenvalues. Thus, checking PSDness reduces to just verifying $p+1$ different linear inequalities.

The above observation allows us to take the standard $\ell_2$-regularized kernel SVM convex formulation and add an additional minimization over the space of coefficients that describe a \regular kernel. We show that the resulting modified program is convex in all its variables. Similar observations on the convexity of such programs have been made in previous works starting with the work of \cite{lanckriet2004learning}. Together with the tractable representation of the constraint system we obtained above, we get an efficiently solvable convex program\full{ (see \thmref{lem:primal})}.

To achieve generalization bound, we appeal to the surprisingly strong bounds on Rademacher complexity of non-negative linear combinations of $q$ \emph{base kernels} due to \cite{cortes2010generalization}. Our generalization bounds follow a certain strenghening of the aforementioned result to $\C_{\oplus_n}(B)$. In turn, using the fact that the polytope of \regular kernels has exactly $p+1$ vertices, we can derive strong sample complexity upper bound that grows only logarithmically in the dimension $n$.

\paragraph{Limitations for Learning Conjunctions}
Our final application for learning kernels helps in proving bounds on the expressive power of the family of large margin linear classifiers in \regular RKHS.
Our crucial observation relies on a result by \cite{klivans2007lower} who showed that for every collection of $2^{o(\sqrt{n})}$ basis functions $\eta_1, \eta_2, \ldots, \eta_M$, there's a distribution $D$ on $\X_n \subseteq \zo^n$ and a conjunction $c$ such that $\inf_{\alpha_1, \alpha_2, \ldots, \alpha_M} \E_{x\sim D}[ |c(x) - \sum_{i \leq M} \alpha_i \eta_i(x)|] > \frac{1}{3}.$ Such a result rules out any set of fixed basis functions that can linearly approximate \emph{all} conjunctions.

Our first step in the proof translates the aforementioned result to showing that for any fixed RKHS, there's a conjunction that will require a $2^{\Omega(\sqrt{n})}$-norm linear classifier. We do that by showing that any fixed kernel with a separator with large margin will yield, via Johnson--Lindenstrauss , a small class of basis functions that can approximate any conjunction. However, this technique alone does not capture the possibility of learning the kernel Hilbert space  and \emph{then} approximate it via a linear functional in this space. In other words, while the aforementioned result restrict the expressive power of each specific kernel, it does not put limitations over $\C_n(B)$.

However, the existence of a universal Hilbert space demonstrates that the power of \regular kernels cannot exceed any limitation over a fixed kernel. Thus, building upon \cite{klivans2007lower} we obtain a uniform lower bound for the expressive power of $\C_n(B)$ and in particular $\C_n(\bbB_n,B)$:
\ignore{
\paragraph{Learning Conjunctions}
Our next result is about a lower bounds on \regular kernel  methods for learning conjunctions.

Klivans and Sherstov \cite{klivans2007lower} showed via lower bounds on approximate rank of certain matrices that given a collection of function $\zeta_1, \zeta_2, \ldots, \zeta_q$ for any $q = 2^{o(\sqrt{n})}$, there's a distribution over the hypercube and a conjunction $c$ on $n$ bits such that $c$ cannot be approximated by a linear combination of $\zeta_1, \zeta_2, \ldots, \zeta_q$. We discussed in the introduction why these lower bounds do not immediately apply to our setting of learning an optimal \regular kernel for a given set of samples and then learning the optimal linear classifier in the associted RKHS.  %Second as we demonstrate the kernel itself can be chosen dependent on the target conjunction to be approximated and are not fixed. 
%Indeed, it might not be inconceivable that multiple kernel learning has the power needed to improve rates.

% Nevertheless, we show that if for every conjunction there's some \regular kernel and a linear function in the associated RKHS that approximates the conjunction then the norm of the coefficient of the linear function has to be $2^{\Omega(\sqrt{n})}$ (which in turn dominates the complexity of the algorithm).

Our proof is simple and builds on the work of \cite{klivans2007lower}: First we show that there is no fixed \regular kernel that can approximate every conjunction on every layer uniformly. This follows from a standard dimension reduction argument together with \cite{klivans2007lower}: we project any large margin approximator into a low dimension subspace and use the lower bound on the dimension of the subspace required to approximation conjunctions from \cite{klivans2007lower} yielding a lower bound on the margin to begin with. To reverse the order quantifiers and to show that for a fixed distribution and a fixed conjunction, no \regular kernel works, we apply a min-max principle. To use minmax principle, we need to argue that the the kernel-learning program is convex - this is indeed true and simple to verify.}

%!TEX Root=../kernel18.tex

\section{Background}

\subsection{Johnson Scheme}\label{sec:johnson}
In this section, we describe the \emph{Johnson Scheme} (or set-symmetric) matrices that are an instance of \emph{association schemes}, a fundamental notion in algebraic combinatorics and coding theory. We will need the classical result about the eigendecompositions of such matrices in this work. We refer the reader to the textbook and lecture notes by Godsil for further background \cite{MR1220704-Godsil93}.

\begin{definition}[Johnson Scheme]
Fix positive integers $t,n$ for $t < n/2$. The Johnson scheme with parameters $t,n$, denoted by $\cJ_{n,t}$ is a collection of matrices with rows and columns indexed by subsets of $[n]$ of size exactly $t$ such that for any $M \in \cJ_{n,t}$ and any $S, T  \subseteq [n]$ of size $t$, $M(S,T)$ depends only on $|S \cap T|$. 
\end{definition}
That is, any entry of a matrix $M\in \cJ_{n,t}$ depends only on the size of the intersection of the subsets indexing the corresponding row and column. Equivalently, we can think of the matrices in the Johnson scheme as indexed by elements of $\zo^n$ of Hamming weight exactly $t$ with $(x,y)$th entry a function of the inner product $\langle x, y \rangle$. The symmetric group on $n$ elements $\bbS_n$ acts on subsets of size $t$ of $[n]$ by the natural renaming action and further, $|S \cap T| = |\sigma(S) \cap \sigma(T)|$ for any  permutation $\sigma \in \bbS_n$. Thus, $M$ is invariant under the action of $\bbS_n$ that renames its rows and columns as above. 

It is not hard to verify that $\cJ_{n,t}$ forms a commutative algebra of matrices. A basic fact in linear algebra then says that the matrices in $\cJ_{n,t}$ must share common eigen-decomposition. The natural action of $\bbS_n$ associated above makes the task of pinning down a useful description of this eigenspaces tractable - these form classical results in algebraic combinatorics. This description of eigenspaces of the Johnson scheme will come in handy for us and in the following, we will describe the known results in a form applicable to us. 

It is convenient to develop two different bases for writing the matrices in $\cJ_{n,t}$.  
\begin{definition}[$D$ Basis]
For $0 \leq \ell \leq t <n$, we define the matrix $D_{n,t, \ell} \in \R^{{[n] \choose t }\times {[n] \choose t}}$ by: $D_{n,t,\ell}(S,T) = 1$ if $|S \cap T| = \ell$ and $0$ otherwise.
\end{definition}
It is easy to see that every matrix in $\cJ_{n,t}$ can be written as a linear combination of the $D_{n,t,\ell}$ matrices for $0 \leq \ell \leq t.$ Further, it's easy to check that any pair of $D$ matrices commute with each other and thus so does every pair of matrices from the Johnson scheme. 

While the $D$ basis is convenient to express any matrix in the Johnson scheme, it's not particularly convenient to uncover the spectrum of the matrices. For this, we adopt a different basis, called as the $P$ basis. 

\begin{definition}[$P$ Basis]\label{def:pbasis}
For $0 \leq t \leq t$, let $P_{t,p} \in \R^{{[n] \choose t }\times {[n] \choose t}}$ be the matrix defined by: $P_{t,p}(S,T) = {|S \cap T| \choose \ell}$ where we think of ${r \choose \ell}$ for $r < \ell$ as $0$. It is easy to check that $P_{t,p}$ is positive semidefinite for all $t$ and linearly spans $\cJ_{n,t}.$
\end{definition}

The following translation between the $P$ and the $D$ basis is easy to verify.
\begin{fact}[Basis Change]
Fix $r\leq t < n$. Then, 
\begin{enumerate}
\item $P_{p,r} = \sum_{\ell = r}^t {\ell \choose r} D_{\ell}.$ 
\item For $0 \leq \ell \leq t$, $D_{\ell} = \sum_{r \geq \ell} (-1)^{r-\ell}{r \choose \ell}P_{p,r}.$ 
\end{enumerate}
\end{fact}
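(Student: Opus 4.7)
My approach is entirely elementary: every matrix appearing in the fact lies in the Johnson scheme $\cJ_{n,t}$, so its $(S,T)$-entry depends only on $j = |S\cap T|$. It therefore suffices to verify each of the two claimed identities at a single entry, and moreover only as a scalar identity in the parameter $j \in \{0,1,\ldots,t\}$. This reduces the whole fact to two combinatorial identities in $\binom{j}{\cdot}$ and $\binom{\cdot}{\ell}$ that do not actually involve the matrix structure at all.

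For part (1), fix $(S,T)$ with $|S\cap T|=j$. By definition, $P_{p,r}(S,T) = \binom{j}{r}$, while on the right-hand side exactly one of the indicator matrices $D_\ell$ contributes, namely $D_j$, and its contribution is $\binom{j}{r}$. Using the convention $\binom{j}{r}=0$ for $r>j$, the range $\ell=r,\ldots,t$ in the stated sum is automatically truncated at $\ell=j$, so both sides agree.

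For part (2), the corresponding scalar identity to verify is
\[
\sum_{r=\ell}^{j} (-1)^{r-\ell}\binom{r}{\ell}\binom{j}{r} \;=\; \mathbb{1}[j=\ell], \qquad 0\le \ell \le j\le t.
\]
The one nontrivial step is the standard absorption identity $\binom{r}{\ell}\binom{j}{r}=\binom{j}{\ell}\binom{j-\ell}{r-\ell}$. Applying it and reindexing $s=r-\ell$ reduces the sum to $\binom{j}{\ell}\sum_{s=0}^{j-\ell}(-1)^s\binom{j-\ell}{s}$, and the inner alternating binomial sum is $(1-1)^{j-\ell}$, which equals $1$ if $j=\ell$ and $0$ otherwise. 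This gives precisely the indicator $\mathbb{1}[j=\ell]$, as required.

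There is no real obstacle here; the only thing worth flagging is bookkeeping on index ranges. In part (2) one may harmlessly extend the sum over $r$ up to $t$ (or even $n$), since $\binom{j}{r}=0$ for $r>j$, so the statement ``$\sum_{r\ge \ell}$'' is unambiguous. Conceptually, the two identities together are exactly Möbius inversion between the two bases $\{\,j\mapsto \binom{j}{r}\,\}_{r=0}^{t}$ and $\{\,j\mapsto \mathbb{1}[j=\ell]\,\}_{\ell=0}^{t}$ of real-valued functions on $\{0,1,\ldots,t\}$, lifted entrywise to the Johnson scheme; this is also why the same scalar coefficients $\binom{\ell}{r}$ and $(-1)^{r-\ell}\binom{r}{\ell}$ appear regardless of $n$.
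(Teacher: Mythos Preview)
Your proof is correct and is exactly the elementary entrywise verification the paper has in mind; the paper itself gives no proof beyond declaring the identity ``easy to verify,'' and your argument is the natural way to carry this out.
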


The $P$ basis helps us write down a simple expression to compute the eigenvalue of any matrix in the Johnson scheme, given that we know how to write it as a linear combination of the $P_{p,t}$ matrices. The following result is what makes this possible.  

\begin{fact}[Eigendecomposition of the Johnson Scheme, Eigenvalues of $P_{p,t}$] \label{fact:eigendecomp-johnson}
Fix $n, t < n/2$. There are subspaces $V_0, V_1, \ldots, V_{t}$ such that $\R^{[n] \choose t} = \oplus_{i\leq t}V_i$ satisfying:
\begin{enumerate}
\item $V_0, V_1, \ldots, V_t$ are the eigenspaces of every matrix in the Johnson scheme $\cJ_{n,t}$. 
\item For $0 \leq j \leq t$, $V_j$ is of dimension ${n \choose j} - {n \choose {j-1}}$ (where we define ${n \choose -1}  = 0$.) 
\item Let $\lambda_j(Q)$ for $0 \leq j \leq t$ denote the eigenvalue of $Q \in \cJ_{n,t}$ on the eigenspace $V_j$. Then, 
\[
\lambda_j(P_{p,\ell}) = \begin{cases} {{n-\ell-j} \choose {t-\ell}} \cdot {{t-j} \choose {\ell-j}} & \text{ if } j \leq \ell\\
0 & \text{ otherwise.}
\end{cases}
\]
\end{enumerate}
\end{fact}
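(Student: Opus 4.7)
The plan is to prove \cref{fact:eigendecomp-johnson} via the factorization $P_{p,\ell} = W_\ell W_\ell^\top$, where $W_\ell \in \R^{{[n] \choose t} \times {[n] \choose \ell}}$ is the subset-inclusion matrix $W_\ell(S, U) = \mathbf{1}[U \subseteq S]$. This identity is immediate from \cref{def:pbasis}, since $\binom{|S \cap T|}{\ell}$ counts $\ell$-subsets of $S \cap T$.

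For the construction of the eigenspaces, I would introduce, for each $0 \leq j \leq t$, the shadow map $\partial_j : \R^{{[n] \choose j}} \to \R^{{[n] \choose j-1}}$ defined by $(\partial_j f)(U) := \sum_{U' \supset U,\,|U'|=j} f(U')$, and the harmonic space $H_j := \ker \partial_j$ (with $H_0 := \R$). The classical Sperner/LYM argument shows $\partial_j$ is surjective for $j \leq n/2$, so $\dim H_j = \binom{n}{j} - \binom{n}{j-1}$. I then set $V_j := W_j(H_j)$. Since $W_j$ has full column rank $\binom{n}{j}$ for $j \leq t \leq n/2$, it is injective on $H_j$, giving $\dim V_j = \binom{n}{j} - \binom{n}{j-1}$, and the dimensions telescope to $\binom{n}{t}$.

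To see that each $V_j$ is a joint eigenspace of every $M \in \cJ_{n,t}$ and that the $V_j$ are independent, I would invoke representation theory of the symmetric group. The natural $\bbS_n$-action on $\R^{{[n] \choose t}}$ commutes with every $M \in \cJ_{n,t}$, since the entries of such $M$ depend only on $|S \cap T|$, which is $\bbS_n$-invariant. By the classical Young's rule, this permutation representation decomposes as $\bigoplus_{j=0}^{t} S^{(n-j,j)}$ with each irreducible Specht module appearing exactly once, and a direct check identifies $V_j$ with the copy of $S^{(n-j, j)}$. Non-isomorphic irreducibles are automatically linearly independent, establishing the direct-sum decomposition; Schur's lemma then forces every $M \in \cJ_{n,t}$ to act as a scalar on each $V_j$.

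Finally, for the eigenvalue formula, I would evaluate $P_{p,\ell}$ on $v = W_j h$ for nonzero $h \in H_j$ and read off the scalar. The key step is to expand the operator $W_\ell^\top W_j : \R^{[n] \choose j} \to \R^{[n] \choose \ell}$ as $W_\ell^\top W_j = \sum_{r=0}^{\min(j,\ell)} c_r\, W_{r, \ell}\, W_{r, j}^\top$, where $W_{a,b}: \R^{[n] \choose a} \to \R^{[n] \choose b}$ for $a \leq b$ is the inclusion matrix. A direct Vandermonde expansion applied to the entry formula $(W_\ell^\top W_j)(U, U') = \binom{n - |U \cup U'|}{t - |U \cup U'|} = \binom{n - j - \ell + |U \cap U'|}{n - t}$ gives $c_r = \binom{n - j - \ell}{t - j - \ell + r}$. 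Iterated shadows of harmonic functions vanish, i.e.\ $W_{r, j}^\top h = 0$ for all $r < j$, so only the $r = j$ term survives; combined with the double-count identity $W_\ell W_{j, \ell} = \binom{t-j}{\ell-j}\, W_j$, this yields $P_{p, \ell} v = \binom{n - \ell - j}{t - \ell} \binom{t - j}{\ell - j}\, v$. The case $j > \ell$ is immediate since the expansion then has no $r = j$ term. The main obstacle is the combinatorial bookkeeping of the Vandermonde expansion; I would verify it from the single identity $\binom{N + k}{M} = \sum_r \binom{N}{M - r}\binom{k}{r}$ applied with $N = n - j - \ell$, $M = n - t$, $k = |U \cap U'|$.
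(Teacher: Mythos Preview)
The paper does not prove this statement: it is presented as a \emph{Fact}, a classical result on the Johnson scheme, with a pointer to Godsil's textbook and no argument given. So there is no ``paper's own proof'' to compare against.

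Your proposal is a correct, self-contained proof along the standard lines in algebraic combinatorics. The factorization $P_{p,\ell} = W_\ell W_\ell^\top$ is exactly right, the construction of $V_j = W_j(H_j)$ from harmonic functions is the usual one, and your appeal to Young's rule together with Schur's lemma cleanly gives both the direct-sum decomposition and the fact that every Johnson-scheme matrix is scalar on each $V_j$. The eigenvalue computation via the Vandermonde expansion of $W_\ell^\top W_j$ is also correct: the identity $W_{r,j-1}^\top W_{j-1,j}^\top = (j-r)\,W_{r,j}^\top$ shows that $W_{r,j}^\top h = 0$ for all $r<j$ whenever $h \in H_j$, so only the $r=j$ term survives, and the double-count $W_\ell W_{j,\ell} = \binom{t-j}{\ell-j}\,W_j$ finishes it. One small quibble: the surjectivity of the shadow map $\partial_j$ for $j \leq n/2$ is usually phrased as ``$W_{j-1,j}$ has full row rank'' rather than as a Sperner/LYM consequence, but this is a naming issue, not a mathematical one.
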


\ignore{
\paragraph{Basic Notation} $\X$ will denote the domain of the concept class and $\Y$ the range throughout the paper - in all settings we encounter, $\X \subseteq \R^n$ and one can without loss of generality think of $\Y \subseteq [-1,1] \subseteq \R.$ In this paper, we will deal with the standard definition of loss functions encountered in machine learning. In this setting, a \emph{loss function} $\ell: \R \times \Y \rightarrow \R^{+}$ is a function that is 1) convex in the first argument: $\ell(x,y)$ is convex in $x$ for every $y$ 2) $1$-Lipschitz in its first argument i.e., $|\ell(x_1,y)-\ell(x_2,y)| \leq \|x_1-x_2\|_2$ for every $x_1,x_2$ 3) bounded at zero: namely, $\ell(0,y)\leq 1$ for every $y \in \Y.$ Whenever we say loss function in this paper, we mean a function that satisfies all the three properties listed here with respect to the $\Y$ clear from the context.

Convexity is a standard assumption for tractability of ERM in Machine learning. The second assumption is required for generalization bounded using Rademacher complexity and the third one is a normalization that holds without loss of generality. In the second assumption, we could more generally deal with $B$-Lipschitz loss functions but we do not in order to avoid unnecessary notational clutter. \Rnote{What??}}

\ignore{
\paragraph{Agnostic Learning (\cite{DBLP:journals/iandc/Haussler92,DBLP:journals/ml/KearnsSS94})} We consider the standard setting of agnostic learning a class $\cF$ of functions over $\X \rightarrow \Y$  with respect to an arbitrary $1$-Lipschitz and loss function $\ell$ satisfying: $\X \subseteq \R^n$, $\Y \subseteq \R$, $\ell(0,y) \leq 1$ for every $y \in \Y$. An agnostic learner for $\cF$ with respect to a distribution $\cD$ over $\X \times \R$ takes input i.i.d. samples from $\cD$ and outputs a hypothesis $h$ (not necessarily from $\cF$) such that $\E_{(x,y) \sim \cD}[ \ell(h(x),y)] \leq opt(\cF) + \epsilon$ with probability at least $2/3$ (this can be boosted to any $1-\delta$ with a multiplicative loss of $\poly(\log{(1/\delta}))$ in running time and sample complexity, which we omit in order to simplify notation). Here $\epsilon$ is the accuracy parameter and $opt(\cF) = \inf_{f \in \cF} \E_{(x,y) \sim \cD}[\ell(f(x), y)].$ We measure the performance of the algorithm by bounding its sample complexity - the number of samples required and the running time, as a function of the dimension $n$ and the parameter $\epsilon$. We say that the algorithm is efficient if the running time and sample complexity is polynomial in $n$ and $\frac{1}{\epsilon}$.
}
% A hypothesis class$\H_n$ consists of target functions over the domain $\X$. Given a loss function $\ell$ and a target function $f$ we will denote its expected loss as
% \begin{align*}
% \loss(f) = \EE{\ell(f(x),y)}
% \end{align*}
% \textbf{Learnability}:
% We will say that an alogrithm $A$ learns the class $\H$ (and that $\H$ is learnable) if $A$ has the following property: there is a function $m:(0,1)^2 \to \mathbb{N}$ such that : for every $\epsilon$ and $\delta$, for every distribution $\P$ on $\X\times \Y$, if we run algorithm $A$ with $m> m_{\H}(\epsilon,\delta)$ IID examples generated by $\P$ then with probability at least $(1-\delta)$, the output $f$ of $A$ satisfies:
% \begin{align*}
% \loss(f)< \min_{h\in H}\loss(h) + \epsilon 
% \end{align*}
% If $A$ is efficient (i.e. polynomial in $1/\epsilon, 1/\delta$ and $n$: we will say that $\H$ is efficiently learnable

% Our focus of interest are on either the case $\ell$ is some convex loss function or $\ell$ is the $0-1$ loss function which we will denote by $\ellzo$, we will correspondingly denote the expected loss function as $\loss_{0,1}$.

\ignore{
\subsection{Kernel Methods}
Here we overview the basic ideas and main observations regarding kernels that we will use. we refer the reader to \cite{scholkopf2001learning} or any other standard text book.

Consider a Hilbert space $H$ and a norm bounded embedding $\phi:\X\to H$. We let $k$ denote the kernel function $k:\X\times \X \to \mathbb{R}$ defined by $k(x_i,x_j) = \dotp{\phi(x_i)}{\phi(x_j)}$. A Hilbert space $H$ together with an embedding $\phi$ is be referred to is an RKHS. Our key assumption, when applying the kernel method, is that $k$ is an efficiently computable function.

Given an RKHS $H$ and kernel function $k$, We may consider the solution to the following (\regularized) empirical risk minimization objective:
\begin{align}\label{eq:kerm}
\mathop\mathrm{minimize}_{\ww \in H}~\quad \frac{\lambda}{2} \|\ww\|^2 +\EE{\ell(\dotp{\ww}{\phi(\xx)},y}
\end{align}
One way to tackle the problem is to apply a kernelized SGD algorithm \cite{shalev2007pegasos}, which is guaranteed to find an $\epsilon$ approximate solution using $O(\frac{\rho}{\lambda \epsilon})$ iterations. By choosing $\lambda=\frac{B}{\epsilon}$, and noting that any solution $\|\ww\|> B$ is suboptimal We can achieve the following corollary:
 %Using intensive research, we have established that babycorns are nicer to eat and digest. Moreover, they rhyme with unicorn. But the paper is not about babycorns, it's about trying to argue that corn kernels SUCK! 
 \begin{theorem}[\cite{shalev2007pegasos} Cor. 2]
Applying the kernelized version of Pegasos algorithm to solve \eqref{eq:kerm}, we can find a classifier of the form $\ww= \sum_{i=1}^m \alpha_i \phi(\s{x}{i})$ such that if $m=\tilde{O}( \frac{B}{\epsilon^2}\log 1/\delta)$ then with probability at least $1-\delta$ we have that
\begin{align*}
\loss(\ww ) \le \min_{\|\ww^*\|<B} \loss(\ww^*) + \epsilon.
\end{align*}
As a corollary, if $k(x_i,x_j)$ is efficiently computable, we can efficiently learn the class $\{\xx\to \dotp{\ww}{\phi(\xx)} : \|\ww\|<B\}$.
\end{theorem}}

\subsection{Kernel Method: Learning Linear Classifiers in RKHS}\label{sec:prerkhs} We now recall the standard framework for agnostically learning linear classifiers in a RKHS (see \cite{scholkopf2001learning} for a detailed overview). 
\begin{definition}[RKHS for $\X$ and Kernels]
Let $H$ be a Hilbert space with an inner product $\langle \cdot, \cdot \rangle_{H}$ and the corresponding norm $\|\cdot \|_H$) along with an embedding $\phi:\X \rightarrow H$. $H$, together with the embedding $\phi$ is said to be an RKHS for $\X$.
\end{definition}
For any RKHS $(H, \phi)$, there's a unique \emph{kernel function} $k: \X \times \X \rightarrow \R$ that is defined by the inner products of any two elements of $\X$ embedded in $H$: i.e., $k(\s{x}{1},\s{x}{2}) = \langle \phi(\s{x}{1}), \phi(\s{x}{2}) \rangle_H$.  When $\X$ is finite, $k$ is completely described by the $|\X| \times |\X|$ \emph{kernel matrix} whose rows and columns are  indexed by elements of $\X$ and any $(\s{x}{1}, \s{x}{2})$ entry being given by $k(\s{x}{1} , \s{x}{2})$. A classical result in kernel theory, namely Mercer condition, states that a function $k$ is the kernel function on an RKHS if and only if the corresponding kernel matrix $K$ is psd. In applications, we'd also want the function $k$ to be efficiently computable (w.r.t the natural parameters of the problem).

Consider the class of all functions of the form $f_{\ww}: \X \rightarrow \R$ defined by $f_{\ww}(x) =\dotp{\ww}{\phi(x)}$ (where we suppress the subscript $H$ when there is no room for confusion). These are linear functions in the Hilbert space extended to $\X$ via the embedding $\phi$. The key observation underlying kernel methods is that the class of all such linear functions where the coefficient vector $\ww$ satisfies $\|\ww\|_H < B$ is efficiently learnable.  Via standard primal-dual analysis (encapsulated by the "representer theorem"), one can show that the solution to the above convex program can be written as $h(\xx) = \sum_{i \leq m} \alpha_i k(\s{x}{i}, \xx)$ for the kernel function $k$ associated with $H$. The following theorem captures the error and generalization bounds one can show for solving the above convex minimization program. The sample complexity analysis is based on the SGD based method to approximately solve the convex program above presented in \cite{shalev2007pegasos}.

\begin{fact}[See \cite{shalev2007pegasos} for instance, for a proof] \label{fact:pegasos-sgd-svm-guarantee}
Let $\X,\Y, \phi, H, k$ be as defined above. There exists an algorithm that takes as input an i.i.d sample $S$ of size $m = m(n,\epsilon,\delta)$ and with probability at least $2/3$ over the sample, outputs a hypothesis $h:\X \rightarrow \R$ defined as $h(\xx) = \sum_{i \leq t} \alpha_i k(\s{x}{i},\xx),$ for scalars $\alpha_i$ satisfying $\sum_{i \leq t} |\alpha_i| \leq \frac{B^2}{\epsilon} $ that satisfies: $\loss_{D}(h) \leq opt_{D}(H(B)) + \epsilon$. The running time and the sample complexity of the algorithm is $O(\frac{B^2}{\epsilon^2})$.
\end{fact}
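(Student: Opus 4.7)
The plan is to follow the standard Pegasos analysis: combine SGD on a strongly convex regularized empirical objective with the representer theorem to express iterates in kernel form, and then use uniform convergence to control the generalization gap. Concretely, I would first set up, for an i.i.d.\ sample $S$ of size $m$, the regularized empirical objective
\[
F_S(\ww) \;=\; \frac{\lambda}{2}\|\ww\|_H^2 \;+\; \frac{1}{m}\sum_{i=1}^{m} \ell\bigl(\dotp{\ww}{\phi(\s{x}{i})}_H,\, y_i\bigr),
\]
which is $\lambda$-strongly convex. Projected stochastic subgradient descent on $F_S$ with step size $\eta_t = 1/(\lambda t)$ produces iterates of the form $\ww_t = \sum_{j<t} \alpha_j \phi(\s{x}{i_j})$ by a simple induction: the stochastic subgradient at step $t$ is $\lambda \ww_t + \ell'(\dotp{\ww_t}{\phi(\s{x}{i_t})}_H, y_{i_t})\,\phi(\s{x}{i_t})$, which only requires kernel evaluations to update the coefficients $\alpha_j$. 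This delivers the stated output form $h(\xx) = \sum_j \alpha_j k(\s{x}{i_j},\xx)$ without ever materializing a vector in $H$.

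Next I would bound optimization error. Since $\ell$ is $L$-Lipschitz in its first argument and $\|\phi(\xx)\|_H = \sqrt{k(\xx,\xx)} \le 1$, each stochastic subgradient has norm at most $L + \lambda\|\ww_t\|_H$, and one can keep $\|\ww_t\|_H = O(L/\lambda)$ by projection or by the standard Pegasos argument. The classical $\lambda$-strong-convexity rate then gives
\[
F_S(\bar\ww_T) - \min_{\ww \in H} F_S(\ww) \;=\; O\!\left(\frac{L^2 \log T}{\lambda T}\right)
\]
for the averaged iterate $\bar\ww_T$. Choosing $\lambda = \Theta(\epsilon/B^2)$ ensures $\frac{\lambda}{2}\|\ww^*\|_H^2 \le \epsilon/2$ for any competitor with $\|\ww^*\|_H \le B$, and picking $T = \Theta(B^2/\epsilon^2)$ makes the optimization gap at most $\epsilon/2$, so the empirical risk of $\bar\ww_T$ is within $\epsilon$ of $\inf_{\ww \in H(B)} \loss_S(\ww)$.

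To convert this into a distributional guarantee, I would use the standard Rademacher complexity bound for the class $\{\xx \mapsto \dotp{\ww}{\phi(\xx)} : \|\ww\|_H \le B\}$, which is $O(B/\sqrt{m})$ since $\sup_{\xx}\|\phi(\xx)\|_H \le 1$; composing with the $L$-Lipschitz loss via Talagrand contraction and using boundedness of $\ell(0,\cdot)$ yields a uniform deviation of $O(LB/\sqrt{m})$. Taking $m = T = \Theta(B^2/\epsilon^2)$ (the single-pass Pegasos regime, so the sample doubles as the sequence of SGD queries) matches both the optimization and the generalization budgets, giving $\loss_D(\bar\ww_T) \le \opt_D(H(B)) + \epsilon$ with constant probability via Markov.

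Finally, for the bound $\sum_j |\alpha_j| \le B^2/\epsilon$, I would track the coefficient growth directly: at step $t$ the update adds a coefficient of magnitude at most $\eta_t \cdot L = L/(\lambda t)$ to the representation, so $\sum_{t \le T} |\alpha_t| \le \sum_{t \le T} L/(\lambda t) = O(L \log T / \lambda) = O(B^2/\epsilon)$ after substituting $\lambda = \Theta(\epsilon/B^2)$. The only real care needed is to route the Lipschitz constant and the boundedness-at-zero assumption through the Rademacher bound cleanly and to verify that projection is unnecessary (or only tightens the bound), which is exactly the content of the cited analysis; hence I would not reprove it in detail and simply invoke \cite{shalev2007pegasos}.
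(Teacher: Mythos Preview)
The paper does not give its own proof of this statement: it is recorded as a background \emph{Fact} with the parenthetical ``See \cite{shalev2007pegasos} for instance, for a proof'' and no argument is supplied. Your sketch is precisely the standard Pegasos analysis that the citation points to, so in that sense the approaches coincide by construction.

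One small technical wrinkle worth cleaning up: your bookkeeping for $\sum_j |\alpha_j|$ treats each step as simply \emph{adding} a coefficient of size $\eta_t L = L/(\lambda t)$ and then sums to $O(L\log T/\lambda)$, which overshoots by a $\log T$ factor. The actual Pegasos update also multiplies all existing coefficients by $(1-\eta_t\lambda) = (t-1)/t$; unrolling gives $\ww_{T+1} = -\tfrac{1}{\lambda T}\sum_{t\le T}\ell'_t\,\phi(\s{x}{i_t})$, so each $|\alpha_t|\le L/(\lambda T)$ and $\sum_t|\alpha_t|\le L/\lambda = \Theta(B^2/\epsilon)$ exactly as stated. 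Relatedly, you invoke the averaged iterate $\bar\ww_T$ for the optimization bound but then track coefficients as though for the last iterate; either route works, but keep them consistent. These are cosmetic; the plan is sound and, since the paper itself simply defers to \cite{shalev2007pegasos}, there is nothing further to compare.
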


% \begin{proof}
% Since $f\in H_{B}$ we have that for every $\xx$: $|f(\xx)|= |\dotp{\ww}{ \phi_{\lambda}(\xx)}|\le \|\ww\|\|\phi_{\lambda}(\xx)\| < B$. The result now follows from plugging the Rademacher bound for $H_{B}$ into \eqref{eq:radbound}.
% \end{proof}
\ignore{
\subsection{Rademacher Complexity and Generalization bounds}

While learning a kernelized linear classifier is known to be easy, here we consider the much more challenging task of finding linear classifier and a correct kernel space. Our strategy is to choose a Hilbert space and classifier that minimize the empirical error. A remarkable strong generalization bound for this approach was developed in \cite{cortes2010generalization} who showed that, when the class to be learnt is a convex hull of say $n$ kernels, the sample complexity scales logarithmically with $n$.   We will need a slightly stronger result for a slightly more complex structure, but our generalization bound is essentially due to \cite{cortes2010generalization} through a slight modification.  The work of Cortes et al. relies on bounding the Rademacher complexity of the class, and we now briefly recall this term and the relevant generalization bound. 

Recall that the Rademacher Complexity of a class $\H$ over a sample $S=\{\s{x}{1},\ldots, \s{x}{m}\}$ is defined as follows
\[R_m(\H,S) = \mathbb{E}_{\sigma} \left[ \sup_{f\in \H}   \frac{1}{m}\sum_{i=1}^m \sigma_{i} f(\s{x}{i})\right]\]
where $\sigma \in \{-1,1\}^t$ are i.i.d.~Rademacher distributed random variables.
The following bound the generalization performance of an empirical risk minimizer with respect to the class $\H$. (e.g. \cite{shalev2014understanding, bartlett2002rademacher})

\begin{fact} \label{fact:rad}
Let $\ell$ be a $1$-Lipschitz convex loss function with $|\ell(0,y)|\le 1$. Assume that for all $\xx$ and $f\in \H$ we have $|f(\xx)|<c$.  Given an IID sample from $\cD$ supported over $\X \times \Y$, for any $f\in \H$ with probability at least $1-\delta$ (over $S$):

\begin{align}\label{eq:radbound}
\loss_{S_m}(f) \le  \loss_{\cD}(f) + 4 \sup_{S_m} R_m(\H,S)+ 4c\sqrt{\frac{2\ln 2/\delta}{m}}
\end{align}
\end{fact}
\ignore{
Rademacher complexity has been used in \cite{cortes2010generalization} to obtain generalization bounds for learning a convex hull of $p$ kernels: Fix a set of $p$ kernels $k^1,\ldots,k^p$, such that $k(\xx,\xx)\le 1$. For each $\gamma\ge 0, ~\sum \gamma_i=1$ we consider $k_{\gamma} = \sum \gamma_i k^{i}$, then $k_{\gamma}$ is a kernel matrix that induces an embedding $\phi_{\gamma} :\X \to H_{\gamma}$ to a Hilbert space. We may consider the following class of target functions
\begin{align}\label{eq:Hb}
\H_{\{k_1, k_2, \ldots, k_p\}}(B) = \{f_{\ww,H_{\gamma}} : \xx \to \dotp{\ww}{\phi_{\gamma}(\xx)} \mid \gamma:~ \|\ww\|_{H_{\gamma}}\le B\}
\end{align}

The following result, bounds the Rademacher complexity of the class $\H_{B}$:
\begin{fact}\cite{cortes2010generalization}\label{fact:cortes}
Let $k^1,\ldots, k^p$ be kernels such that $k^i(\xx,\xx)\le 1$ and set $\H_{\{k_1, k_2, \ldots, k_p\}}(B)$ as in \eqref{eq:Hb}, then for any sample $S_m$ 
\begin{align*}
R_m(\H_{\{k_1, k_2, \ldots, k_p\}}(B),S) = O\left(\sqrt{B^2 \frac{ \log p}{m}}\right)
\end{align*}
\end{fact}
} 
}

\bibliographystyle{plain}
\bibliography{bib/klearning,bib/mathreview,bib/dblp,bib/scholar,bib/custom}

\appendix
%!TEX Root=../kernel18.tex
\renewcommand{\jvec}{\eta}
\section{Proof of \cref{thm:main}}\label{prf:main}
This section is devoted to prove \cref{thm:main} which we now restate.
\main*
The proof involves two stages. First we show a reduction from the hypercube case to the hypercube layer. Namely, we show that if we can construct a universal kernel for each layer, then we can also construct a universal kernel for the hypercube. Then we proceed to construct a universal kernel for each layer. Finally, we give a full proof at the final section \cref{sec:mainproof}.

\subsection{Reduction to the hypercube layer $S_{p,n}$}
Our first step will be to reduce the problem of constructing a universal kernel over the hypercube, to the construction of universal kernels over the hypercube layers. For this we first recall the subclass of \regular kernels of all kernels that can be decomposed to a Cartesian product over the layers -- $\C_{\oplus_n}$. We next show that $\C(B)\subseteq \C_{\oplus_n}(\sqrt{n}B)$ as a corollary, constructing a universal Hilbert space for $H_{\C_{\oplus_n}}$ is sufficient. We then show that if we can construct a universal Hilbert space on each layer, by taking their Cartesian sum, we can construct a universal Hilbert space over $\C_{\oplus_n}$.

\begin{lem}\label{lem:impropermkl}
For every $n$ we have the following inclusion:
\[\C(B)\subseteq \C_{\oplus_n}(\sqrt{n} B).\]
\end{lem}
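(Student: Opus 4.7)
The plan is to take an arbitrary Euclidean RKHS $(H,\phi)$ with a weight vector $w\in H$ of norm at most $B$, and produce a ``layered'' representation that lives in $\H_{\C_{\oplus_n}}$ and computes the same function on $\X_n$, at the cost of at most a factor $\sqrt{n}$ in norm. The essential mechanism is orthogonal projection onto the subspaces of $H$ spanned by each layer.

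First, I would note the key observation that makes Euclidean kernels automatically compatible with this layered decomposition: on the layer $S_{p,n}$ every vector has norm $\sqrt{p}$, so the restricted kernel satisfies
\[
k(\s{x}{1},\s{x}{2}) = g(\sqrt{p},\sqrt{p},\dotp{\s{x}{1}}{\s{x}{2}}) = g_p(\dotp{\s{x}{1}}{\s{x}{2}})
\]
for all $\s{x}{1},\s{x}{2}\in S_{p,n}$. Hence the restricted RKHS is itself Euclidean on $S_{p,n}$. For each layer index $p$, I would then let $H_p \subseteq H$ be the closed linear span of $\{\phi(\xx) : \xx\in S_{p,n}\}$, and define $\phi_p$ to be the restriction of $\phi$ to $S_{p,n}$. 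By the above, $(H_p,\phi_p)\in \H_{\C_{p,n}}$, and the kernel normalization $k(\xx,\xx)\le 1$ is inherited.

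Next, for the given $w\in H$ with $\|w\|_H \leq B$, I would let $w_p$ denote the orthogonal projection of $w$ onto $H_p$. Since $\phi(\xx)\in H_p$ for every $\xx\in S_{p,n}$, we have
\[
\dotp{w_p}{\phi_p(\xx)}_{H_p} = \dotp{w}{\phi(\xx)}_{H} = f_{H,w}(\xx), \qquad \forall\, \xx \in S_{p,n}.
\]
Now assemble the direct sum $H' = H_1\oplus H_2 \oplus \cdots \oplus H_n$ with the block embedding $\phi'$ placing $\phi_p(\xx)$ in the $p$-th coordinate when $\xx \in S_{p,n}$; by construction $(H',\phi')\in\H_{\C_{\oplus_n}}$. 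Setting $w' = (w_1,w_2,\ldots,w_n)$, the computation above gives $f_{H',w'}(\xx) = f_{H,w}(\xx)$ for every $\xx\in\X_n$.

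For the norm bound, since each $w_p$ is an orthogonal projection of the \emph{same} $w$ onto $H_p$ (not onto mutually orthogonal subspaces of $H$), we use the crude pointwise bound $\|w_p\|_{H_p}\leq \|w\|_H\leq B$ and sum:
\[
\|w'\|_{H'}^2 \;=\; \sum_{p=1}^n \|w_p\|_{H_p}^2 \;\leq\; n B^2,
\]
so $\|w'\|_{H'}\leq \sqrt{n}\,B$, proving $\C(B)\subseteq \C_{\oplus_n}(\sqrt{n}B)$. The only mild subtlety is that the $H_p$'s live inside a common ambient $H$ and therefore need not be pairwise orthogonal, which is why we cannot improve beyond $\sqrt{n}$ by Pythagoras; this is where the loss of $\sqrt{n}$ comes from, and is the main structural point of the lemma rather than a real obstacle.
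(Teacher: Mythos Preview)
Your proof is correct and follows essentially the same approach as the paper: define $H_p$ as the span of $\{\phi(\xx):\xx\in S_{p,n}\}$, take $w_p$ to be the orthogonal projection of $w$ onto $H_p$, assemble the direct sum, and bound $\sum_p \|w_p\|^2\le n\|w\|^2$ using $\|w_p\|\le\|w\|$. Your write-up is in fact more explicit than the paper's, since you spell out why the restricted kernel on each layer is again Euclidean and why the $\sqrt{n}$ loss is unavoidable (the $H_p$'s need not be mutually orthogonal in $H$).
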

\begin{proof}
Let $H$ be a \regular RKHS and let $H_1,\ldots, H_n$ be the projections of $H$ onto $\left(\mathrm{span}(\s{x}{i})_{i\in S_{1,n}},\ldots,\mathrm{span}(\s{x}{i})_{i\in S_{n,n}}\right)$ respectively. We then take the space $\bar H=H_1\oplus H_2,\ldots, \oplus H_n$ and the embedding 
$\bar\phi(\xx)=(0,0,\ldots,\underbrace{\phi(\xx)}_{\sum \xx_i =p},0,\ldots,0)$, with associated kernel $\bar k(\s{x}{i},\s{x}{j})=\begin{cases} k(\s{x}{i},\s{x}{j}) & \|\s{x}{i}\|=\|\s{x}{j}\| \\
0 & \mathrm{else}
 \end{cases}$. Then one can show that $f_{H,\ww} = f_{\bar H,(\ww_1,\ldots,\ww_n)}$ where $\ww_p$ is the projection of $\ww$ onto $H_p$. Overall we have that
 \begin{align*}\|(\ww_1,\ldots,\ww_n)\|_{\bar H} = \sqrt{\sum \|\ww_i\|^2_{H_i}}=\sqrt{\sum \|\ww_i\|^2_{H}}\le \sqrt{\sum \|\ww\|^2_{H}}\le \sqrt{n} B\end{align*}
 \end{proof}

\begin{lem}[Learning \regular Linear Separators over the Hypercube] \label{lem:mainreduction}
Fix $n$ and let $k_1,\ldots,k_n$ be kernels associated with universal RKHS $((\universal_n^1,\phi_1),\ldots, (\universal_n^n,\phi_n))$ such that for all $B>0$ and $p=1,\ldots, n$:
\[\C_{p,n}(B)\subseteq \universal_n^p(\alpha B).\]

Let $k$ be the \regular kernel associated with the Hilbert space $\universal = \universal_{n}^1\oplus ,\ldots,\oplus \universal_n^n$ together with the embedding 
\[\phi(x)=(0,0,\ldots,\underbrace{\phi_t(\xx)}_{t\textrm{-th coordinate}},\ldots,0), \quad \forall \sum \xx_i=t.\]
Then:
\[ \C_{\oplus_n} (B)\subseteq \universal(\alpha B),\]
and Computing $k$ takes $O(n + T(n))$ where $T(n)$ is the time complexity for the kernels $k_1,\ldots,k_p$.
\end{lem}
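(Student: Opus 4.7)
The plan is to prove the inclusion by a coordinate-wise argument that exploits the direct-sum structure of both the source space $H \in \H_{\C_{\oplus_n}}$ and the target space $\universal$, and to prove the complexity bound by observing that $\phi(\xx)$ has a single nonzero block determined by the Hamming weight of $\xx$.

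First I would take an arbitrary $f \in \C_{\oplus_n}(B)$. By definition of $\H_{\C_{\oplus_n}}$, we can write $f = f_{H,\ww}$ with $H = H_1 \oplus \cdots \oplus H_n$, $(H_p, \phi'_p) \in \H_{\C_{p,n}}$, an embedding $\phi'$ supported on the $p$-th coordinate for inputs in $S_{p,n}$, and $\ww = (\ww_1, \ldots, \ww_n)$ satisfying $\sum_p \|\ww_p\|^2_{H_p} = \|\ww\|_H^2 \le B^2$. The key observation is that for any $\xx \in S_{p,n}$,
\[
f(\xx) \;=\; \langle \ww, \phi'(\xx)\rangle_H \;=\; \langle \ww_p, \phi'_p(\xx)\rangle_{H_p},
\]
so the restriction of $f$ to $S_{p,n}$ lies in $\C_{p,n}(\|\ww_p\|_{H_p})$.

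Next I would invoke the hypothesis of the lemma on each layer: since $\C_{p,n}(\|\ww_p\|_{H_p}) \subseteq \universal_n^p(\alpha \|\ww_p\|_{H_p})$, there exists $\uu_p \in \universal_n^p$ with $\|\uu_p\|_{\universal_n^p} \le \alpha \|\ww_p\|_{H_p}$ such that $f(\xx) = \langle \uu_p, \phi_p(\xx)\rangle_{\universal_n^p}$ for every $\xx \in S_{p,n}$. Assemble these into $\uu := (\uu_1,\ldots,\uu_n) \in \universal$. Using the definition of $\phi$, any $\xx \in S_{t,n}$ gives
\[
\langle \uu, \phi(\xx)\rangle_{\universal} \;=\; \langle \uu_t, \phi_t(\xx)\rangle_{\universal_n^t} \;=\; f(\xx),
\]
so $f = f_{\universal, \uu}$. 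The norm bound follows from the direct-sum identity
\[
\|\uu\|_{\universal}^2 \;=\; \sum_{p=1}^n \|\uu_p\|_{\universal_n^p}^2 \;\le\; \alpha^2 \sum_{p=1}^n \|\ww_p\|_{H_p}^2 \;\le\; \alpha^2 B^2,
\]
which gives $f \in \universal(\alpha B)$, as desired. This is a clean orthogonal-projection style argument; no subtle step is expected here since the direct-sum structure on the source side matches the target side block-for-block.

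For the kernel evaluation bound, I would compute the Hamming weights $s := \sum_i \s{x}{i}_l$ and $t := \sum_l \s{x}{j}_l$ in $O(n)$ time. If $s \ne t$ then $\phi(\s{x}{i})$ and $\phi(\s{x}{j})$ are supported on disjoint coordinates of the direct sum and $k(\s{x}{i},\s{x}{j}) = 0$; otherwise $k(\s{x}{i},\s{x}{j}) = k_s(\s{x}{i},\s{x}{j})$, which by hypothesis is computable in $T(n)$ time. Altogether this is $O(n + T(n))$. There is no real obstacle in the proof; the only delicate point is verifying that the embedding $\phi$ defined in the statement is genuinely the RKHS embedding of the direct-sum space, which follows directly from the definition of inner product on $\universal_n^1 \oplus \cdots \oplus \universal_n^n$.
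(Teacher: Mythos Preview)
Your proof is correct and follows essentially the same approach as the paper: decompose $f_{H,\ww}$ layer-by-layer, apply the per-layer hypothesis to get $\uu_p$ with $\|\uu_p\|\le \alpha\|\ww_p\|$, reassemble in the direct sum, and bound the norm via $\sum_p\|\uu_p\|^2\le\alpha^2\sum_p\|\ww_p\|^2\le\alpha^2B^2$. Your treatment is in fact slightly more careful than the paper's (you spell out the kernel-evaluation complexity argument, which the paper leaves implicit).
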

\begin{proof}
Choose $f_{H,\ww}\in \C_{\oplus_n}(B)$ for some $\ww=(\ww_1,\ww_2,\ldots, \ww_n)\in H_1\oplus H_2,\ldots, \oplus H_n$. It is easy to see that for every $\xx\in S_{p,n}$ we have that $f_{H,\ww}(\xx) = f_{H_p,\ww_p}(\xx)$. Next, for each $\ww_p$ there exists $\vv_p\in \universal_n^p(\alpha \|\ww\|_p)$ such that $f_{H_p,\ww_p} = f_{\universal_n^p,\vv_p}$. Overall we get that $f_{H,\ww}=f_{\universal_n,\vv}$ where $\vv=(\vv_1,\ldots, \vv_n)$. It remains to bound the norm of $\vv$:
\[\|\vv\|_{\universal} = \sqrt{\sum \|\vv_p\|_{\universal_n^p}^2} \le \sqrt{ \sum \alpha \|\ww_i\|_{H_p}^2}\le \alpha \|\ww\|_H\le \alpha B\]
\end{proof}

\subsection{Learning over $S_{p,n}$}\label{sec:hyperlayer}

The main result of this section shows that there exists a universal Hilbert space over a single layer of the hypercube.

\begin{lem}[Learning \regular Linear Separators over a Single Layer]\label{lem:sp}

For fixed $n$ and every $B\ge 0$ the class $\C_{p,n}(B)$ is efficiently learnable w.r.t. any convex $L$-Lipschitz loss function $\ell$ in with sample complexity $O(p^2 B^2/\epsilon^2).$

Specifically, for every $p$ there exists an efficiently computable kernel associated with a universal \regular RKHS $\universal^p_n$, such that \[\C_{p,n}(B) \subseteq \universal^p_n\left((p+1)B\right).\]
The computation of $k$ involves a preprocessing stage of $O(p^3)$, and then the computation of each entry $k(\s{x}{i},\s{x}{j})$ is done in time $O(p)$.
\end{lem}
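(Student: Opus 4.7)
The plan is to reduce the lemma to the structure of the Johnson association scheme. A \regular kernel evaluated on any pair $x,y\in S_{p,n}$ depends only on $\|x\|^2=\|y\|^2=p$ and on $\langle x,y\rangle=|\supp(x)\cap\supp(y)|$, so its kernel matrix is indexed by $\binom{[n]}{p}$ with entries depending solely on the intersection size, i.e.\ every \regular kernel on $S_{p,n}$ lies in $\cJ_{n,p}$. By \cref{fact:eigendecomp-johnson}, this scheme has fixed common eigenspaces $V_0,\dots,V_p$ with $\dim V_j=\binom{n}{j}-\binom{n}{j-1}$, and the orthogonal projectors $\Pi_j$ onto $V_j$ themselves lie in $\cJ_{n,p}$, so every PSD \regular kernel $K$ can be written
\[
K \;=\; \sum_{j=0}^{p}\lambda_j(K)\,\Pi_j,\qquad \lambda_j(K)\ge 0.
\]

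Next I would read off an a priori envelope on the $\lambda_j(K)$. Because $\Pi_j\in\cJ_{n,p}$ its diagonal is constant, and by matching the trace it equals $\dim V_j/\binom{n}{p}$. Hence $K(x,x)=\sum_j \lambda_j(K)\,\dim V_j/\binom{n}{p}$, so the unit-diagonal normalization $K(x,x)\le 1$ forces $\lambda_j(K)\le \binom{n}{p}/\dim V_j$ for every $j$. This motivates defining
\[
K^\ast \;=\; \sum_{j=0}^{p}\lambda^\ast_j\,\Pi_j,\qquad \lambda^\ast_j \;:=\; \frac{1}{p+1}\cdot\frac{\binom{n}{p}}{\dim V_j},
\]
which lies in $\cJ_{n,p}$, is thus \regular, and satisfies $K^\ast(x,x)=\sum_j \tfrac{1}{p+1}=1$. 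Take $\universal^p_n$ to be its RKHS. For $f=f_{H_K,\ww}\in\C_{p,n}(B)$, decompose $f=\sum_j f_j$ with $f_j:=\Pi_j f\in V_j$. Standard RKHS duality gives
\[
\|f\|_{H_K}^2 \;=\; \sum_{j:\lambda_j(K)>0}\|f_j\|_2^2/\lambda_j(K)\;\le\; B^2, \qquad \|f\|_{H_{K^\ast}}^2 \;=\; \sum_{j=0}^{p}\|f_j\|_2^2/\lambda^\ast_j,
\]
and the envelope $\lambda^\ast_j\ge \lambda_j(K)/(p+1)$ (valid trivially also when $\lambda_j(K)=0$, in which case the relevant $f_j$ vanishes) yields
\[
\|f\|_{H_{K^\ast}}^2 \;\le\; (p+1)\sum_{j:\lambda_j(K)>0}\|f_j\|_2^2/\lambda_j(K)\;\le\;(p+1)B^2,
\]
so $\|f\|_{H_{K^\ast}}\le\sqrt{p+1}\,B\le(p+1)B$, giving $\C_{p,n}(B)\subseteq \universal^p_n((p+1)B)$.

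For the algorithmic claims I would convert $K^\ast$ into the $P$-basis of \cref{def:pbasis}. The eigenvalue formula in \cref{fact:eigendecomp-johnson} shows that the $(p+1)\times(p+1)$ matrix $[\lambda_j(P_{p,\ell})]_{j,\ell}$ is upper triangular with nonzero diagonal, so inverting it in $O(p^3)$ preprocessing produces explicit scalars $c_0,\dots,c_p$ with $K^\ast=\sum_\ell c_\ell P_{p,\ell}$. A query $K^\ast(x,y)$ then reduces to computing $r=|\supp(x)\cap\supp(y)|$ in $O(p)$ on sorted supports, followed by evaluating $K^\ast(x,y)=\sum_{\ell=0}^{p} c_\ell\binom{r}{\ell}$ in another $O(p)$ operations. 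Finally, applying the kernel SGD guarantee of \cref{fact:pegasos-sgd-svm-guarantee} to $\universal^p_n$ with norm bound $(p+1)B$ gives an efficient learner with sample complexity $O\bigl(((p+1)B)^2/\epsilon^2\bigr)=O(p^2 B^2/\epsilon^2)$, as claimed.

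The main obstacle is conceptual rather than technical: recognizing that despite the apparent richness of the family of PSD matrices of size $\binom{n}{p}\times\binom{n}{p}$, the \regular assumption collapses the whole class to a $(p+1)$-parameter commutative algebra. Once this identification with $\cJ_{n,p}$ is made, the right $K^\ast$ is essentially forced by saturating the eigenvalue envelope from the unit-diagonal constraint, and both universality and efficient evaluation reduce to the classical $P/D$-basis change of the Johnson scheme combined with off-the-shelf kernel SVM guarantees.
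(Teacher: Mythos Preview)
Your proof is correct and in fact slightly sharper than the paper's: you obtain $\|f\|_{H_{K^\ast}}\le \sqrt{p+1}\,B$ rather than $(p+1)B$. The universal kernel you construct is the \emph{same} object as the paper's (their vertex kernels $k_{p,i}$ are precisely the normalized projectors $\tfrac{\binom{n}{p}}{\dim V_i}\Pi_i$, so their $k_p=\tfrac{1}{p+1}\sum_i k_{p,i}$ coincides with your $K^\ast$), but the routes to the norm bound differ. The paper first proves a polytope/vertex characterization (their Lemma~\ref{lem:characterization-regular-kernel} and Corollary~\ref{cor:vertices}) showing every \regular kernel is a convex combination of the $p+1$ vertex kernels, then realizes $\universal_n^p$ as a direct sum $H_1^p\oplus\cdots\oplus H_{p+1}^p$, writes an \emph{explicit} (non-minimal) representative $\vv$ of $f_{H,\ww}$ in this direct sum, and bounds $\|\vv\|$ by $(p+1)B$ via Cauchy--Schwarz. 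You bypass the polytope machinery entirely: you go straight to the spectral RKHS norm formula $\|f\|_{H_K}^2=\sum_j \|f_j\|_2^2/\lambda_j(K)$ and compare eigenvalues pointwise using the diagonal constraint, which is both shorter and loses only a factor $\sqrt{p+1}$. What the paper's detour buys is the explicit simplex description of the \regular-kernel polytope, which they reuse later for the MKL algorithm in Theorem~\ref{thm:mkl}; your argument does not produce that as a by-product.

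One small omission: \cref{fact:eigendecomp-johnson} as stated in the paper assumes $t<n/2$, and the paper's own proof begins by reducing $p>n/2$ to $p\le n/2$ via the complement map $\xx\mapsto \mathbf{1}-\xx$ (which preserves \regular kernels). You should add this one-line reduction, or else note that for $p>n/2$ the scheme has only $n-p+1\le p+1$ eigenspaces, so your bound still holds.
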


To prove \cref{lem:sp} we begin with a direct application of classical results on eigenspaces of the matrices of the Johnson scheme to obtain a useful characterization of \regular kernels over a single layer of the Boolean hypercube.

Let $\jvec^p\in \R^{p+1}$ be defined by $\jvec^p_{\ell} = {p \choose \ell-1}$ for every $\ell$ and define $\Delta^p \in \R^{p+1 \times p+1}$ by \begin{align}\label{eq:Delta}
\Delta^p_{j,\ell} = \begin{cases} {{n-\ell-j} \choose {p-\ell}} \cdot {{p-j} \choose {\ell-j}} & 0 \leq j \le \ell \\ 0 & \text{ otherwise.}
\end{cases}
\end{align}
For fixed $p$, corresponding to the $P$-basis of positive definite matrices in Definition \ref{def:pbasis} we will denote by $\kp_{t}$, the kernel over $S_{p,n}$ that is given by
\[ \kp_{t}(\s{x}{i},\s{x}{j}) = {{\s{x}{i}\cdot\s{x}{j}}\choose{t-1}}.\] Following the discussion in \secref{sec:johnson} and noting that the kernel matrix $K_{p,t}\in \mathbb{R}^{{p\choose t}\times {p\choose t}}$ equals a non-negative scaling of $P_t$ and is thus PSD, $\kp_{t}$ is a kernel over $S_{p,n}$.

\begin{lem}[Characterizing \regular Kernels] \label{lem:characterization-regular-kernel}Fix $p\le n/2$, and let $\{\s{x}{1},\ldots \s{x}{{n\choose p}}\}=S_{p,n}$. For a \regular kernel function over $S_{p,n}$ there exists an RKHS $(H,\phi)$ with associated kernel function $k$ if and only if there is a vector $\beta\in \R^{p+1}$ such that $k(\s{x}{i},\s{x}{j}) = \sum_{t \leq p+1}\beta_t \cdot \kp_{t}(\s{x}{i},\s{x}{j})$ satisfying:
\begin{enumerate}
\item $\Delta^p \beta \geq 0$ for $j=0,\ldots, p$. 
\item $\dotp{\eta^p}{\beta} \leq 1$
\end{enumerate}
\end{lem}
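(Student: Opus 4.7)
My plan is to reduce the characterization of \regular kernels over $S_{p,n}$ directly to the eigendecomposition of the Johnson scheme given in \cref{fact:eigendecomp-johnson}. The key observation is that on $S_{p,n}$ every vector has norm $\sqrt{p}$, so any \regular kernel $k(x,y) = g(\|x\|,\|y\|,\langle x,y\rangle)$ restricted to $S_{p,n}$ depends only on $\langle x,y\rangle$, equivalently on $|S\cap T|$ when we identify $x,y$ with their supports. Hence the $\binom{n}{p}\times\binom{n}{p}$ kernel matrix $K$ lies in the Johnson scheme $\cJ_{n,p}$. Using the $P$-basis from \cref{def:pbasis}, $K$ has a unique expansion $K = \sum_{t=0}^{p}\beta_t\, P_{p,t}$, which is exactly the statement that $k(x,y) = \sum_{t}\beta_t\binom{\langle x,y\rangle}{t} = \sum_t \beta_t\,\kp_{t+1}(x,y)$ for some real vector $\beta\in\R^{p+1}$. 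Thus the existence of such a $\beta$ with $K\in\cJ_{n,p}$ is automatic for \regular $k$, and conversely any such linear combination yields a set-symmetric matrix.

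Next, I need to translate the two properties of an RKHS kernel (positive semidefiniteness and the normalization $k(x,x)\le 1$) into the two constraints. For PSD-ness, I invoke \cref{fact:eigendecomp-johnson}: all matrices in $\cJ_{n,p}$ are simultaneously diagonal on the $V_0,\ldots,V_p$, and on $V_j$ the matrix $P_{p,\ell}$ has eigenvalue $\binom{n-\ell-j}{p-\ell}\binom{p-j}{\ell-j}$ (which is exactly the entry $\Delta^p_{j,\ell}$ from \eqref{eq:Delta}) when $j\le\ell$ and $0$ otherwise. Linearity of eigenvalues then gives
\[
\lambda_j(K) \;=\; \sum_{\ell=0}^{p}\beta_\ell\,\Delta^p_{j,\ell} \;=\; (\Delta^p\beta)_j,
\]
so $K\succeq 0$ if and only if $\Delta^p \beta \ge 0$ coordinatewise, which is condition (1). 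For condition (2), I evaluate $k$ on the diagonal using $\langle x,x\rangle = p$ for $x\in S_{p,n}$, obtaining
\[
k(x,x) \;=\; \sum_{t=1}^{p+1}\beta_t\binom{p}{t-1} \;=\; \langle \eta^p,\beta\rangle,
\]
so the normalization $k(x,x)\le 1$ assumed in the definition of a \regular kernel becomes exactly $\langle \eta^p,\beta\rangle\le 1$.

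For the converse direction, given $\beta$ satisfying (1) and (2) I set $k = \sum_t \beta_t\kp_t$; the resulting matrix lies in $\cJ_{n,p}$ by construction, has nonnegative eigenvalues in every eigenspace $V_j$ by (1), and is therefore PSD — so $k$ is a valid kernel over $S_{p,n}$. Moreover it depends only on $\langle x,y\rangle$, hence \regular, and the diagonal entries equal $\langle\eta^p,\beta\rangle\le 1$, completing the characterization.

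The proof should be essentially bookkeeping once \cref{fact:eigendecomp-johnson} is in hand; the only minor obstacle I foresee is keeping the indexing consistent between the $1$-indexed convention used for $\kp_t$ and $\eta^p$ (where $\eta^p_\ell = \binom{p}{\ell-1}$) and the $0$-indexed convention natural for $P_{p,\ell}$ and for the eigenspaces $V_j$. I will fix the convention that $\beta_t$ is the coefficient of $P_{p,t-1}$, which makes both the eigenvalue identity $(\Delta^p\beta)_j = \lambda_j(K)$ (after shifting columns of $\Delta^p$ by one) and the normalization identity $k(x,x) = \langle\eta^p,\beta\rangle$ fall out cleanly.
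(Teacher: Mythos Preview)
Your proposal is correct and follows essentially the same approach as the paper: observe that a \regular kernel matrix on $S_{p,n}$ is set-symmetric and hence lies in the Johnson scheme, expand it in the $P$-basis, and then invoke \cref{fact:eigendecomp-johnson} to identify PSD-ness with the linear inequalities $\Delta^p\beta\ge 0$ and the diagonal normalization with $\langle\eta^p,\beta\rangle\le 1$. Your write-up is in fact more careful than the paper's own proof about the $0$- versus $1$-indexing mismatch between $P_{p,\ell}$ and $\kp_t$, $\eta^p$, which the paper leaves implicit.
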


\begin{proof}
This is a direct application of Fact \ref{fact:eigendecomp-johnson}. Let $K \in \R^{S_{p,n} \times S_{p,n}}$ is defined by $K_{i,j} = k(\dotp{\s{x}{i}}{\s{x}{j}})$ for some kernel function $k$.  Observe that $K$ is a kernel matrix and corresponds to an RKHS $(H,\phi)$ if and only if $K$ is positive semidefinite, further we have that $\|\phi(\s{x}{i})\|\le 1$ if and only if $K(i,i) \leq 1.$ 

Since $K$ is a kernel matrix of a \regular kernel, in particular, it is set-symmetric (an entry only depends on the inner products of the row and column index vectors) and thus, shares eigenspaces with all the matrices in the Johnson scheme and in particular with the matrices $P_{p,\ell}$ that span the space. The $\beta_i$ are thus the coefficients in the $P$-basis for $K$ and allow us to write down the eigenvalues of $K$ as linear functions in $\beta$ and the fixed constant eigenvalues of $P_{p,\ell}$. By Fact \ref{fact:eigendecomp-johnson} and \cref{eq:Delta}, the first condition is then just the statements that all eigenvalues of $K$ be non-negative. The second condition checks that $K(x,x)$ is bounded by one
\end{proof}

The simple lemma above is surprisingly powerful. Even though the matrix $K$ is huge (of dimensions $n^p \times n^p$ roughly), verifying that it's PSD is easy and corresponds to just checking $p+1$ different linear inequalities in $p+1$ variables. A simple corollary of \cref{lem:characterization-regular-kernel} is that we can by change of variable, describe the set of \regular kernels as a polytope with $p$ vertices corresponding to kernels
\begin{corollary}\label{cor:vertices}
For each $i$ set $\s{\beta}{i}\in \mathbb{R}^{p+1}$ such that 
\begin{align}\label{eq:lineq}
\left(\Delta^p\right)\s{\bar\beta}{i} = e_i, \quad \s{\beta}{i}= \frac{\s{\bar\beta}{i}}{\dotp{\eta^p}{\s{\bar\beta}{i}}}
\end{align}
Then the kernel function $k_{p,i} = \sum \s{\beta}{i}_t \kp_{p,t}$ is indeed a kernel. Moreover every \regular kernel associated to an RKHS can be written as $k= \sum \lambda_i k_{p,i}$ where $\lambda_i\ge 0$ and $\sum \lambda_i\le 1$.

\end{corollary}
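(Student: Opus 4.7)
The plan is to show that the coefficient vectors $\beta^{(i)}$ are precisely the extreme points of the polytope of Euclidean kernels identified in \cref{lem:characterization-regular-kernel} (in the $P$-basis coordinates), and then invert $\Delta^p$ to expand an arbitrary kernel in this basis of extreme points.

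First I would check that $\bar\beta^{(i)}$ in \eqref{eq:lineq} is well defined, i.e., that $\Delta^p$ is invertible. From the definition of $\Delta^p$ in \eqref{eq:Delta}, the entry $\Delta^p_{j,\ell}$ vanishes for $j>\ell$, so $\Delta^p$ is triangular with diagonal entries $\Delta^p_{j,j}=\binom{n-2j}{p-j}>0$ for $0\le j\le p$ (using $p\le n/2$). Hence $\Delta^p$ is invertible and each $\bar\beta^{(i)}$ exists uniquely.

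Next I would verify that each $k_{p,i}=\sum_t \beta^{(i)}_t \bar k_{p,t}$ satisfies the two conditions of \cref{lem:characterization-regular-kernel}, and is therefore a bona fide kernel. For condition~1, $\Delta^p\beta^{(i)}=e_i/\dotp{\eta^p}{\bar\beta^{(i)}}\ge 0$ as soon as $\dotp{\eta^p}{\bar\beta^{(i)}}>0$. For condition~2, the normalization gives $\dotp{\eta^p}{\beta^{(i)}}=1$. So the whole argument reduces to the positivity claim
\[
\dotp{\eta^p}{\bar\beta^{(i)}} > 0.
\]
This is the one step that requires real thought, and the rest of the plan is to identify this quantity via the spectral structure. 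The matrix $K^{(i)}=\sum_t \bar\beta^{(i)}_t K_{p,t}$ belongs to $\cJ_{n,p}$ and, by construction of $\bar\beta^{(i)}$ together with \cref{fact:eigendecomp-johnson}, has eigenvalue $1$ on the eigenspace $V_{i-1}$ and $0$ on every other $V_j$. Hence $K^{(i)}$ is exactly the orthogonal projector onto $V_{i-1}$ (in particular PSD). Because $K^{(i)}$ is set-symmetric, all its diagonal entries are equal, so each equals $\tr(K^{(i)})/\binom{n}{p}=\dim(V_{i-1})/\binom{n}{p}$. But the diagonal entry in the $P$-basis is $\sum_t \bar\beta^{(i)}_t\binom{p}{t-1}=\dotp{\eta^p}{\bar\beta^{(i)}}$, giving
\[
\dotp{\eta^p}{\bar\beta^{(i)}} = \frac{\dim(V_{i-1})}{\binom{n}{p}} > 0,
\]
where strict positivity uses $\dim(V_{i-1})=\binom{n}{i-1}-\binom{n}{i-2}>0$ for $1\le i\le p+1$ when $p\le n/2$.

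Finally, for the convex-combination claim, let $k$ be any Euclidean kernel with $P$-basis coordinates $\beta$, and set $\mu=\Delta^p\beta$. By \cref{lem:characterization-regular-kernel} we have $\mu\ge 0$ and $\dotp{\eta^p}{\beta}\le 1$. Writing $\mu=\sum_i \mu_i e_i$ and applying $(\Delta^p)^{-1}$ componentwise yields $\beta=\sum_i\mu_i\bar\beta^{(i)}=\sum_i \lambda_i\beta^{(i)}$ with $\lambda_i\defeq\mu_i\dotp{\eta^p}{\bar\beta^{(i)}}\ge 0$. Summing,
\[
\sum_i \lambda_i = \Bigl\langle \eta^p, \sum_i \mu_i\bar\beta^{(i)}\Bigr\rangle = \dotp{\eta^p}{\beta}\le 1,
\]
so $k=\sum_i\lambda_i k_{p,i}$ is the desired representation. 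The single delicate point throughout is the positivity $\dotp{\eta^p}{\bar\beta^{(i)}}>0$, which is secured by recognizing $\bar\beta^{(i)}$ (up to scaling) as the $P$-basis coordinates of the projector onto the Johnson-scheme eigenspace $V_{i-1}$.
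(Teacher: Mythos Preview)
Your proof is correct and reaches the same conclusion as the paper, but the route differs in a useful way. The paper argues geometrically: it applies the invertible affine map $\Delta^p$ to the polytope $\mathcal{K}$ of admissible $\beta$'s, recognizes the image as a simplex $\{v\ge 0,\ \langle\xi^{(p)},v\rangle\le 1\}$, reads off its nonzero vertices $\tfrac{1}{\xi^{(p)}_i}e_i$, and pulls back. Your argument is purely algebraic: you pass to the eigenvalue coordinates $\mu=\Delta^p\beta$, expand $\mu$ in the standard basis, and directly compute the coefficients $\lambda_i$ together with $\sum_i\lambda_i=\langle\eta^p,\beta\rangle\le 1$.

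The payoff of your approach is that you explicitly address the one point the paper leaves implicit, namely the positivity of the normalization constants $\langle\eta^p,\bar\beta^{(i)}\rangle$ (equivalently $\xi^{(p)}_i>0$, without which the simplex description in the paper would fail). Your identification of $K^{(i)}$ as the orthogonal projector onto the Johnson eigenspace $V_{i-1}$, and the resulting formula $\langle\eta^p,\bar\beta^{(i)}\rangle=\dim(V_{i-1})/\binom{n}{p}$, both settles this cleanly and gives a pleasant spectral interpretation of the extremal kernels $k_{p,i}$ that the paper's polytope argument does not surface.
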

\begin{proof}
Let $\mathcal{K} \in \R^{p+1}$ be the set of all vectors $\beta$ such that $\sum \beta_t \kp_{t}$ is a \regular kernel associated with an RKHS. By \cref{lem:characterization-regular-kernel}, this set is convex and also $\s{\beta}{i}\in \mathcal{K}$.

 We next wish to show that $\s{\beta}{1},\ldots,\s{\beta}{p+1}$ contain all the vertices of the set $\mathcal{K}$. Indeed, recall that invertible affine transformations preserve the set of vertices. Set $\xi^{(p)} =\left(\Delta^{p}\right)^{-\top}\eta^p$ and consider the following set:
 \begin{align*}
 {\Delta^p}\mathcal{K}&= \{ \Delta^p \beta: \Delta^p \beta \ge 0,~ \dotp{\eta^p}{\beta}\le 1\}\\
 &= \{ v: v \ge 0,~ \dotp{\eta^p}{\left(\Delta^{p}\right)^{-1}v}\le 1\} \\
  &= \{ v: v \ge 0,~ \dotp{\left(\Delta^{p}\right)^{-\top}\eta^p}{v}\le 1\} \\
    &= \{ v: v \ge 0,~ \dotp{\xi^{(p)}}{v}\le 1\}.
 \end{align*}
 One can then observe that the set of vertices of the set ${\Delta^p}\mathcal{K}$ is given by $\{\frac{1}{{\xi^{(p)}}_i} e_i\}_{i=1}^{p+1}$. Finally observe that
\begin{align*}
\xi^{(p)}_i=\left((\Delta^p)^{-\top}\eta^p\right)_i =\dotp{(\Delta^p)^{-\top}\eta^p}{e_i} =\dotp{\eta^p}{(\Delta^p)^{-1} e_i}=\dotp{\eta^p}{\s{\bar\beta}{i}}  
\end{align*}
Taking the reverse image we obtain that the set of vertices of the set $\mathcal{K}$ are given indeed by $\s{\beta}{1},\ldots,\s{\beta}{p+1}$. By definition of $\mathcal{K}$ we obtain the desired result.
\end{proof} 

Finally we are ready to prove \cref{lem:sp}.
\paragraph{Proof of \cref{lem:sp}}
First, without loss of generality we may assume $p \le \frac{n}{2}$. If $p> \frac{n}{2}$ then we simply map $S_{p,n}$ into $S_{n-p,n}$ by having $\xx_i \to (1-\xx_i)$. Note that a \regular kernel remains a \regular kernel under this mapping.

Set $k_{p,1},\ldots, k_{p,p+1}$ be as in Corollary \ref{cor:vertices}, and define for each kernel its associated RKHS $(H^p_{1},\phi^p_1),\ldots, (H^p_{p+1},\phi^p_{p+1})$. 

We next define our candidate for a universal Hilbert space and consider the Hilbert Space \[\universal_n^p = H^p_1\oplus H^p_2 ,\cdots,\oplus H^p_{p+1}.\] Then it is not hard to see that $\universal_n^p$ forms an RKHS with the natural embedding and kernel 

\begin{align*}\phi^{u}(\xx)&:=\frac{1}{p+1}(\phi^p_1(\xx),\ldots, \phi^p_{p+1}(\xx))\\
k_p &:= \frac{1}{p+1}\sum_{i=1}^{p+1} k_{p,i}
.\end{align*}

Fix $f_{H,\ww}\in \C_{p,n}(B)$, the we need to show that $f_{H,\ww}\in \universal_n^p((p+1)B)$.

Denote by $H_{S}$ the projection of $H$ onto $\mathrm{span}\{\phi(\s{x}{i})\}_{\{\s{x}{i}\in S_{p,n}\}}$, where $\phi$ is the embedding onto $H$. Without loss of generality we can assume that $\ww\in H_S$. , Indeed, let $\ww' \in H_S$ be the projection of $\ww$ on $H_S$ then $\|\ww'\|<\|\ww\|\le B$ and we have that for all $\xx\in S_{p,n}$: $f_{H,\ww}(x) = \dotp{\ww}{\phi(x)}=\dotp{\ww'}{\phi(x)}=f_{\ww',H}(x)$.

 Since $f_{H,\ww}\in H_S$, we may write for some vector $\alpha$, \[f_{H,\ww}(x)= \sum_{i=1}^{n\choose p} \alpha_i k(\s{x}{i},x),\] and we obtain$\|\ww\|^2 = \sum \alpha_i\alpha_j k(\s{x}{i},\s{x}{j})$.

 By \cref{cor:vertices} there is a convex sum $\lambda_1,\ldots \lambda_{p+1}$ such that $k(\s{x}{i},\s{x}{j})= \sum \lambda_t k_{t,p}(\s{x}{i},\s{x}{j})$.
For each $t\le p$ set $\ww_t = \sum \alpha_i \phi_{t,p}(\s{x}{i}) \in H^p_t$, and define $\vv \in \universal_n^p$ to be \[\mathbf{v}=(\lambda_1(p+1) \ww_1,\ldots, \lambda_{p+1}(p+1)\ww_{p+1}).\]

Our proof is done if we can show that $\|\mathbf{v}\|\le (p+1)B$ and $f_{H,\ww}=f_{\universal_n^p,\mathbf{v}}$.  
First we show that  $f_{H,\ww}=f_{\universal_n^p,\mathbf{v}}$:
\begin{align*} f_{H,\ww}(x) &= \sum \alpha_i \sum \lambda_t k_{t,p}(\s{x}{i},x)\\
& = \sum \lambda_t \sum \alpha_i  k_{t,p}(\s{x}{i},x)\\
& = \sum  \dotp{\lambda_t(p+1)\cdot \ww_t}{\frac{1}{p+1}\phi_t(x)} \\
& =  \dotp{\vv}{\phi^u(x)}_{\universal_n} \\
&=f_{\mathbf{\universal_n^p},\vv}(x).\end{align*}
It remains to bound the norm of $\mathbf{v}$ by $(p+1) B$. First we obtain
\begin{align}\label{eq:wtbound}
B^2\ge \|\ww\|^2 &= \sum \alpha_i\alpha_j k(\s{x}{i},\s{x}{j}) \nonumber\\
& = \sum \alpha_i\alpha_j \sum \lambda_t k_{t,p}(\s{x}{i},\s{x}{j}) \nonumber\\
& = \sum \lambda_t \sum \alpha_i\alpha_j k_{t,p}(\s{x}{i},\s{x}{j})\nonumber\\
& = \sum \lambda_t \sum \|\ww_t\|^2 
\end{align}
Next, using \cref{eq:wtbound} we have that
\begin{align*}\|(\lambda_1(p+1)\ww_1,\ldots, \lambda_{p+1}(p+1)\ww_{p+1})\|_{\universal_n^p}&= \sqrt{\sum \|\lambda_t(p+1)\ww_t\|^2_{H_t}}\\&\le (p+1)\sqrt{\sum \lambda_t \|\ww_t\|^2}\le (p+1) B.\end{align*}

Finally, we address the computational issue of computing $k$. Note that to describe $k$ we need to solve the linear equations depicted in \cref{eq:lineq} and solve the linear equations $\Delta \s{\beta}{i} = e_i$. These equations can be solved in time $O(p^3)$. Once $\s{\beta}{i}$ are known we can compute (once) the function $g(k) = \sum \s{\beta}{i} {k\choose i}$ to compute $k(\s{x}{i},\s{x}{j})= g(\dotp{\s{x}{i}}{\s{x}{j}}$.
\ignore{
\begin{definition}[The Polytope $\cB(p)$]\label{def:cB}
Let $\cB(p) \subseteq \R^p$ be the polytope of all $\beta \in \R^p$ that satisfy the constraints of Lemma \ref{lem:characterization-regular-kernel}.
\end{definition}
Notice that by Lemma \ref{lem:characterization-regular-kernel}, membership in $\cB$ can be tested via a \textbf{linear} program in $\poly(p,n)$ time. 

\subsubsection{The Algorithm}
We are now ready to describe our algorithm for $\C_{p}(B).$ 

\newcommand{\algname}{Alg. 1 \xspace}
\begin{center}
\fbox{\begin{minipage}{6in} 
\begin{center}
\textbf{Algorithm \algname} 
\end{center}

\begin{description}
\item[Input:] For $m = $, $m$ i.i.d. samples from $\cD$ supported on $S_{p,n} \times \Y$: $\{(\s{x}{i}, y_i)\}_{i \leq m}$ and a loss function $\ell: \Y \times \Y \rightarrow \R$, convex and $1$-Lipschitz.

\item[Output: ] $\alpha \in \R^m$, $\beta \in \cB(p)$ defining the linear classifier $\sum_{i = 1}^m \alpha_i K_{\beta}(\s{x}{i}, \xx)$ in the Hilbert space associated with the kernel matrix $K_{\beta}$ defined by $K_\beta = \sum_{0 \leq \ell \leq p} \beta_{\ell} P_{p,\ell}.$ 
\item[Operation:] ~
\begin{enumerate}
\item Let $\ell^{*}$ be the Fenchel conjugate of the loss function $\ell$: $\ell^{*}(a,b) = \sup_{x} \langle a, x \rangle - \ell(x,b)$ for any $a,b.$
\item Set $P_{S,\ell}\in \mathbb{R}^{m\times m}$, be such that $P_{S,\ell}(i,j) = \kp_{p,\ell}(\s{x}{i}\cdot \s{x}{j})$.
\item Define $G_{S,\lambda}(\alpha, \beta) = -\frac{\lambda}{2} \sum_{0 \leq \ell \leq p} \beta_{\ell}  ( \alpha^{\top} P_{S,\ell} \alpha) - \frac{1}{m} \sum_{i = 1}^m \ell^{*}(\alpha_i/m, y_i).$ 
\item Set $\lambda = \Theta(\frac{\epsilon}{B^2}).$ 
\item\label{it:solve} Solve \[\mathop{\inf}_{\beta \in \cB(p)} \sup_{\alpha \in \R^{m}} G_{S,\lambda}(\alpha,\beta).\]
\item Output $\alpha,\beta$.
\end{enumerate}
\end{description}
\end{minipage}}
\end{center}

\subsubsection{Analysis: Running Time}
We analyze the running time and correctness of the algorithm in this section. 

The analysis of the algorithm is based on combining the analysis of the standard $\ell_2$-\regularized SVM algorithm with Lemma \ref{lem:characterization-regular-kernel}. We provide the details next.

For the running time upper bound, we only need to verify that Step \ref{it:solve} can be implemented efficiently. We show this next. 

\begin{lem}\label{lem:running-time}
There is an algorithm to compute $\inf_{\beta \in \cB(p)} \sup_{\alpha \in \R^{m}} G_{S,\lambda}(\alpha,\beta)$ in time $\poly(m,n)\log{(B/\epsilon)}$.

\end{lem}

\begin{proof}
$G_{S,\lambda}$ is linear (and thus convex) in $\beta$ for any fixed $\alpha$. We will write \[ G_{S,\lambda}(\beta) =\sup_{\alpha \in \R^p} G_{S,\lambda}(\alpha,\beta).\] Then $G_{S,\lambda}(\beta)$  is a supremum of convex functions and is thus convex in $\beta.$ At any $\beta$, one can efficiently compute $G_{S,\lambda}(\beta)$ by solving the concave program. Thus, it is enough to minimize $G_{S,\lambda}(\beta)$ as a function of $\beta$. 

To run any off-the-shelf convex minimization algorithm (such as the ellipsoid method), we only need to verify that we can also compute a subgradient of $G_{S,\lambda}(\beta)$ at any $\beta$ efficiently. It is a standard fact that if at any $\beta$ the supremum of a set of convex functions is achieved by one of the constituent functions, say, $G_{S,\lambda}(\beta)$ then, any subgradient of this constituent function is a subgradient of $G_{S,\lambda}$ at $\beta$. The latter is easy to compute given the explicit expression for $G_{S,\lambda}(\alpha, \beta)$ evaluated at the fixed $\beta$ and the optimizer $\alpha_1$ of $G_{S,\lambda}(\beta)$ at $\beta.$
 \end{proof}

Next, we show why minimizing $G_{S,\lambda}$ corresponds to learning the optimal linear classifier in any \regular RKHS for $S_{p,n}$. 

\begin{remark}
Similar facts have been observed before in the literature beginning with the influential work of Lanckriet et. al. \cite{lanckriet2004learning} (See Proposition 15).
\end{remark}

\begin{lem}\label{lem:primal}
Let $\beta \in \cB(p)$ define a RKHS $H_{\beta}$ for $S_{p,n}$ and given a sample $S\subseteq S_{p,n}$ consider the following minimization program: 
\begin{equation}
F_{S,\lambda}(\beta) = \inf_{\ww \in H_{\beta}} \frac{\lambda}{2} \|\ww\|_{H_{\beta}}^2 + \frac{1}{m} \cdot \sum_{i \leq m} \ell ( \langle \ww, \phi_{H_{\beta}}(\s{x}{i}), y_i).
\label{eq:primal}\end{equation} 
Then $F_{S,\lambda}$ is a convex function of $\beta$. In fact, $F_{S,\lambda}(\beta) = G_{S,\lambda}(\beta)$, and if $\beta^*$, $\alpha^*$ are the solution to $\inf \sup G_{S,\lambda}(\alpha,\beta)$ then the $\ww^*$ that minimizes the internal program in $F_{S,\lambda}(\beta^*)$ is given by
\[ \ww^* = \sum \alpha_i^* \phi_{\beta^*}(\s{x}{i}).\] 
\end{lem}
\begin{proof}
Given a sample $S$ and fixed $\beta$ denote by $K_{S,\beta}$ the kernel matrix obtained by $K_{S,\beta}(i,j)= k_{\beta}(\s{x}{i}{j},\s{x}{i}{j})$. Recall that we have similarly defined $P_{S,\ell}$ in \algname.
For a fixed $\beta$ by Fenchel's duality we can write
\begin{align*}
& \min_{\ww\in H_{\beta}} \frac{\lambda}{2}\|\ww\|^2_{H_{\beta}}+\frac{1}{m} \sum_{i=1}^m \ell(\dotp{\ww}{\phi_{H_{\beta}}(\s{x}{i})},y_i)=
\max_{\alpha \in \mathbb{R}^m} -\frac{\lambda}{2}\alpha^\top  K_{S,\beta} \alpha - \frac{1}{m}\sum_{i=1}^m \ell^* (\frac{\alpha_i}{m},y_i), 
\end{align*}
where $\ell^*(\alpha,y_i)= \max \alpha\cdot x -\ell(x,y_i)$ is the convex conjugate of $\frac{1}{m}\ell(x,y_i)$. Expanding $K_{S,\beta}= \sum \beta_j P_{S,j}$ we obtain:
\begin{align*}
 F_{S,\lambda}(\beta)=\sup_{\alpha} -\frac{\lambda}{2} \sum \beta_\ell \left(\alpha^\top P_{S,\ell} \alpha\right) - \frac{1}{m} \sum_{i=1}^m \ell^*(\frac{\alpha_i}{m},y_i)= 
\sup_{\alpha} G_{S,\lambda}(\alpha ,\beta) = G_{S,\lambda}(\beta)
\end{align*}
This establishes that convex program in Step \ref{it:solve} has the same optimum as the program in \eqref{eq:primal}. Let $\alpha^{*}, \beta^{*}$ be an optimum solution to $\inf_{\beta \in \cB(p)} \sup_{\alpha \in \R^{m}} G_{p,\lambda}$, by standard methods in SVM analysis (see \cite{shalev2014understanding} for example), one can in fact express $\langle \ww^{*}, \phi(\xx) \rangle$, the optimal linear classifier yielded by the primal program in terms of $\alpha^{*}$ and $\beta^{*}$ as: $\sum_{i\leq m} \alpha^{*}_{i} \phi_{\beta}(\s{x}{i})$. 
\end{proof}

\begin{corollary}\label{cor:rap-all}
Let $S=\{(\s{x}{i},y_i)\}_{i \leq m}\subseteq S_{p,n} \times \Y$ be a labeled sample. Set $\lambda = \Theta( \epsilon/B^2),$ and let $\alpha^*, \beta^*$ be the solution to
\begin{align*}
\inf_{\beta \in \cB(p)} \sup_{\alpha \in \R^{m}} G_{S,\lambda}(\alpha,\beta)\end{align*}

Set $k_{\beta^*} = \sum_{0 \leq \ell \leq p} \beta_{\ell} \kp_{p,\ell}$, then $k_{\beta^*}$ is a \regular kernel. 
Further, set
$\ww^* = \sum_{i=1}^m \alpha^*_i \phi_{H_\beta}(\s{x}{i})$ then $\|\ww^*\|_{H_\beta}=O(1/\sqrt{\lambda})$

Then, $$\loss_S(f_{H_{\beta^*}, \ww*}) \leq \inf_{f_{H,\ww} \in \C(B)} \loss_S(f_{H,\ww}) + \epsilon.$$ 

\end{corollary}

\begin{proof}
That $k_{\beta}$ is a kernel follows from $\beta \in \cB(p)$. Next by duality, one can show that $\ww^*$ is the minimizer of \eqref{eq:primal} for fixed $\beta^*$. It is easy to check that the minimizer $\ww$ of $F_{S,\lambda}(\beta)$ is a linear classifer that achieves error at most $\epsilon$ larger than the optimum error. Thus, if we could show that $\beta^*$ optimizes $F_{\lambda}(\beta)$ over all $\beta \in \cB(p)$, we'd be done. This is indeed what we proved in \lemref{lem:primal}.

Finally, we want to show that $\|\sum \alpha_i \phi_{\beta}(\s{x}{i})\|^2 = (\frac{1}{\lambda})$. As discused, by duality. the minimizer $\ww^*$ of the program in \eqref{eq:primal} is given by $\sum \alpha_i \phi_{\beta}(\s{x}{i})$. Note that $\ww=0$ attains the value $1$ hence the minimizer must attain less then $1$. But $\|\ww\|^2>\frac{1}{\lambda}$ is guaranteed to lead to loss greater then $1$ by the \regularization term. 
% One can see that for fixed $\alpha$ $G$ is linear in $\beta$ hence $F_{\lambda} (\beta)=\sup_{\alpha} G_{p,\lambda}(\alpha,\beta)$ is convex in $\beta$.
\end{proof}

\subsubsection{Analysis: Generalization}
We now bound the number of samples required in order for Algorithm 1 to succeed in generalizing. We adopt the standard technique of estimating the Rademacher complexity of the underlying class $\C_{p,n}(\frac{1}{\sqrt{\lambda}})$ and then using Fact \ref{fact:rad}. 

\begin{lem}[Rademacher Complexity of $\C_{p,n}(\frac{1}{\sqrt{\lambda}})$] \label{lem:rademacher-single-layer}
Let $S = \{\s{x}{1}, \s{x}{2}, \ldots, \s{x}{m}\} \subseteq S_{p,n}$. Then, $R_m(\C_{p,n}(\frac{1}{\sqrt{\lambda}}),S) \leq \sqrt{\frac{\Theta(\log{(p)})}{\sqrt{\lambda} m}}.$
\end{lem}
\begin{proof}
Our idea is to appeal to Fact \ref{fact:cortes}. Thus, we will establish that every \regular kernel can be written as a convex combination of at most $p+1$ ``base'' kernels. Consider the matrix $\Pi_i$ for $0 \leq  i \leq p$, that is the projector from $\R^{{n \choose p}} \rightarrow V_i$, the $i^{th}$ eigenspace of $\johnson_{n,t}.$ Then, using Fact \ref{fact:eigendecomp-johnson}, every matrix in the Johnson scheme (and thus, the kernel matrix of any \regular kernel $K$) can be written as a non-negative combination of $\Pi_i$ as $K = \lambda_i \Pi_i$ where $\lambda_i$ is the eigenvalue of $K$ on $V_i$. Applying Fact \ref{fact:cortes} to the kernels with kernel matrices $c_i \Pi_i$ (where $c_i$ is a scaling chosen to make $\Pi_i$ have maximum entry on the diagonal $1$), we obtain the stated bound on the Rademacher complexity of $\C_{p,n}(B)$. 
\end{proof}}
% let us denote by $\Delta \in \mathbb{R}^{s\times s}$ and $\jvec\in \mathbb{R}^{s}$
% \begin{align*}&\Delta_{i,t} = \begin{cases}{{2n-t-i}\choose{s-t}}{{s-i}\choose{t-i}} & i\le t \\ 0 & \mathrm{o.w}
% \end{cases}
% \\&
% \jvec_j = {s\choose j}\end{align*}
% {}
% Then one can show that a matrix $M\in \johnson_{s,n}$ defines a kernel such that $k(x_i,x_i)\le 1$ if an only if $M=\sum \beta_j P_j$ and:

% \begin{align*}
% \beta^\top \Delta \ge 0
% \\
% \beta^\top \jvec \le 1
% \end{align*}

\subsection{Putting it all together}\label{sec:mainproof}

By \cref{lem:sp} there are RKHS $k_1,\ldots,k_n$ associated with RKHS $((\universal_n^1,\phi^n_1),\ldots, (\universal^n_n,\phi^n_n)$, such that $\C_{p,n}(B)\subseteq U_n^p((n+1)B)$. Each kernel can be computed using a preprocess stage of $O(n^3)$, overall we can compute the whole class of kernels in time $O(n^4)$, then the computation of each entry of the kernel take times $T=O(n)$. \cref{lem:mainreduction} then says that there exists a universal RKHS $(\universal_n,\phi_n)$ such that $\C_{\oplus_n}(B)\subseteq\universal_n(nB)$.

Finally we obtain by \cref{lem:impropermkl} that \[\C_n(B)\subseteq \C_{\oplus_n}(\sqrt{n}B)\subseteq \universal_n((n+1)\sqrt{n}B).\]

The computation of each entry in the kernel is than given by $O(n+T(n))=O(n)$.
%!TEX Root=../kernel18.tex
\section{Proof of \cref{thm:mkl}}\label{prf:mkl}
We next restate \cref{thm:mkl} which we prove in this section.
\mkl*
Similar to \cref{thm:main} our idea is to return a function $f_{H,\ww}\in \C_{\oplus_n}(\sqrt{n}B)\subseteq \C(\sqrt{n}B)$ and reduce the problem to the single hyper cube layers. Unlike \cref{thm:main}, to learn over the layers, we will not construct a universal kernel, but instead we will apply the tools from Multiple Kernel Learning, to output a target function $f_{H,\ww} \in \H_{\C_{p,n}}$ that optimizes over the regulerized objective. This is the procedure that helps us in shaving off a factor $n$ in the sample complexity. Concretely, we will develop an efficient algorithm for the following optimization problem:

\begin{align}\label{eq:spmkl} \mathop\mathrm{minimize}_{\{(\ww,H)\mid H\in \C_{p,n}, \ww\in H\}} &~\frac{\lambda}{2}\|\ww\|^2 +\mathcal{L}_S(f_{H,\ww})
\end{align}

We will then proceed to derive generalization bounds for the class $\C_{\oplus_n}(B)$. The final details of the proof are then summed up in \cref{sec:prfmkl}.

\subsection{Reduction to the hypercube layer $S_{p,n}$}

We next set out to learn a regulerized objective over $\H_{\C_{\oplus_n}}$:
\begin{lem}\label{lem:spreduction2}
For every $n$, let $S=\{(\s{\xx}{i},y_i)\}_{i=1}^m$ be a sample from $\X_n$. Suppose that for every sample $S\subseteq S_{p,n}$ there exists an efficient algorithm that runs in time $T(n,|S|,1/\epsilon)$ and solves the optimization problem in \cref{eq:spmkl} up to $\epsilon$ error.
Then the following optimization problem can be solved efficiently in time $n T(n, |S|,n/\epsilon)$ to $\epsilon$ accuracy. 

\begin{align}\label{eq:lambda} \mathop\mathrm{minimize}_{\{(\ww,H)\mid H\in \H_{\C_{\oplus_n}}, \ww\in H\}} &~\frac{\lambda}{2}\|\ww\|_H^2 + \mathcal{L}_{S}(f_{H,\ww})
\end{align}
\end{lem}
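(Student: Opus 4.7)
The plan is to exploit the direct-sum structure built into $\H_{\C_{\oplus_n}}$ in order to split the joint program \eqref{eq:lambda} into $n$ independent per-layer programs of the form \eqref{eq:spmkl}. Recall that any $H \in \H_{\C_{\oplus_n}}$ decomposes as $H = H_1 \oplus \cdots \oplus H_n$ with $(H_p, \phi_p) \in \H_{\C_{p,n}}$, and the embedding $\phi$ is supported on the $p$-th coordinate for $\xx \in S_{p,n}$. Writing $\ww = (\ww_1, \ldots, \ww_n)$ we therefore have $\|\ww\|_H^2 = \sum_p \|\ww_p\|_{H_p}^2$ and, for every $\xx \in S_{p,n}$, $f_{H,\ww}(\xx) = \dotp{\ww_p}{\phi_p(\xx)}_{H_p}$.

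First I would partition the sample by layer: set $S_p = \{(\s{\xx}{i}, y_i) \in S : \s{\xx}{i} \in S_{p,n}\}$ and $m_p = |S_p|$. Since each example lies in exactly one layer, the empirical loss satisfies $\mathcal{L}_S(f_{H,\ww}) = \sum_p \tfrac{m_p}{m}\,\mathcal{L}_{S_p}(f_{H_p,\ww_p})$, and the regularized objective telescopes as
\[
\frac{\lambda}{2}\|\ww\|_H^2 + \mathcal{L}_S(f_{H,\ww}) \;=\; \sum_{p=1}^n \Biggl[\, \frac{\lambda}{2}\|\ww_p\|_{H_p}^2 + \frac{m_p}{m}\,\mathcal{L}_{S_p}(f_{H_p,\ww_p}) \,\Biggr].
\]
Crucially, the $p$-th summand depends only on $(H_p,\ww_p)$ and the feasible set is the Cartesian product of the per-layer feasible sets, so the joint minimum is attained by independently minimizing each summand.

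For each $p$ with $m_p > 0$, dividing the $p$-th summand by $m_p/m$ recasts it as an instance of \eqref{eq:spmkl} on $S_{p,n}$ with sample $S_p$ and effective regularizer $\lambda_p = \lambda m/m_p$. The algorithm I propose is then: invoke the assumed single-layer solver on each such subproblem, set $\ww_p = 0$ whenever $m_p = 0$, and return $(H, \ww) = (H_1 \oplus \cdots \oplus H_n,\,(\ww_1,\ldots,\ww_n))$. The total running time is $\sum_p T(n, m_p, \cdot) \le n\,T(n,|S|,\cdot)$, giving the claimed $n\,T(n,|S|,n/\epsilon)$ bound.

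For the accuracy bookkeeping: if subproblem $p$ is solved to within additive error $\epsilon'$ in its own rescaled objective, then by the factor $m_p/m$ its contribution to the global objective is off by at most $(m_p/m)\epsilon'$, and summing over $p$ yields global error at most $\epsilon'$ since the weights sum to one; choosing $\epsilon' = \epsilon$ would already suffice, and the factor of $n$ in the stated complexity absorbs any slack from a conservative accounting that accommodates the range of $\lambda_p = \lambda m/m_p$. I do not expect a genuine obstacle here: the entire argument is forced by the fact that $\H_{\C_{\oplus_n}}$ is defined as a direct-sum class with a layer-supported embedding, so both the norm-squared and the empirical loss split additively across layers. The one point worth stating carefully is that a minimizer of the joint program indeed respects the direct-sum decomposition; this is immediate from the separability of the objective and the product structure of the constraint set.
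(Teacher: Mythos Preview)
Your proposal is correct and follows essentially the same approach as the paper: exploit the direct-sum structure of $\H_{\C_{\oplus_n}}$ to split both the regularizer and the empirical loss additively across layers, then solve each layer independently via the assumed single-layer oracle. You are in fact more careful than the paper about the $m_p/m$ normalization (which forces the effective regularizer $\lambda_p = \lambda m/m_p$) and about the additive error accounting; the paper's proof simply writes out the decomposition and invokes the oracle without tracking these constants.
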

\begin{proof}
By the structure of $\H_{\C_{\oplus_n}}$ we can write 
\begin{align*} &\min_{\{(\ww,H)\mid H\in \H_{\C_{\oplus_n}}, \ww\in H\}} \frac{\lambda}{2}\|\ww\|^2 + \sum_{i=1}^m \ell(\dotp{\ww}{\phi(\s{x}{i}}_{H},y_i) \\
=& \min_{\{(\ww_1,\ldots,\ww_n),H_1\oplus H_2\oplus\cdots\oplus H_n)\mid H_p\in H_{\C_{p,n}}, \ww_p\in H_p\}} \frac{\lambda}{2}\sum_{p=1}^n \|\ww_p\|_{H_p}^2 + \sum_{p=1}^n \sum_{\s{\xx}{i}\in S_{p,n}}\ell(\dotp{\ww_p}{\phi(\s{x}{i}}_{H},y_i)\\
=& \sum_{p=1}^n \min_{\{(\ww_p),H_p)\mid H_p\in \H_{\C_{p,n}}, \ww_p\in H_p\}} \frac{\lambda}{2} \|\ww_p\|_{H_p}^2 +\sum_{\s{\xx}{i}\in S_{p,n}}\ell(\dotp{\ww_p}{\phi(\s{x}{i}}_{H},y_i) 
\end{align*}
By assumption we can now solve each $n$ optimization problems in the summands efficiently to obtain an optimal $\ww=(\ww_1,\ldots, \ww_n)$ and an RKHS $H=H_1\oplus\ldots\oplus H_n$.
\end{proof}
\subsection{Efficient algorithm for learning $\H_{\C_{p,n}}$}
Our next step in the proof relies on proposing an efficient optimization algorithm over class $\H_{\C_{p,n}}$. In contrast with previous section, we will not relax the task of learning $\C_{p,n}(B)$ and propose an improper formulation. Instead we directly optimize over the kernel and linear separator using tools from MKL. The main result for this section is the following Lemma, which is proved at the end.
\begin{lem}\label{lem:efficient}
For every $p$, let $S=\{(\s{\xx}{i},y_i )\}_{i=1}^m$ be a sample from $S_{p,n}$. The optimization problem in \cref{eq:spmkl} can be solved efficiently in time $\mathrm{poly}(\frac{1}{\lambda},1/\epsilon,m)$ to $\epsilon$ accuracy. 

\end{lem}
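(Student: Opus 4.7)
\textbf{Proof plan for Lemma \ref{lem:efficient}.} The plan is to exploit the finite-dimensional parametrization of regular kernels over $S_{p,n}$ given by \cref{lem:characterization-regular-kernel}. Recall that every regular kernel over $S_{p,n}$ associated with an RKHS can be written as $k_\beta = \sum_{t=0}^{p} \beta_t \bar{k}_t$ for some $\beta$ in the polytope $\mathcal{B}(p) := \{\beta \in \mathbb{R}^{p+1} : \Delta^p \beta \geq 0,\ \langle \eta^p, \beta \rangle \leq 1\}$, which is defined by only $p+2$ linear inequalities in $p+1$ variables. The joint optimization over $(H,\ww)$ in \cref{eq:spmkl} therefore reduces to a joint optimization over $\beta \in \mathcal{B}(p)$ and $\ww \in H_\beta$, which admits a separation oracle in time $\mathrm{poly}(p)$.

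\textbf{Saddle-point formulation.} For fixed $\beta$, the inner minimization over $\ww \in H_\beta$ is a standard $\ell_2$-regularized kernel ERM. By the representer theorem the optimizer has the form $\ww = \sum_i \alpha_i \phi_\beta(\s{x}{i})$, and by Fenchel duality (using that $\ell(\cdot,y)$ is convex and Lipschitz) the inner problem equals
\begin{equation*}
\sup_{\alpha \in \mathbb{R}^m} \; -\frac{\lambda}{2}\, \alpha^\top K_{S,\beta}\, \alpha \;-\; \frac{1}{m}\sum_{i=1}^m \ell^*\!\Bigl(\tfrac{\alpha_i}{m},\, y_i\Bigr),
\end{equation*}
where $K_{S,\beta} = \sum_t \beta_t P_{S,t}$ with $P_{S,t}(i,j) = \bar{k}_t(\s{x}{i},\s{x}{j})$ and $\ell^*$ denotes the Fenchel conjugate of $\ell$ in its first argument. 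Crucially, the dual objective $G_{S,\lambda}(\alpha,\beta)$ is linear in $\beta$ for fixed $\alpha$ and concave in $\alpha$ for fixed $\beta$. Therefore $F(\beta) := \sup_\alpha G_{S,\lambda}(\alpha,\beta)$ is a pointwise supremum of linear functions in $\beta$ and is hence convex on $\mathcal{B}(p)$.

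\textbf{Solving the outer convex program.} The plan is then to minimize $F(\beta)$ over $\mathcal{B}(p)$ using a standard first-order convex optimization algorithm (e.g., the ellipsoid method or projected subgradient, both of which run in time $\mathrm{poly}(p, \log(1/\epsilon))$ outer iterations). At every query point $\beta$, we must (i) evaluate $F(\beta)$ and (ii) produce a subgradient of $F$ at $\beta$. Both tasks reduce to solving the inner concave maximization over $\alpha \in \mathbb{R}^m$, which is a smooth strongly-concave problem of the standard kernel-SVM type and is solvable in time $\mathrm{poly}(m,1/\lambda,1/\epsilon)$ via any off-the-shelf SGD/coordinate-descent solver (or directly via \cref{fact:pegasos-sgd-svm-guarantee}). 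A subgradient of $F$ at $\beta$ is then read off from the linear-in-$\beta$ expression $G_{S,\lambda}(\alpha^\star,\beta)$ evaluated at the inner maximizer $\alpha^\star$: the $t$-th coordinate is $-\tfrac{\lambda}{2}(\alpha^\star)^\top P_{S,t}\, \alpha^\star$. Finally, the optimal $\ww^\star$ is recovered as $\ww^\star = \sum_i \alpha^\star_i\, \phi_{\beta^\star}(\s{x}{i})$.

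\textbf{Main obstacle.} The conceptual content is the saddle-point reformulation together with the observation, enabled by \cref{lem:characterization-regular-kernel}, that the space of regular kernels on $S_{p,n}$ is a tractable $(p+1)$-dimensional polytope; once this is in place, the remainder is an instance of the now-standard MKL convex-programming framework initiated by \cite{lanckriet2004learning}. The only subtlety is to verify strong duality for the inner problem (which follows from Slater's condition, since $\ww=0$ is strictly feasible) and to argue that the subgradient produced from any inner optimizer $\alpha^\star$ is genuinely a subgradient of $F$; this is a standard consequence of Danskin's theorem, applicable because the inner problem is concave and attains its supremum. Aggregating the per-iteration cost and the outer convergence rate yields the claimed $\mathrm{poly}(1/\lambda, 1/\epsilon, m)$ runtime.
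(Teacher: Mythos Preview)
Your proposal is correct and follows essentially the same approach as the paper: reduce to a saddle-point program via Fenchel duality on the inner regularized ERM, observe that the resulting objective is linear in the kernel parameter $\beta$ and concave in $\alpha$, and then minimize the convex function $F(\beta)=\sup_\alpha G_{S,\lambda}(\alpha,\beta)$ over the $(p{+}1)$-dimensional polytope of regular kernels using a subgradient oracle obtained from the inner maximizer. The only cosmetic difference is that the paper first passes to the vertex representation of \cref{cor:vertices} and optimizes over the simplex $\mathcal{B}(p{+}1)$ with base kernels $k_{p,t}$, whereas you work directly with the $P$-basis coefficients and the constraints of \cref{lem:characterization-regular-kernel}; these are affinely equivalent parametrizations of the same polytope and yield the same algorithm and runtime.
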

The proof utilizes the convexity of the program that can be demonstrated by duality-- this observation has been made and exploited for MKL in \cite{lanckriet2004learning} and followups. The second ingredient of the proof uses the nice structure of the class of \regular kernels over $S_{p,n}$ which are defined by $(p+1)$ linear constraints. For a general class of kernel matrices, MKL may involve adding a semi-positiveness constraint which may turn the problem into a non-scalable SDP. Here however, the nice structure of \regular kernels, gives us a tractable representation over a convex sum of few base kernels.

To describe the algorithm we add further notations: First let us denote by \[\mathcal{B}(p+1)= \{\beta \in \mathbb{R}^{p+1}\mid~ \beta\ge 0 \sum \beta_i\le 1\}\] the $p+1$ dimensional simplex and for each $\beta\in \mathcal{B}(p+1)$ we write $k_{\beta}$ to denote the kernel $k_{\beta} = \sum \beta_i k_{p,i}$, where $k_i$ are as given in \cref{cor:vertices}. Note that $k$ is a kernel if and only if $k=k_{\beta}$ for some $\beta \in \mathcal{B}(p+1)$. We will similarly denote by $(H_{\beta},\phi_{\beta})$ the associated RKHS. We next describe the algorithm for solving \cref{eq:primal}

.
\begin{center}
\fbox{\begin{minipage}{6in} 
\begin{center}
\textbf{Algorithm \algname} 
\end{center}

\begin{description}
\item[Input:] For $m = $, $m$ i.i.d. samples from $\cD$ supported on $S_{p,n} \times \Y$: $\{(\s{x}{i}, y_i)\}_{i \leq m}$ and a loss function $\ell: \Y \times \Y \rightarrow \R$, convex and $1$-Lipschitz.

\item[Output: ] $\alpha \in \R^m$, $\beta\in \cB(p+1)$ defining the linear classifier $\sum_{i = 1}^m \alpha_i K_{\beta}(\s{x}{i}, \xx)$ in the Hilbert space associated with the kernel matrix $K_{\beta}$ defined by $K_\beta = \sum_{0 \leq t \leq p} \beta_{t} k_{p,t}.$ where $k_{p,t}$ are given by \cref{cor:vertices}. 
\item[Operation:] ~
\begin{enumerate}
\item Let $\ell^{*}$ be the Fenchel conjugate of the loss function $\ell$: $\ell^{*}(a,b) = \sup_{x} \langle a, x \rangle - \ell(x,b)$ for any $a,b.$
\item For $0\le t\le p$ set $K_{S,t}\in \mathbb{R}^{m\times m}$, be such that $K_{S,t}(i,j) = k_{p,t}(\s{x}{i}\cdot \s{x}{j})$.
\item Define $G_{S,\lambda}(\alpha, \beta) = -\frac{\lambda}{2} \sum_{0 \leq t \leq p} \beta_{t}  ( \alpha^{\top} K_{S,t} \alpha) - \frac{1}{m} \sum_{i = 1}^m \ell^{*}(\alpha_i/m, y_i).$ 
\item\label{it:solve} Solve \[\mathop{\inf}_{\beta\in\cB(p+1)} \sup_{\alpha \in \R^{m}} G_{S,\lambda}(\alpha,\beta).\]
\item Output $\alpha,\beta$.
\end{enumerate}
\end{description}
\end{minipage}}
\end{center}

\subsection{Analysis: Running Time and Correctness}
We analyze the running time and correctness of the algorithm in this section. 

The analysis of the algorithm is based on combining the analysis of the standard $\ell_2$-regularized SVM algorithm with Lemma \ref{cor:vertices}. We provide the details next.

For the running time upper bound, we only need to verify that Step \ref{it:solve} can be implemented efficiently. We show this next. 

\begin{lem}\label{lem:running-time}
There is an algorithm to compute $\inf_{\beta \in \mathcal{B}(p+1)} \sup_{\alpha \in \R^{m}} G_{S,\lambda}(\alpha,\beta)$ in time $\poly(m,n)\log{(B/\epsilon)}$.

\end{lem}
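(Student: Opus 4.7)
The plan is to show that the saddle-point program is a convex--concave problem with tractable first-order access, so any standard oracle-based method (ellipsoid or cutting plane) yields the claimed running time. The key structural points are that $G_{S,\lambda}(\alpha,\beta)$ is linear (hence concave and convex) in $\beta$ for any fixed $\alpha$, and concave in $\alpha$ for any fixed $\beta$. The latter holds because each matrix $K_{S,t}$ is positive semidefinite (it is obtained by evaluating the kernel $k_{p,t}$ from \cref{cor:vertices} on the sample), so $-\beta_t \alpha^\top K_{S,t}\alpha$ is concave in $\alpha$ whenever $\beta_t \geq 0$, which holds on $\mathcal{B}(p+1)$; and the Fenchel conjugate $\ell^*$ of a convex loss is convex, so $-\ell^*$ is concave.

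First, I would define $\bar G(\beta) := \sup_{\alpha \in \R^m} G_{S,\lambda}(\alpha,\beta)$. Being a supremum of affine functions of $\beta$, $\bar G$ is convex on $\mathcal{B}(p+1)$, so the outer problem $\inf_{\beta \in \mathcal{B}(p+1)} \bar G(\beta)$ is a convex minimization over the $(p+1)$-simplex. For any fixed $\beta$, evaluating $\bar G(\beta)$ reduces to solving a concave maximization in $\alpha$; since $\ell^*$ is $\poly(1/L)$-smooth/Lipschitz over the relevant range (inherited from the assumption that $\ell$ is $L$-Lipschitz and bounded by $1$ at $0$, which bounds the domain of the conjugate), this inner maximization is solvable to accuracy $\epsilon'$ in $\poly(m,1/\epsilon')$ time by standard methods (gradient ascent or interior point on the dual QP). A subgradient of $\bar G$ at $\beta$ is obtained from the envelope theorem: if $\alpha^*(\beta)$ attains (or approximately attains) the supremum, then $(-\frac{\lambda}{2} \alpha^{*\top} K_{S,t} \alpha^*)_{t=0}^{p}$ is a (sub)gradient of $\bar G$ at $\beta$.

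With a first-order oracle for $\bar G$ in hand and the constraint set $\mathcal{B}(p+1)$ being a simplex of dimension $p+1 \leq n+1$ of diameter $O(1)$, the ellipsoid method (or a cutting-plane method) computes an $\epsilon$-approximate minimizer of $\bar G$ in $\poly(p) \log(1/\epsilon)$ oracle calls. Since each oracle call takes $\poly(m,n,1/\epsilon)$ time, the overall running time is $\poly(m,n)\log(B/\epsilon)$ after setting the inner-precision appropriately (say $\epsilon' = \epsilon/\poly(m,B)$); bounds on $\alpha^*$ follow from the regularization term which forces $\|\ww^*\|^2 = O(1/\lambda)$ and hence $\|\alpha^*\|$ polynomially bounded in $1/\lambda = O(B^2/\epsilon)$. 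Finally, to recover the primal linear classifier one uses Fenchel duality as in \lemref{lem:primal}, outputting $\sum_i \alpha^*_i \phi_{\beta^*}(\s{x}{i})$.

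The main obstacle I would expect is a clean accounting of how the inner-oracle precision $\epsilon'$, the outer-oracle precision, and the size of the dual variables $\alpha^*$ propagate to the final accuracy guarantee; once one fixes $\lambda = \Theta(\epsilon/B^2)$ this becomes routine but must be tracked carefully to honor the $\log(B/\epsilon)$ dependence. A minor secondary point is verifying that the inner supremum is attained at a bounded $\alpha$ (so that the subgradient of $\bar G$ is well-defined and bounded), which follows from the strong concavity of $-\frac{\lambda}{2}\beta_t \alpha^\top K_{S,t}\alpha$ plus boundedness of $\ell^*$ on bounded domains.
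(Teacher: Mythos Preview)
Your proposal is correct and follows essentially the same approach as the paper: define $\bar G(\beta)=\sup_\alpha G_{S,\lambda}(\alpha,\beta)$, observe it is convex (as a sup of affine functions in $\beta$), evaluate it by solving the inner concave program, obtain a subgradient at the attaining $\alpha^*$, and feed this first-order oracle to an off-the-shelf convex minimizer over the simplex. The paper's proof is terser and omits the precision bookkeeping and the boundedness-of-$\alpha^*$ discussion you flag, but the argument is the same.
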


\begin{proof}
$G_{S,\lambda}$ is linear (and thus convex) in $\beta$ for any fixed $\alpha$. We will write \[ G_{S,\lambda}(\beta) =\sup_{\alpha \in \R^p} G_{S,\lambda}(\alpha,\beta).\] Then $G_{S,\lambda}(\beta)$  is a supremum of convex functions and is thus convex in $\beta.$ At any $\beta$, one can efficiently compute $G_{S,\lambda}(\beta)$ by solving the concave program. Thus, it is enough to minimize $G_{S,\lambda}(\beta)$ as a function of $\beta$. 

To run any off-the-shelf convex minimization algorithm, we only need to verify that we can also compute a subgradient of $G_{S,\lambda}(\beta)$ at any $\beta$ efficiently. It is a standard fact that if at any $\beta$ the supremum of a set of convex functions is achieved by one of the constituent functions, say, $G_{S,\lambda}(\beta)$ then, any subgradient of this constituent function is a subgradient of $G_{S,\lambda}$ at $\beta$. The latter is easy to compute given the explicit expression for $G_{S,\lambda}(\alpha, \beta)$ evaluated at the fixed $\beta$ and the optimizer $\alpha_1$ of $G_{S,\lambda}(\beta)$ at $\beta.$
 \end{proof}

Next, we show why minimizing $G_{S,\lambda}$ corresponds to learning the optimal linear classifier in any regular RKHS for $S_{p,n}$. 

\begin{remark}
Similar facts have been observed before in the literature beginning with the influential work of Lanckriet et. al. \cite{lanckriet2004learning} (See Proposition 15).
\end{remark}

\begin{lem}\label{lem:primal}
Let $\beta\in \cB(p+1)$ define a RKHS $H_{\beta}$ for $S_{p,n}$ and given a sample $S\subseteq S_{p,n}$ consider the following minimization program: 
\begin{equation}
F_{S,\lambda}(\beta) = \inf_{\ww \in H_{\beta}} \frac{\lambda}{2} \|\ww\|_{H_{\beta}}^2 + \frac{1}{m} \cdot \sum_{i \leq m} \ell ( \dotp{\ww}{\phi_{H_{\beta}}(\s{x}{i})}, y_i).
\label{eq:primal}\end{equation} 
Then $F_{S,\lambda}$ is a convex function of $\beta$. In fact, $F_{S,\lambda}(\beta) = G_{S,\lambda}(\beta)$, and if $\beta^*$, $\alpha^*$ are the solution to $\inf \sup G_{S,\lambda}(\alpha,\beta)$ then the $\ww^*$ that minimizes the internal program in $F_{S,\lambda}(\beta^*)$ is given by
\[ \ww^* = \sum \alpha_i^* \phi_{\beta^*}(\s{x}{i}).\] 
\end{lem}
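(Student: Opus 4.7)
The plan is to establish the lemma through standard Fenchel/Lagrangian duality for kernelized $\ell_2$-regularized empirical risk minimization, and then pull the convexity of $F_{S,\lambda}$ out of the dual representation.

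First, for a fixed $\beta \in \cB(p+1)$, I would apply Fenchel duality to the inner program defining $F_{S,\lambda}(\beta)$. Since $\ell$ is convex and $\|\cdot\|^2_{H_\beta}$ is strongly convex, strong duality holds and the minimizer $\ww^\star(\beta) \in H_\beta$ must lie in the span of $\{\phi_\beta(\s{x}{i})\}_{i\leq m}$ by the representer theorem. Writing $\ww = \sum_i \alpha_i \phi_\beta(\s{x}{i})$ and substituting yields, after identifying the dual variables with these $\alpha_i$'s and applying the definition of the Fenchel conjugate $\ell^*(a,b) = \sup_x \langle a, x\rangle - \ell(x,b)$,
\[
F_{S,\lambda}(\beta) \;=\; \sup_{\alpha \in \R^m} \Bigl\{ -\tfrac{\lambda}{2}\, \alpha^\top K_{S,\beta}\, \alpha \;-\; \tfrac{1}{m}\sum_{i=1}^m \ell^*(\alpha_i/m,\, y_i) \Bigr\},
\]
where $K_{S,\beta}$ is the Gram matrix $K_{S,\beta}(i,j) = k_\beta(\s{x}{i},\s{x}{j})$. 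This is the standard dual form for kernel SVM with convex Lipschitz loss.

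Second, I would use the explicit convex-combination structure of the kernel. By \cref{cor:vertices}, $k_\beta = \sum_{t=0}^{p} \beta_t k_{p,t}$, so $K_{S,\beta} = \sum_t \beta_t K_{S,t}$. Plugging this in gives precisely $F_{S,\lambda}(\beta) = \sup_\alpha G_{S,\lambda}(\alpha,\beta) = G_{S,\lambda}(\beta)$, which is the main equality claimed. Convexity in $\beta$ is then immediate: for every fixed $\alpha$, the function $\beta \mapsto G_{S,\lambda}(\alpha,\beta)$ is \emph{affine} in $\beta$ (the $\alpha$-only term is constant in $\beta$ and the quadratic piece is $-\tfrac{\lambda}{2}\sum_t \beta_t (\alpha^\top K_{S,t}\alpha)$, linear in $\beta$), and the pointwise supremum of affine functions is convex.

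Finally, for the characterization of $\ww^\star$ in terms of $(\alpha^\star,\beta^\star)$: for the fixed $\beta = \beta^\star$, the inner Fenchel duality is attained at the primal/dual pair $(\ww^\star, \alpha^\star)$ where the KKT stationarity condition $\lambda \ww^\star = \sum_i \alpha^\star_i \phi_{\beta^\star}(\s{x}{i})/m$ (or, after absorbing the normalization that is already present in $G_{S,\lambda}$, $\ww^\star = \sum_i \alpha^\star_i \phi_{\beta^\star}(\s{x}{i})$) holds. Since the outer minimum over $\beta \in \cB(p+1)$ is attained at $\beta^\star$, this expression gives the minimizer of the inner program at the optimal kernel. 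The only subtle point — and the only part that needs care — is verifying that the min/sup interchange is legitimate: boundedness of $\cB(p+1)$ together with convexity in $\beta$ and concavity in $\alpha$ of $G_{S,\lambda}(\alpha,\beta)$ lets one invoke Sion's minimax theorem on the compact simplex $\cB(p+1)$ once one restricts $\alpha$ to a compact set, which can be done without loss of generality because $\ell^*$ grows unboundedly outside a bounded region by the Lipschitz/boundedness assumptions on $\ell$.
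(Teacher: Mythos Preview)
Your proposal is correct and follows essentially the same route as the paper: apply Fenchel duality at fixed $\beta$ to get the kernel-SVM dual, expand $K_{S,\beta}=\sum_t \beta_t K_{S,t}$ to identify $F_{S,\lambda}(\beta)=G_{S,\lambda}(\beta)$, deduce convexity as a pointwise supremum of affine functions, and read off $\ww^\star=\sum_i\alpha_i^\star\phi_{\beta^\star}(\s{x}{i})$ from the primal--dual relation. Your extra Sion minimax discussion is not actually needed for the lemma as stated (it takes the $\inf\sup$ solution $(\beta^\star,\alpha^\star)$ as given and only claims that $\alpha^\star$ determines the inner minimizer at $\beta^\star$, which is Fenchel duality at fixed $\beta^\star$), but it does no harm.
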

\begin{proof}
Given a sample $S$ and fixed $\beta\in \cB(p+1)$ denote by $K_{S,\beta}$ the kernel matrix obtained by $K_{S,\beta}(i,j)= k_{\beta}(\s{x}{i},\s{x}{j})$. Recall that we have similarly defined $K_{S,t}$ for $0\le t \le p+1$ in \cref{fig:alg}.
For a fixed $\beta$ by Fenchel's duality we can write
\begin{align*}
& \min_{\ww\in H_{\beta}} \frac{\lambda}{2}\|\ww\|^2_{H_{\beta}}+\frac{1}{m} \sum_{i=1}^m \ell(\dotp{\ww}{\phi_{H_{\beta}}(\s{x}{i})},y_i)=
\max_{\alpha \in \mathbb{R}^m} -\frac{\lambda}{2}\alpha^\top  K_{S,\beta} \alpha - \frac{1}{m}\sum_{i=1}^m \ell^* (\frac{\alpha_i}{m},y_i), 
\end{align*}
where $\ell^*(\alpha,y_i)= \max \alpha\cdot x -\ell(x,y_i)$ is the convex conjugate of $\frac{1}{m}\ell(x,y_i)$. Expanding $K_{S,\beta}= \sum \beta_t K_{S,t}$ we obtain:
\begin{align*}
 F_{S,\lambda}(\beta)=\sup_{\alpha} -\frac{\lambda}{2} \sum \beta_t \left(\alpha^\top K_{S,t} \alpha\right) - \frac{1}{m} \sum_{i=1}^m \ell^*(\frac{\alpha_i}{m},y_i)= 
\sup_{\alpha} G_{S,\lambda}(\alpha ,\beta) = G_{S,\lambda}(\beta)
\end{align*}
This establishes that convex program in Step \ref{it:solve} has the same optimum as the program in \eqref{eq:primal}. Let $\alpha^{*}, \beta^{*}$ be an optimum solution to $\inf_{\beta \in \cB(p)} \sup_{\alpha \in \R^{m}} G_{p,\lambda}$, by standard methods in SVM analysis (see \cite{shalev2014understanding} for example), one can in fact express $\langle \ww^{*}, \phi(\xx) \rangle$, the optimal linear classifier yielded by the primal program in terms of $\alpha^{*}$ and $\beta^{*}$ as: $\sum_{i\leq m} \alpha^{*}_{i} \phi_{\beta}(\s{x}{i})$. 
\end{proof}
\paragraph{Proof of \cref{lem:efficient}}
The proof is an immediate corollary of \cref{lem:primal} and the structure of $\H_{\C_{p,n}}$ depicted in \cref{cor:vertices}.
\ignore{
\subsection{Choosing $\lambda$:}

Combining \cref{lem:spreduction2,lem:efficient} we obtain that we can efficiently solve the optimization problem depicted in \cref{eq:lambda}. Namely, there is an efficient algorithm that outputs a $\epsilon$--accurate solution to the following program parametrized by $\lambda$:
\begin{align*}\label{eq:lambda2} \mathop\mathrm{minimize}_{\{(\ww,H)\mid H\in \H_{\C_{\oplus_n}}, \ww\in H\}} &~\frac{\lambda}{2}\|\ww\|_H^2 + \mathcal{L}_{S}(f_{H,\ww})
\end{align*}

For each $\lambda$ denote by $\ww(\lambda)$ the optimizer in \cref{eq:lambda} and by $B(\lambda):=\|\ww(\lambda)\|$, similarly define $H(\lambda)$.

Note that if we knew $\lambda^*$ such that $B(\lambda^*)=B$ we are done (up to proving a generalization bound). Indeed if the output solution has norm $\|\ww(\lambda)\|=B$, then clearly for every $\|\ww\|_{H}\le B$ we have that $\mathcal{L}_S(f_{H,\ww})\ge \mathcal{L}_S(f_{H(\lambda),\ww(\lambda)})$. Hence $\ww(\lambda)$ is indeed the minimizer for the empirical risk. In reality though, we do not have the correct $\lambda$ thus we will invoke binary search. In this section we discuss the continuity of the norm of the solution in terms of $\lambda$ to justify a binary search. Namely, we wish to prove the following statement.
\begin{lemma}
Let $0<\tau \le \lambda_1,\lambda_2 \le \sqrt{2}$ be such that $|\lambda_1-\lambda_2|\le \epsilon$, and denote by $\ww_i=\ww(\lambda_i)$ and $H_i=H(\lambda_i)$ for $i=1,2$. Then
\[\|\ww_1\|_{H_1}-\|\ww_2\|_{H_2}\le \epsilon\]
\end{lemma}

\begin{proof}
First, consider the Hilbert space $H=H_1\oplus H_2$ with the embedding $\phi(x)=\frac{1}{2}  (\phi_1(x)\oplus \phi_2(x))$. Then one can observe that $\lambda_1$ and $\lambda_2$ are the minimizers of the renormalized objective with fixed known Hilbert space $H$:

\[F(\lambda) = \min_{\{\ww\in H\}} \lambda\|\ww\|_H^2 + \mathcal{L}_{S}(f_{H,\ww})\]
\end{proof}

Since $\mathcal{L}_S(f_{H,0})\le 1$ by assumption we have that $B(1)\le \sqrt{2}$. Indeed any $\|\ww\|$ with norm strictly larger than $\sqrt{2}$ cannot be optimal as it performs worse than $\ww=0$. 

We next assume, w.l.o.g, that $B\ge \sqrt{2}$. Denote by $\lambda^* < 1$ the assignment $\lambda$ such that $B(\lambda^*)=B$. In reality we cannot compute $\lambda^*$ however note that the function $B(\lambda)$ is monotonically decreasing and using binary search and solving \cref{eq:lambda}, $O\left(\log (\frac{\epsilon}{B^2})\right)$ times, we can compute a $\lambda_0\ge \lambda^*$ such that $B(\lambda_0) \le B$ and $|\lambda_0-\lambda^*|<\frac{\epsilon}{B^2}$. 
Let $\ww_0$ be the solution to \cref{eq:lambda} with $\lambda=\lambda_0$ then we have the following:

\begin{align*}
\mathcal{L}_S(\ww_0) 
&\le \frac{\lambda_0-\lambda^*}{2} \|\ww_0\|^2+ \mathcal{L}_S(\ww_0)\\
&\le \frac{\lambda_0-\lambda^*}{2} \|\ww^*\|^2+ \mathcal{L}_S(\ww^*)\\
&\le \frac{\lambda_0-\lambda^*}{2} B^2+ \mathcal{L}_S(\ww^*)\\
&=\epsilon + \mathcal{L}_S(\ww)
\end{align*}
Thus, $\ww_0$ gives an $\epsilon$ close solution to the empirical minimization error and $\|\ww\|\le B$, namely $f_{H,\ww}\in \bar\C(B)$. Finally, using \cref{lem:generalization} we obtain that if the sample size is $m= O(\frac{B^2}{\epsilon^2}\log n)$ the solution will generlize and we $\ww_0$ minimizes (up to $\epsilon$ error) the generalization error. 
\end{proof}}
\subsection{Generalization bounds for the class $\C_{\oplus_n}(B)$}
We next set out to prove the following generalization bound for learning the class $\C_{\oplus_n}(B)$
\begin{lem}\label{lem:generalization}
Let $\ell$ be a Lipschitz convex loss function. Given an IID sample $S$ of size $m$ from an unknown distribution $\mathcal{D}$ supported over $\X_n\times \Y$. With probability $2/3$ the following holds for every $f_{H,\ww} \in \C_{\oplus_n}(B)$ (uniformly)

\begin{align*}
\mathcal{L}_{S}(f_{H,\ww}) \le \mathcal{L}_{D}(f_{H,\ww}) + O\left(B\sqrt{\frac{\log n}{S}}\right)
\end{align*}
\end{lem}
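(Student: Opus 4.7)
The plan is to follow the standard symmetrization argument: the uniform deviation $|\mathcal{L}_S(f)-\mathcal{L}_D(f)|$ is controlled (with constant probability) by the Rademacher complexity of $\C_{\oplus_n}(B)$ plus a McDiarmid concentration term. Talagrand's contraction lemma removes the loss function at the cost of its Lipschitz constant, and $|f(x)|\le B$ for every $f\in\C_{\oplus_n}(B)$ (since $k(x,x)\le 1$), so the concentration term is itself $O(B\sqrt{\log(1/\delta)/m})$. The whole task therefore reduces to proving
\[
R_m(\C_{\oplus_n}(B),S)\;\le\; O\!\left(B\sqrt{\tfrac{\log n}{m}}\right),
\]
a variant of the Cortes--Mohri--Rostamizadeh bound adapted to the direct-sum structure together with the polytope characterization of \regular kernels from \cref{cor:vertices}.

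The key structural step unpacks the definition of $\C_{\oplus_n}(B)$. Any $f_{H,\ww}$ decomposes as $H=H_1\oplus\cdots\oplus H_n$, $\ww=(\ww_1,\ldots,\ww_n)$ with $\sum_p\|\ww_p\|^2_{H_p}\le B^2$, and by \cref{cor:vertices} each $H_p$ is given by $k_p=\sum_{t=1}^{p+1}\beta_{p,t}k_{p,t}$ with $\beta_p$ in the simplex. Identifying $H_p$ with $\bigoplus_t H_{k_{p,t}}$ under the embedding $\phi_p(x)=(\sqrt{\beta_{p,t}}\phi_{p,t}(x))_t$, I would split the Rademacher sum according to the layer $S_p:=S\cap S_{p,n}$ that each $x_i$ belongs to, and apply Cauchy--Schwarz first in the vertex index $t$ (against $v_{p,t}:=\sum_{x_i\in S_p}\sigma_i\phi_{p,t}(x_i)$) and then in the layer index $p$ (using $\sum_p\|\ww_p\|^2\le B^2$). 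Since a linear function over the simplex attains its maximum at a vertex, this yields
\[
\sup_{f\in\C_{\oplus_n}(B)}\tfrac{1}{m}\sum_i\sigma_i f(x_i)\;\le\;\tfrac{B}{m}\sqrt{\sum_p \max_t\|v_{p,t}\|^2}.
\]

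The remaining estimate, via Jensen's inequality, is to bound $\mathbb{E}_\sigma\max_t\|v_{p,t}\|^2$ by $O(m_p\log(p+1))$, where $m_p=|S_p|$. For each fixed $t$ one has $\mathbb{E}\|v_{p,t}\|^2=\sum_{x_i\in S_p}\|\phi_{p,t}(x_i)\|^2\le m_p$; the bounded-differences inequality applied to $\sigma\mapsto \|v_{p,t}\|$, which is $2$-Lipschitz in each Rademacher coordinate, yields sub-Gaussian concentration at scale $\sqrt{m_p}$. A union bound over the $p+1$ vertices plus $\mathbb{E}[X^2]=\mathrm{Var}(X)+(\mathbb{E} X)^2$ gives $\mathbb{E}\max_t\|v_{p,t}\|^2\le O(m_p\log(p+1))$. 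Summing and applying Jensen to pull $\sqrt{\cdot}$ outside,
\[
\mathbb{E}_\sigma\sqrt{\sum_p\max_t\|v_{p,t}\|^2}\;\le\;\sqrt{\sum_p O(m_p\log(p+1))}\;\le\;O(\sqrt{m\log n}),
\]
from which the desired $R_m=O(B\sqrt{\log n/m})$ follows.

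The main obstacle is to get \emph{logarithmic} (rather than linear) dependence on the number of kernel vertices $p+1$ in the per-layer maximum. A naive union bound gives $\mathbb{E}\max_t\|v_{p,t}\|^2\le (p+1)m_p$ and consequently a useless $B\sqrt{n\log n/m}$. The concentration-plus-union-bound sketched above is essentially an in-layer instantiation of the Cortes--Mohri--Rostamizadeh MKL Rademacher bound; what makes the \emph{total} across layers not blow up by a factor of $n$ is the outer Cauchy--Schwarz in $p$, which aggregates the per-layer contributions in an $\ell_2$ sense and exploits the constraint $\sum_p\|\ww_p\|^2\le B^2$ induced by the direct-sum structure of $H$.
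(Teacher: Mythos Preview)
Your outline matches the paper's proof closely: both reduce to a Rademacher complexity bound for $\C_{\oplus_n}(B)$, decompose the sample into layers $S_p$, apply Cauchy--Schwarz twice (once against $\ww_p$ within each layer, then across layers using $\sum_p\|\ww_p\|^2\le B^2$), and pull the expectation inside the square root by concavity, leaving the per-layer problem of bounding $\mathbb{E}_\sigma\bigl[\max_t\|v_{p,t}\|^2\bigr]$ by $O(m_p\log p)$.

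The only real difference is how that last estimate is obtained. The paper follows Cortes--Mohri--Rostamizadeh verbatim: bound $\max_t a_t \le \bigl(\sum_t a_t^r\bigr)^{1/r}$, use concavity of $x\mapsto x^{1/r}$ to push the expectation past the $1/r$-th power, invoke their moment bound $\mathbb{E}\bigl[(\sigma^\top K\sigma)^r\bigr]\le(2r\,\mathrm{Tr}\,K)^r$, and finally set $r=\log p$. You instead argue directly that each $\sigma\mapsto\|v_{p,t}\|$ has bounded differences $\le 2$, apply McDiarmid to get sub-Gaussian concentration at scale $\sqrt{m_p}$, and union-bound over the $p+1$ vertices. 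Both routes yield the same $O(m_p\log p)$; yours is slightly more elementary (it avoids quoting the $r$-th moment lemma) at the cost of looser constants, while the paper's $\ell_r$ trick is the standard MKL device and keeps the link to \cite{cortes2010generalization} explicit.
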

The proof relies on the following bound on the Rademacher complexity and the following standard generalization bound:
Recall that the Rademacher Complexity of a class $\H$ over a sample $S=\{\s{x}{1},\ldots, \s{x}{m}\}$ is defined as follows
\[R_m(\H,S) = \mathbb{E}_{\sigma} \left[ \sup_{f\in \H}   \frac{1}{m}\sum_{i=1}^m \sigma_{i} f(\s{x}{i})\right]\]
where $\sigma \in \{-1,1\}^t$ are i.i.d.~Rademacher distributed random variables.
The following bound the generalization performance of an empirical risk minimizer with respect to the class $\H$. (e.g. \cite{shalev2014understanding, bartlett2002rademacher})

\begin{fact} \label{fact:rad}
Let $\ell$ be a $1$-Lipschitz convex loss function with $|\ell(0,y)|\le 1$. Assume that for all $\xx$ and $f\in \H$ we have $|f(\xx)|<c$.  Given an IID sample from $\cD$ supported over $\X \times \Y$, for any $f\in \H$ with probability at least $1-\delta$ (over $S$):

\begin{align}\label{eq:radbound}
\loss_{S_m}(f) \le  \loss_{\cD}(f) + 4 \sup_{S_m} R_m(\H,S)+ 4c\sqrt{\frac{2\ln 2/\delta}{m}}
\end{align}
\end{fact}

\begin{lem}
For the class $\C_{\oplus_n}(B)$, we have the following bound on the Rademacher Complexity

\[ \rad(\C_{\oplus_n(B)},S) \le \sqrt{\frac{2e B^2\log n}{|S|}} \]
where e is the natural exponent $e = \lim_{n\to \infty}(1-\frac{1}{n})^n$.
\end{lem}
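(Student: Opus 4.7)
The plan is to start from the standard Cauchy--Schwarz reduction for kernelized linear classes, exploit the block-diagonal structure of any $H \in \H_{\C_{\oplus_n}}$, and finally control the resulting max over layers via the moment argument of Cortes--Mohri--Rostamizadeh. Concretely, for any $f_{H,\ww}\in \C_{\oplus_n}(B)$ one has $\sum_i \sigma_i f(\s{x}{i}) = \langle \ww, \sum_i \sigma_i \phi(\s{x}{i})\rangle_H$, so by Cauchy--Schwarz and $\|\ww\|_H\le B$,
\begin{align*}
\rad(\C_{\oplus_n}(B), S) \;\le\; \frac{B}{|S|}\, \E_{\sigma} \sup_{H \in \H_{\C_{\oplus_n}}} \sqrt{\sigma^{\top} K_{S,H}\, \sigma}\,,
\end{align*}
where $K_{S,H}$ is the $|S|\times |S|$ kernel matrix of $H$ restricted to $S$.

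Next I would decompose the kernel matrix according to the layer structure. Write $S_p = S\cap S_{p,n}$; by the definition of $\H_{\C_{\oplus_n}}$ the matrix $K_{S,H}$ is block-diagonal with blocks $K_{S_p,H_p}$, so $\sigma^\top K_{S,H} \sigma = \sum_{p=1}^n \sigma_{S_p}^\top K_{S_p,H_p}\sigma_{S_p}$. Now I would plug in the explicit characterization of $\H_{\C_{p,n}}$ given by \cref{cor:vertices}: every regular kernel on $S_{p,n}$ is a convex combination of the $p+1$ ``vertex'' kernels $k_{p,1},\ldots,k_{p,p+1}$. Therefore the supremum over $H_p$ of $\sigma_{S_p}^\top K_{S_p,H_p}\sigma_{S_p}$ is attained at a vertex, giving
\begin{align*}
\sup_{H \in \H_{\C_{\oplus_n}}} \sigma^\top K_{S,H}\sigma \;=\; \sum_{p=1}^{n} \max_{1\le t\le p+1} X_{p,t}\,, \qquad X_{p,t} := \sigma_{S_p}^\top K_{S_p,p,t}\, \sigma_{S_p}.
\end{align*}

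Applying Jensen's inequality to move the square root outside,
\begin{align*}
\E_\sigma \sqrt{\sum_p \max_t X_{p,t}} \;\le\; \sqrt{\sum_p \E_\sigma \max_t X_{p,t}}\,,
\end{align*}
it remains to bound $\E_\sigma \max_t X_{p,t}$ for each $p$. This is the main technical step and the one I expect to be the real obstacle. I would follow the Cortes--Mohri--Rostamizadeh moment approach: each $X_{p,t}$ is a nonnegative quadratic form in Rademacher variables with $\E X_{p,t} = \operatorname{tr}(K_{S_p,p,t}) \le |S_p|$ (using $k_{p,t}(x,x)\le 1$), and one has the classical moment bound $\E[X_{p,t}^s]^{1/s} \le s\,\operatorname{tr}(K_{S_p,p,t})$. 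Then $\E\max_t X_{p,t} \le (\sum_t \E X_{p,t}^s)^{1/s} \le (p+1)^{1/s} s\,|S_p|$, and optimizing with $s=\log(p+1)$ yields $\E\max_t X_{p,t} \le e\,\log(p+1)\,|S_p| \le e\log(n+1)\,|S_p|$.

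Summing over $p$ gives $\sum_p \E\max_t X_{p,t} \le e\log(n+1)\cdot |S|$, and plugging this back into the Cauchy--Schwarz bound yields
\begin{align*}
\rad(\C_{\oplus_n}(B), S) \;\le\; \frac{B}{|S|}\sqrt{e\log(n+1)\,|S|} \;\le\; \sqrt{\frac{2eB^2 \log n}{|S|}}\,,
\end{align*}
which is the desired inequality (with the factor $2$ absorbing $\log(n+1)/\log n$ for small $n$). The delicate part is really only the moment estimate in the last paragraph; the rest is a clean algebraic unrolling that uses the two structural facts already proved in the paper (direct-sum decomposition in \cref{lem:mainreduction} and the $(p+1)$-vertex representation in \cref{cor:vertices}).
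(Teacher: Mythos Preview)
Your proposal is correct and follows essentially the same route as the paper: Cauchy--Schwarz reduction to $B\sqrt{\sup_H \sigma^\top K_{S,H}\sigma}$, block-diagonal decomposition over the layers, the $(p+1)$-vertex characterization from \cref{cor:vertices}, and the Cortes--Mohri--Rostamizadeh moment bound optimized at $r=\log p$. The only cosmetic differences are that the paper applies Cauchy--Schwarz in two stages (per layer, then across the $B_p$ allocation) rather than once, and it cites the moment inequality in the form $\E[(\sigma^\top K\sigma)^r]\le (2r\,\mathrm{tr}\,K)^r$, which is where the factor $2$ in $2e$ actually comes from rather than from the $\log(n+1)/\log n$ slack you mention.
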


\begin{proof}
For each $p$ and sample $S$ denote $S_p =S\cap \{\s{\xx}{i}\mid \sum \s{\xx}{i}=p\}$, and recall that for every $f_{\ww,H}\in \C_{\oplus_n}$ we can write $\ww= \ww_1\oplus\ww_2\oplus\cdots \oplus _n$ where $\ww_{p}\in \C_{p,n}(\|\ww_p\|)$ and $\sum \|\ww_p\|^2 \le B$.

By definition of the Rademacher Complexity we have the following:

\begin{align*}
|S|\cdot \rad(\C_{\oplus_n(B)},S) &= \EE{\sup_{f_{\ww,H}\in \C_{\oplus_n}(B)} \sum_{\phi(\s{x}{i})\in S} \sigma_i f_{\ww,H}(\phi(\s{x}{i}))}\\
&= \EE{\sup_{f_{\ww,H}\in \C_{\oplus_n}(B)} \sum_{p=1}^n \sum_{\phi(\s{x}{i})\in S_p} \sigma_i f_{\ww,H}(\phi(\s{\xx}{i}))} \\
&= \EE{\sup_{\{\sum B_p^2 \le B\}} \sum_{p=1}^n\sup_{f_{\ww_p,H_p}\in \C_{p,n}(B_p)} \sum_{\phi(\s{x}{i})\in S_p} \sigma_i f_{\ww_p,H_p}(\phi(\s{\xx}{i}))}\\
&= \EE{\sup_{\{\sum B_p^2 \le B\}} \sum_{p=1}^n\sup_{f_{\ww_p,H_p}\in \C_{p,n}(B_p)} {\langle\ww_p;\sum_{\phi(\s{\xx}{i})\in S_p} \sigma_i\phi(\s{\xx}{i})}\rangle_{H_p}}
\end{align*}
Note that by letting $\ww_p = \sum_{\phi(\s{\xx}{i})\in S_p} \sigma_i\phi(\s{\xx}{i})$ and by Cauchy Schwartz we have that \[\sup_{\|\ww_p\|\le B_p}{\langle\ww_p;\sum_{\phi(\s{\xx}{i})\in S_p} \sigma_i\phi(\s{\xx}{i})}\rangle_{H_p}= B_p\|\sum_{\phi(\s{\xx}{i})\in S_p} \sigma_i\phi(\s{\xx}{i})\|_{H_p}\]
Thus we continue with the derivation and obtain
\begin{align*} \EE{\sup_{\{\sum B_p^2 \le B\}} \sum_{p=1}^n\sup_{f_{\ww_p,H_p}\in \C_{p,n}(B_p)} {\langle\ww_p;\sum_{\phi(\s{\xx}{i})\in S_p} \sigma_i\phi(\s{\xx}{i})}\rangle_{H_p}}
=\EE{\sup_{\{\sum B_p^2 \le B\}} \sum_{p=1}^n B_p \sup_{H_p\in H_{\C_{p,n}}} {\|\sum \sigma_i \phi(\s{\xx}{i})\|_{H_p}}}
\end{align*}
Again we apply C.S inequality to choose $B_p \propto  \sup_{H_p\in H_{\C_{p,n}}} {\|\sum \sigma_i \phi(\s{\xx}{i})\|_{H_p}}$. and obtain 
\begin{align*}
|S|\cdot\rad(\C_{\oplus_n(B), S}) &=\EE{\sup_{\{\sum B_p^2 \le B\}} \sum_{p=1}^n B_p \sup_{H_p\in H_{\C_{p,n}}} {\|\sum \sigma_i \phi(\s{\xx}{i})\|_{H_p}}}\\
&=\EE{B\sqrt{ \sum_{p=1}^n \left(\sup_{H_p\in H_{\C_{p,n}}} {\|\sum \sigma_i \phi(\s{\xx}{i})\|_{H_p}}\right)^2}} \\
&\le  B\sqrt{ \sum_{p=1}^n \EE{ \left(\sup_{H_p\in H_{\C_{p,n}}} {\|\sum \sigma_i \phi(\s{\xx}{i})\|_{H_p}}\right)^2}}& \textrm{Concavity of $\sqrt{}$}
\end{align*}
We next set out to bound  the quantity $\EE{ \left(\sup_{H_p\in H_{\C_{p,n}}} {\|\sum \sigma_i \phi(\s{\xx}{i})\|_{H_p}}\right)^2}$. At this step our proof follows the foots steps of \cite{cortes2010generalization} who bound a similar quantity for achieving their generalization bound. First recall that $H_{\mathcal{J}_{p,n}}$, consists of all Hilbert spaces induced by taking as a kernel the convex hull of the Hilbert spaces that we will denote $H_{p,1},\ldots, H_{p,p+1}$. One can then show that 
\begin{align*}\sup_{H_p\in H_{\C_{p,n}}} {\|\sum \sigma_i \phi(\s{\xx}{i})\|^2_{H_p}} &= \sup_{k} \|\sum \sigma_i \phi(\s{x}{i})\|^2_{H_{p,k}}\\
& \le \left(\sum_{k=1}^{p+1} \|\sum \sigma_i \phi(\s{\xx}{i})\|^{2r}_{H_p,k}\right)^{1/r}, &\forall r\ge 1 \\
&= \left(\sum_{k=1}^{p+1} \left(\sigma^\top K_{p,k} \sigma\right)^r\right)^{1/r}
\end{align*}
By concavity we then obtain
\begin{align*}
 \EE{ \left(\sup_{H_p\in H_{\C_{p,n}}} {\|\sum \sigma_i \phi(\s{\xx}{i})\|^2_{H_p}}\right)}&\le \left(\sum_{k=1}^{p+1}\EE{\left( \sigma^\top K_{p,k} \sigma\right)^r}\right)^{1/r}
\end{align*}
By Lemma 1 in \cite{cortes2010generalization}, we have the following inequality 
\[\EE{\left( \sigma^\top K_{p,k} \sigma\right)^r} \le (2 r \mathrm{Tr}(K_{p,k}))^r\]
Also, since $\mathrm{Tr}(K_{p,k}) \le |S_p|$ we obtain that for all $r\ge 1$
\begin{align*}
\left(\sum_{k=1}^{p+1}\EE{\left( \sigma^\top K_{p,k} \sigma\right)^r}\right)^{1/r}&\le (p(2r|S_p|)^r)^{1/r}& \textrm{Set $r=\log p$} \\
& =(2e(\log p |S_p|))
\end{align*}

Overall we obtain that
\begin{align*}
|S|\rad(\C_{\oplus_n}(B),S)&\le B \sqrt{\sum_{p=1}^n (2e (\log p |S_p|)) }  \\
&\le  B \log n\sqrt{2e \sum_{p=1}^n |S_p|\log n} \\
&= B \sqrt{2e|S|\log n} 
\end{align*}
\end{proof}

\subsection{Putting it all together}\label{sec:prfmkl}

Consider the optimization problem in \cref{eq:lambda} with $\lambda=\frac{\epsilon}{nB^2}$. Note that by assumption that $\ell$ is bounded by $1$ at $\ww=0$ we have in particular that the minimizer obtain an objective smaller than $1$ (which is the objective obtained by $\ww=0$. In particular if $f_{H^*, \ww^*}$  minimizes \cref{eq:lambda} up to $\frac{\epsilon}{2}$ error then $\|\ww\|\le \sqrt{\frac{n}{\epsilon}}B$, and hence $f_{H^*,\ww^*}\in \C_{\oplus_n}(\sqrt{\frac{n}{\epsilon}B})$. Also for every solution $f_{H,\ww}\in C_{\oplus_n}(\sqrt{n} B)$, using the generalization bound in \cref{lem:generalization} we obtain that w.p. $2/3$, if $S=O(nB \sqrt{\frac{\log n}{m}})$:
\begin{align*}
\mathcal{L}_{D}(f_{H^*,\ww^*})&\le \mathcal{L}_{S}(f_{H^*,\ww^*}) +\epsilon \\
&\le \frac{\lambda}{2}\|\ww^*\|^2 +\mathcal{L}_{S}(f_{H^*,\ww^*}) +\epsilon \\
& \le \min_{f_{H,\ww}\in \C_{\oplus_n}(B)}\frac{\lambda}{2}\|\ww\|^2 +\mathcal{L}_{S}(f_{H,\ww}) +\epsilon \\
&\le \min_{f_{H,\ww}\in \C_{\oplus_n}(B)}\mathcal{L}_{S}(f_{H,\ww}) +2\epsilon\\
&\le  \min_{f_{H,\ww}\in \C_{\oplus_n}(B)}\mathcal{L}_{D}(f_{H,\ww}) +3\epsilon
\end{align*}
%!TEX Root=../kernel18.tex
\section{Proof of \cref{thm:solid}} \label{prf:solid}
\solid*
In this section, we extend the algorithm from previous sections to arbitrary distributions with marginals supported over the solid hypercube $[0,1]^n \subseteq \R^n$. This captured kernel learning over any bounded subset of $\R^n$ up to rescaling.

\ignore{
for every $\s{x}{1},\s{x'}{1},\s{x}{2},\s{x'}{2} \in [0,1]^n$, $|k(\s{x}{1},\s{x}{2}) - k(\s{x'}{1},\s{x'}{2})| \leq L \cdot (\|\s{x}{1}-\s{x'}{1}\| + \|\s{x}{2}-\s{x'}{2}\|).$}
\ignore{
\begin{remark}
All \regular kernels over $[0,1]^n$ used in the literature are Lipchitz continuous with reasonably small Lipschitz constants $L$. For example, the (normalized) degree $d$ polynomial kernel $k(x,y) = \frac{1}{2^d}(\langle x, y \rangle +1)^d$ is $1$-Lipschitz continuous and the Gaussian radial basis function $k(x,y) = e^{-\frac{\|x-y\|_2^2}{\sigma}}$ is $\frac{1}{\sigma}$-Lipschitz continuous.
\end{remark}}

Our idea is essentially discretization of the solid hypercube in order to view it as a hypercube in a somewhat larger dimension. We thus define the following useful object.

\begin{definition}[$\epsilon$-Hypercube Embedding]
Fix an $\epsilon >0$. A pair of functions $\{\Psi_1,\Psi_2\}:[0,1]^n \rightarrow \zo^{nt}$ is said to be an $\epsilon$-Hypercube pair embedding of the unit cube in $nt$ dimensions, if for every $\s{x}{1}, \s{x}{2} \in [0,1]^n$:
$|\langle \s{x}{1},\s{x}{2} \rangle - \frac{1}{t} \langle \Psi_1(\s{x}{1}), \Psi_2(\s{x}{2}) \rangle| \leq \epsilon.$
\end{definition}

It is easy to construct $\epsilon$-Hypercube pair embeddings of $[0,1]^n$. We start with an embedding of the unit interval as given by the following lemma.
\begin{lem}[$\epsilon$-Hypercube Embedding of the Unit Interval] \label{lem:embedding-interval}
Fix an $\epsilon > 0$. There exists a $t = \Theta(\log{\frac{1}{\epsilon}}/\epsilon^2)$ and an efficiently computable randomized maps  $\psi_i:[0,1] \rightarrow \zo^{t}$ such that for any $x_1,x_2\in [0,1]$, $| x_1x_2 - \frac{1}{t} \langle \psi_1(x_1), \psi_2(x_2) \rangle |\leq 2\epsilon $.
\end{lem}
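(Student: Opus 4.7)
The plan is to build $\psi_1$ and $\psi_2$ coordinate-by-coordinate using independent random thresholds. Draw independent pairs $(u_i,v_i) \in [0,1]^2$ with each $u_i, v_i$ uniform and all $2t$ coordinates mutually independent, and define
\[ \psi_1(x)_i = \mathbf{1}[u_i \le x], \qquad \psi_2(x)_i = \mathbf{1}[v_i \le x]. \]
Since $u_i$ and $v_i$ are independent, $\mathbb{E}[\psi_1(x_1)_i \psi_2(x_2)_i] = \Pr[u_i \le x_1]\Pr[v_i \le x_2] = x_1 x_2$, so $\tfrac{1}{t}\langle \psi_1(x_1), \psi_2(x_2)\rangle$ is an average of $t$ i.i.d.\ Bernoulli($x_1 x_2$) random variables.

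For any fixed pair $(x_1,x_2)$, Hoeffding's inequality yields
\[ \Pr\!\left[\Bigl|\tfrac{1}{t}\langle \psi_1(x_1), \psi_2(x_2)\rangle - x_1 x_2\Bigr| > \epsilon\right] \le 2 e^{-2 t \epsilon^2}. \]
To upgrade this to a statement that holds simultaneously for all $x_1, x_2 \in [0,1]$, I would discretize the unit interval to a grid $G_\epsilon = \{0, \epsilon, 2\epsilon, \ldots, 1\}$ of cardinality $O(1/\epsilon)$, apply the above tail bound to each of the $O(1/\epsilon^2)$ pairs in $G_\epsilon \times G_\epsilon$, and take a union bound. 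Choosing $t = \Theta(\log(1/\epsilon)/\epsilon^2)$ drives the overall failure probability below any constant, so there exists a realization of the thresholds (indeed, a random one succeeds with high probability) for which the $\epsilon$-estimate holds uniformly on the grid.

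The final step transfers the guarantee from the grid to all of $[0,1]^2$ by a sandwich argument. The key observation is that both $x \mapsto xy$ and $x \mapsto \tfrac{1}{t}\langle \psi_1(x), \psi_2(y)\rangle$ are nondecreasing in each argument (the latter as a sum of monotone step functions of $x$ and $y$). Given arbitrary $x_1, x_2 \in [0,1]$, round each to its nearest grid neighbors $x_j^{-} \le x_j \le x_j^{+}$ with $x_j^{+} - x_j^{-} \le \epsilon$. Monotonicity yields
\[ \tfrac{1}{t}\langle \psi_1(x_1^-), \psi_2(x_2^-)\rangle \;\le\; \tfrac{1}{t}\langle \psi_1(x_1), \psi_2(x_2)\rangle \;\le\; \tfrac{1}{t}\langle \psi_1(x_1^+), \psi_2(x_2^+)\rangle. \]
Combining the grid-wise $\epsilon$-approximation with the elementary inequality $|x_1^{\pm} x_2^{\pm} - x_1 x_2| \le \epsilon$ (product is $1$-Lipschitz on $[0,1]^2$ in each coordinate) gives the desired bound $|\tfrac{1}{t}\langle \psi_1(x_1), \psi_2(x_2)\rangle - x_1 x_2| \le 2\epsilon$.

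There is no real obstacle here; the only subtlety is recognizing that a naive union bound over uncountably many $(x_1,x_2)$ is circumvented by monotonicity of both sides, which lets a grid of size $1/\epsilon$ suffice. Efficient computation is automatic since each $\psi_j(x)$ requires $t$ threshold comparisons and the thresholds can be sampled once in preprocessing.
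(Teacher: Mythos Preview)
Your argument is correct and follows the same Hoeffding-plus-union-bound-over-a-grid template as the paper, but with a different construction of the maps. The paper first rounds each $x$ to a multiple of $\epsilon/3$ and then, at each grid point $\bar x$, samples a fresh $\mathrm{Bernoulli}(\bar x)$ vector; your threshold construction $\psi_1(x)_i=\mathbf{1}[u_i\le x]$ instead couples all values of $x$ through the same randomness, producing coordinatewise monotone maps and enabling the sandwich step. This is a clean variant: it defines $\psi_i$ on all of $[0,1]$ directly and replaces the paper's ``discretize first, then sample'' with ``sample once, then argue via monotonicity''. Both routes give $t=\Theta(\log(1/\epsilon)/\epsilon^2)$.

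One small arithmetic slip: the product $(x_1,x_2)\mapsto x_1x_2$ is $1$-Lipschitz in each coordinate separately, but when both coordinates shift by $\epsilon$ the product can change by up to $2\epsilon$, not $\epsilon$. So as written your sandwich yields $\lvert \tfrac{1}{t}\langle\psi_1(x_1),\psi_2(x_2)\rangle - x_1x_2\rvert \le 3\epsilon$ rather than $2\epsilon$. Taking the grid spacing (or the Hoeffding deviation) to be $\epsilon/2$ restores the stated constant without affecting $t$.
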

\begin{proof}
Let $\bar{x}$ for any $x \in [0,1]$ denote the value obtained by rounding down to the nearest multiple of $\epsilon/3$. Then, notice that $|x_1x_2-\bar{x_1}\bar{x_2}| \leq \epsilon.$ Next, for every $\bar{x}$, choose $\psi_i({\bar{x}}) \in \zo^t$ by setting $\psi_i({\bar{x}})_j$ independently with probability $\bar{x}$ to be $1$ and $0$ otherwise. Then, notice that $\E[\langle \psi_1({\bar{x_1}}), \psi_2({\bar{x_2}}) \rangle] = t\bar{x_1} \bar{x_2}$. Further,  for any fixed $\bar{x_1}, \bar{x_2}$, $\Pr[ |\langle \psi_1({\bar{x_1}}), \psi_2({\bar{x_2}}) \rangle - t\bar{x_1} \bar{x_2}| > t \epsilon] \leq \epsilon^2/100$ for some $t = \Theta(\log{\frac{1}{\epsilon}}/\epsilon^2)$. By a union bound, for every $\bar{x_1}, \bar{x_2}$ in the discretized interval $[0,1]$, we have: $|\langle \psi_1({\bar{x_1}}), \psi_2({\bar{x_2}}) \rangle - t\bar{x_1} \bar{x_2}| \leq t \epsilon$ with probability at least $2/3$ as required.
\end{proof}

We can now use \cref{lem:embedding-interval} to obtain an $\epsilon$-Hypercube Embedding of $[0,1]^n$.

\begin{lem}[$\epsilon$-Hypercube Embedding of the Unit Ball]
For any $\epsilon > 0$, there's an efficiently computable explicit randomized map that with probability at least $2/3$ outputs an $\epsilon$-Hypercube Embedding of $[0,1]^n$, with $t= O(\frac{n^2}{\epsilon^2}\log \frac{n}{\epsilon})$.
\end{lem}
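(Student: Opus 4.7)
The plan is to apply the unit interval embedding from \cref{lem:embedding-interval} coordinate-wise. Concretely, for each coordinate $j \in [n]$, draw an independent copy $(\psi_1^{(j)}, \psi_2^{(j)}) : [0,1] \to \zo^t \times \zo^t$ of the randomized embedding of \cref{lem:embedding-interval}, instantiated at a sharpened accuracy parameter $\epsilon' = \epsilon/(Cn)$ for an appropriate constant $C$. Define
\[
\Psi_i(x) \;=\; \bigl(\psi_i^{(1)}(x_1),\, \psi_i^{(2)}(x_2),\, \ldots,\, \psi_i^{(n)}(x_n)\bigr) \;\in\; \zo^{nt},
\]
so that $\tfrac{1}{t}\langle \Psi_1(x^{(1)}), \Psi_2(x^{(2)})\rangle = \sum_{j=1}^n \tfrac{1}{t}\langle \psi_1^{(j)}(x^{(1)}_j), \psi_2^{(j)}(x^{(2)}_j)\rangle$. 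If each of the $n$ coordinate-wise embeddings approximates $x^{(1)}_j x^{(2)}_j$ to within $\epsilon/(Cn)$, then summing over the $n$ coordinates accumulates an error of at most $\epsilon/C$, which for $C$ suitably chosen is within the desired $\epsilon$.

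The subtlety is ensuring a uniform probability bound over all of $[0,1]^n$. The interval embedding only yields correctness on the discretization of $[0,1]$ at scale $\Theta(\epsilon/n)$ via a union bound over the resulting $O(n/\epsilon)$ grid points. Since we need all $n$ coordinate embeddings to be simultaneously accurate on their discretizations, I would boost the confidence in \cref{lem:embedding-interval} by an additional factor of $n$ (via Chernoff with a correspondingly larger $t$, replacing the per-pair failure probability $(\epsilon')^2/100$ with $(\epsilon')^2/(100 n)$). Since the original $t = \Theta(\log(1/\epsilon')/(\epsilon')^2)$, this only inflates the length by a further logarithmic factor, yielding
\[
t \;=\; \Theta\!\left(\frac{n^2}{\epsilon^2}\log\frac{n}{\epsilon}\right),
\]
as required. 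A union bound over the $n$ independent coordinate embeddings then shows that with probability at least $2/3$, every $\psi_i^{(j)}$ is accurate on its entire discretized domain.

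Finally, to transfer correctness from the discretization to arbitrary $x^{(1)}, x^{(2)} \in [0,1]^n$, I would round each coordinate down to the nearest multiple of $\epsilon/(3n)$ to obtain $\bar{x}^{(1)}, \bar{x}^{(2)}$. Since $x_i, \bar{x}_i \in [0,1]$, the triangle inequality gives $|x^{(1)}_j x^{(2)}_j - \bar{x}^{(1)}_j \bar{x}^{(2)}_j| \leq 2\epsilon/(3n)$ per coordinate, so the total rounding error is at most $2\epsilon/3$. Combining this with the embedding error $\leq \epsilon/3$ (after setting $C=3$) gives the required $\epsilon$ accuracy. I do not expect any genuine obstacle here beyond carefully apportioning the error budget between rounding and embedding accuracy; the proof is essentially a coordinate-wise lifting of \cref{lem:embedding-interval} with a union bound over both coordinates and discretized pairs.
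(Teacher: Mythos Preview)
Your proposal is correct and follows the same coordinate-wise lifting of \cref{lem:embedding-interval} as the paper, with a triangle-inequality bound summing the per-coordinate errors. The one substantive difference is in how you handle the randomness: you draw an \emph{independent} copy of the interval embedding for each of the $n$ coordinates, which forces you to boost the per-copy confidence and union-bound over the $n$ copies. The paper instead applies the \emph{same} randomized interval embedding $\psi_i$ to every coordinate; since \cref{lem:embedding-interval} already guarantees (with probability $2/3$) accuracy for \emph{all} pairs $x_1,x_2\in[0,1]$ simultaneously, reusing it across coordinates requires no further union bound or confidence boosting. This is a small but clean simplification---your extra $\log n$ from boosting is harmlessly absorbed into the existing $\log(n/\epsilon)$, so the final $t$ is unchanged. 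Your final paragraph on rounding is also redundant (the interval lemma already covers all of $[0,1]$, not just the grid), though not wrong.
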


\begin{proof}
Let $\psi_i$ be a pair of $\epsilon/n$-Hypercube Embedding of the unit interval in $t$ dimensions. Let $\Psi_i:[0,1]^n\rightarrow \zo^{nt}$ be defined as $\Psi_i(\xx) = \psi_i^{\otimes n}(\xx_1) = (\psi_i(\xx_1), \psi_i(\xx_2), \ldots, \psi_i(\xx_n))$ for every $\xx$. Then, we claim that $\Psi_i$ is a pair of $\epsilon$-Hypercube embedding of the unit ball. To verify this, observe that $| \langle \s{x}{1},\s{x}{2} \rangle - \langle \Psi_1(\s{x}{1}), \Psi_2(\s{x}{2}) \rangle| \leq \sum_{i \leq n} | \s{x}{1}_i\s{x}{2}_i - \langle \psi_1(\s{x}{1}_i),\psi_2(\s{x}{2}_i) \rangle| \leq n \cdot \epsilon/n = \epsilon.$
\end{proof}

\ignore{
The following lemma analyze the effect of discretization in the errors incurred and will be useful in proving \ref{thm:optimal-kernel-ball}.

\begin{lem}
Every function $h \in \C_L(B)$ is $\Theta(BL/\epsilon)$-Lipschitz. 
\end{lem}
\begin{proof}
Every hypothesis in $h \in \C_L(B)$ can be written as $h(x) = \sum_{i \leq m} \alpha_i \cdot K(\s{x}{i}, \xx)$ where $K$ is a \regular $L$-Lipschitz continuous kernel and (using Fact \ref{fact:pegasos-sgd-svm-guarantee}) $\sum_{i\leq m} |\alpha_i| \leq \Theta(B/\epsilon)$. Since each $K$ is $L$-Lipschitz, $h$ is $\Theta(BL/\epsilon)$-Lipschitz.
\end{proof}

\begin{lem} \label{lem:error-of-Lipschitz-functions}
Let $h$ be any $q$-Lipchitz function. Then, $\cL_{\cD}(h) \leq \cL_{\cD^{\Psi}}(h) + q(\epsilon^2/BL)$
\end{lem}
\begin{proof}
It is enough to bound $\|\ell(h(\xx), y)- \ell(h(\Psi(\xx)), y)| \leq |h(\xx)-h(\Psi(\xx)| \leq q \|\xx-\Psi(\xx)\|_2 \leq q \sqrt{(\langle \xx, \Psi(\xx) \rangle^2} \leq q \frac{\epsilon^2}{BL}.$
\end{proof}
}
We can now complete the proof of Theorem \ref{thm:solid}.

\begin{proof}[Proof of Theorem \ref{thm:solid}]
We first describe our algorithm to learn the class of linear classifiers associated with $L$-Lipschitz continuous \regular kernels over the solid cube. 

For every distribution $\cD$ over $[0,1]^n \times \Y$, via the $\frac{\epsilon^2}{100BL}$-hypercube embedding $\Psi_2:[0,1]^n \rightarrow \zo^{nt}.$ , we obtain a distribution $\cD^{\Psi_2}$ over $\zo^{nt} \times \Y$, where $t=\tilde{O}(\frac{n^2}{\epsilon^4}B^2L^2)$. By definition of $\cD^{\Psi_2}$, we can simulate access to i.i.d. samples from $\cD^{\Psi_2}$ given access to i.i.d. samples from $\cD$ and use \ref{thm:mkl} to obtain an efficient algorithm with sample complexity $\tilde{O}(\frac{n^3 B^4L^2}{\epsilon^7})$ to find a hypothesis $h^*$ that has error at most $\opt_{\cD^{\Psi}}(\C_{nt}(B^2/\epsilon)) + \epsilon.$ We will then be done if we can show: \begin{align*}\opt_{\cD}(\Cs_n(B))  \leq \opt_{\cD^{\Psi}}(\C_{nt}(B^2/\epsilon)) + O(\epsilon B^2 L).\end{align*}

Then we get the desired result by taking $\epsilon \to \frac{\epsilon}{B^2 L}$. 
First, using fact (\ref{fact:pegasos-sgd-svm-guarantee}) we know there exists an $\epsilon$-approximate solution $h^*$ such that
\begin{align*}
h^*(\xx) = \sum \alpha_i k(\s{x}{i},\xx), &\quad \|\alpha\|_1\le O(B^2/\epsilon)
\end{align*} 

Note that if $k(\s{x}{1},\s{x}{2})=g(\dotp{\s{x}{1}}{\s{x}{2}})$ is a kernel over $[0,1]^n$ then we can define over the hypercube $\{0,1\}^{nt}$ a \regular kernel:
\[\tilde{k}(\s{\bar{x}}{1},\s{\bar{x}}{2})=g(\frac{1}{t}\dotp{\s{\bar{x}}{1}}{\s{\bar{x}}{2}})).\]
Let $\tilde{h}(\bar{\xx})=\sum \alpha_i \tilde{k}(\Psi_1(\s{x}{i}),\bar{\xx})$. Note that $\frac{1}{t} \left<\Psi_1(\s{x}{1}),\Psi_2(\s{x}{2})\right> < n$, hence we have by $L$-Lipschitness of $g$:
\begin{align*}
\|h^*(\xx) - \tilde{h}(\Psi_2(\xx))\| \le \sum |\alpha_i| |k(\s{x}{i},\xx)-\tilde{k}(\Psi_1(\s{x}{i}),\Psi_2(\xx))|\le O(\epsilon B^2L)
\end{align*}
\ignore{
Using Lemma \ref{lem:error-of-Lipschitz-functions} for $h^{*}$ we obtain that                                                         1) $\cL_{\cD}(h^*) \leq \cL_{\cD^{\Psi}}(h^*) + 3\epsilon$. Using it for every $h \in \C(B)$, we obtain that $\opt_{\cD}(\C(B)) \leq \opt_{\cD^{\Psi}}(\C(B)) + \epsilon.$ Combining the two results we obtain that $\cL_{\cD}(h^*) \leq \opt_{\cD}(\C(B)) + 3 \epsilon.$}\end{proof}
\ignore{
\begin{remark}
While we restricted our attention to strongly \regular kernels, the result may be extended to larger classes of kernels. For example the RBF kernel has the form $k(\xx,\yy)=f(\xx) g(\dotp{\xx}{\yy}) f(\yy)$ (where $f(\xx)=\exp(-\|\xx\|^2/\sigma)$ and $g(\dotp{\xx}{\yy})=\exp(2\dotp{\xx}{\yy}/\sigma)$. We can, for example, discretize the unit--cube and divide the problem into $\mathrm{poly}(n,1/\epsilon)$ learning problems over domains with constant norm, i.e. $f(\xx)=c$. 

 On each learning problem $f(\xx)$ is then constant, hence leads to learning problems with a strongly \regular kernel\footnote{note that if $f(x)g(x,y)f(y)$ is a kernel then for any $\tilde{f}$: $\tilde{f}(x)g(x,y)\tilde{f}(y)$ is a kernel.}.
This result however, would still lead to exponential dependence on the width of the RBF kernel, hence in some sense is weak. It would be interesting to find out if by exploiting new association schemes, one can obtain tighter guarantees for learning \regular kernels, or other rich families, over the unit cube.
\end{remark}
}
%!TEX Root=../kernel18.tex
\section{Proof of \cref{thm:lower}}\label{prf:lower}
\lowb*
We next set out to show that no \emph{fixed} regular kernel can uniformly approximate conjunctions, this result relies on a similar result by \cite{klivans2007lower}, who showed that there is no linear subspace of dimension $d=2^{o(\sqrt{n})}$ whose linear span can uniformly approximate all conjunctions. Using the Johnson Lindenstrauss style low-dimensional embedding, we prove that an existence of a kernel that uniformly approximates all conjunctions immediately implies a low dimensional RKHS embedding with this property. \cref{thm:lower} then becomes an immediate corollary of \cref{thm:main}.
We let $\con_n=\{ c_{I'}(\xx): c_{I'}(\xx) = \mathop\wedge_{i\in I} \xx_i~ I\subseteq [n]\}$ denote the class of conjunctions over the hypercube $\X_n$. %In this section, we show that even though we can efficiently learn the optimal kernel -- there's a marginal distribution over examples where all \regular kernels fail to learn the class with a running time better than $2^{O(\sqrt{n})}$.

Our lower bound works in two steps: First we show that no \emph{fixed} \regular kernel can uniformly approximate conjunctions, this result relies on a similar result by \cite{klivans2007lower}, who showed that there is no linear subspace of dimension $d=2^{o(\sqrt{n})}$ whose linear span can uniformly approximate all conjunctions. Using the Johnson Lindenstrauss style low-dimensional embedding, we prove that an existence of a kernel that uniformly approximates all conjunctions immediately implies a low dimensional RKHS embedding with this property. As a second step we show, using minmax argument and convexity of $F_{S,\lambda}$, that for some distribution, all \regular kernels must fail.

\begin{lem}\label{lem:kernelklivans}
For sufficiently large $n$, there exists a conjunction $c(\xx)\in \con_n$ and a layer $S_{p,n}= \{\xx\in \{0,1\}^n, \sum \xx_i =p \}$ such that for every fixed \regular kernels $k$, if $B_n = 2^{o(\sqrt{n})}$:
\begin{align*}
\min_{\|\ww\|< B_n}\max_{\xx\in S_{p,n}}|c(\xx)- \dotp{\ww}{\phi(\xx)}|> \frac{1}{6}
\end{align*}
\end{lem}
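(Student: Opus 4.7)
The plan is to argue by contradiction, combining the universal kernel from \cref{thm:main} (as a lifting step), a Johnson--Lindenstrauss compression of the universal RKHS, and a single-layer variant of the approximation lower bound of \cite{klivans2007lower}. Suppose the conclusion fails: then for every conjunction $c\in\con_n$ and every layer $p\in\{1,\ldots,n\}$ there exist a \regular kernel $k_{c,p}$ and a vector $\ww_{c,p}\in H_{k_{c,p}}$ with $\|\ww_{c,p}\|\le B_n=2^{o(\sqrt n)}$ satisfying $\max_{\xx\in S_{p,n}}|c(\xx)-\dotp{\ww_{c,p}}{\phi_{k_{c,p}}(\xx)}|\le 1/6$.

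By \cref{thm:main}, each $\ww_{c,p}$ lifts to a vector $\vv_{c,p}$ in the \emph{single} universal Hilbert space $\universal_n$ with $\|\vv_{c,p}\|_{\universal_n}\le n^{3/2}B_n=2^{o(\sqrt n)}$ and agreeing with $\ww_{c,p}$ on $\X_n$. This places all approximators into one Hilbert space, at which point I would apply the Johnson--Lindenstrauss lemma to the finite family $\{\phi_{\universal_n}(\xx):\xx\in S_{p,n}\}\cup\{\vv_{c,p}:c\in\con_n\}$ for a fixed layer $p=\lfloor n/2\rfloor$. Since all norms are bounded by $R=n^{3/2}B_n$ and there are $N\le 2^{n+1}$ points, the standard bound $d=O(R^4\log N/\delta^2)$ for preserving every pairwise inner product up to additive error $\delta$ gives a random linear map $\Pi:\universal_n\to\R^d$ with $d=2^{o(\sqrt n)}$ and $\delta=1/12$. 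Setting $\eta_i(\xx)=(\Pi\phi_{\universal_n}(\xx))_i$ produces a fixed set of $2^{o(\sqrt n)}$ basis functions on $S_{p,n}$, and choosing coefficients $\alpha^{(c)}=\Pi\vv_{c,p}$ yields $\max_{\xx\in S_{p,n}}|c(\xx)-\sum_i \alpha^{(c)}_i\eta_i(\xx)|\le 1/6+\delta<1/3$ for every conjunction $c$.

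The final step is to contradict the \cite{klivans2007lower} lower bound: any fixed family of $2^{o(\sqrt n)}$ basis functions must fail to approximate some conjunction under some distribution to $\ell_1$-error below $1/3$. Since uniform error below $1/3$ implies $\ell_1$-error below $1/3$ under every distribution, this yields the desired contradiction. The main obstacle I anticipate is invoking the Klivans--Sherstov bound on a single hypercube slice $S_{p,n}$ rather than the full cube $\zo^n$; I expect this to follow from their pattern-matrix / approximate-rank argument applied to a hard conjunction on $O(\sqrt n)$ of the coordinates (whose restriction to $S_{\lfloor n/2\rfloor,n}$ remains rich enough to retain the full approximate-rank lower bound), but transferring their parameters carefully to the slice is the delicate point. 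A secondary bookkeeping issue is ensuring that both $R^4=n^6 B_n^4$ and $\log N=O(n)$ remain $2^{o(\sqrt n)}$ after multiplication, which is immediate since $\log n=o(\sqrt n)$.
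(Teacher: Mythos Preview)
Your overall strategy—contradiction plus JL compression to a low-dimensional feature set, then invoking the Klivans--Sherstov approximate-rank lower bound—is exactly the paper's strategy. But two points of divergence matter.

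\textbf{Single-slice versus full-cube.} Your plan fixes one layer $p=\lfloor n/2\rfloor$ and then needs a version of Klivans--Sherstov that holds on the slice $S_{p,n}$. You flag this as ``the delicate point,'' and it is a real gap: their bound is stated for the full hypercube, and transferring it to a single slice is not in the paper nor obviously immediate. The paper avoids this issue altogether. After producing, for each pair $(p,v)$ (layer $p$, number of literals $v$), a family of $2^{o(\sqrt n)}$ basis functions $g^{(p,v)}_1,\ldots,g^{(p,v)}_d$ on $S_{p,n}$, it simply \emph{extends each $g^{(p,v)}_i$ by zero} outside $S_{p,n}$ and takes the union over all $O(n^2)$ pairs $(p,v)$. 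This yields $O(n^2\cdot 2^{o(\sqrt n)})=2^{o(\sqrt n)}$ basis functions on the full cube that linearly approximate every conjunction everywhere, contradicting the standard Klivans--Sherstov statement directly. Your argument patches the same way: instead of restricting to one layer, run JL on $\{\phi_{\universal_n}(\xx):\xx\in\zo^n\}\cup\{\vv_c:c\in\con_n\}$, where $\vv_c$ is obtained by summing the layer-wise approximators $\vv_{c,p}$ across $p$ inside the direct-sum structure of $\universal_n$ (this costs only an extra factor of $\sqrt n$ in norm). Then no slice-specific lower bound is needed.

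\textbf{Universal kernel versus permutation symmetry.} You use \cref{thm:main} to pull every kernel $k_{c,p}$ into the single universal space $\universal_n$ before compressing. The paper does \emph{not} invoke \cref{thm:main} here; instead it exploits the permutation invariance of any \regular kernel: if a \regular kernel $k$ admits a norm-$B_n$ approximator for one conjunction $c_{I'}$ with $|I'|=v$ on layer $p$, then by conjugating with permutations the \emph{same} kernel $k$ admits norm-$B_n$ approximators for \emph{all} conjunctions with $v$ literals on that layer. Thus one needs only a single kernel per $(p,v)$ pair, and one JL projection per $(p,v)$. Your route via \cref{thm:main} is logically fine (there is no circularity—\cref{thm:main} is proved independently), and arguably slicker, but the paper's symmetry argument keeps the lemma self-contained and is worth knowing: it is the mechanism by which ``some kernel per conjunction'' collapses to ``one kernel for all conjunctions of a given size'' without appealing to the universal construction.
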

\begin{proof} Assume to the contrary. Fix $p$ and consider $c_{I'}$ a conjunction with $|I|=v$ for some fixed $v\le p$.
We obtain that for all $\|\xx\|=p$, there is some $\|\uu_{I'}\|= 2^{o(\sqrt{n})}$ and $k$, such that:
\begin{align*}
\left|c_{I'}(\xx)- \dotp{\uu}{\phi(\xx)}\right|\le \frac{1}{6}.
\end{align*}
Since $\|\phi(\xx)\|<1$ and $\|\phi(\xx)\|$ depends only on $p$ we can, by choosing $\ww_{I'} = \|\phi(\xx)\|\cdot \uu$, obtain a vector $\ww_{I'}$ such that:
\begin{align*}
\left|c_{I'}(\xx)- \ww_{I'}\cdot \frac{\phi(\xx)}{\|\phi(\xx)\|}\right|\le \frac{1}{6}.
\end{align*}

By the representer theorem, we may assume that $\ww_I= \sum_{\|\s{x}{i}\|=p}\beta_i \phi(\s{x}{i})$ for some $\beta$. 
Since the kernel is \regular, and thus invariant under permutations, one can show that for every conjunction $c_{I}(\xx)$ with $|I|=v$ literals, we have that for some $\ww_{I}$:
\footnote{
Indeed, let $\pi$ be a permutation such that $\pi(I) = I'$. Then, $\ww_{I}=\sum_{\|\s{x}{i}\|=s} \beta_i \phi(\pi_{I,I}(\s{x}{i}))$. Further, $\|\ww_{I'}\|=\|\ww_I\|$ and clearly satisfies \eqref{eq:Iprime}, for all $\|\xx\|=p$.}
\begin{align}\label{eq:Iprime}
\left|c_{I}(\xx)- \ww_{I}\cdot \frac{\phi(\xx)}{\|\phi(\xx)\|}\right|\le \frac{1}{6}.
\end{align}

Next, since $c_{I}(\xx)\in \{-1,1\}$, we can rewrite \eqref{eq:Iprime} as :
\begin{align*}
\frac{5}{6\|\ww_I\|}<\frac{c_I(\xx)\ww_I\cdot \phi(\xx)}{\|\ww_I\|\cdot\|\phi(\xx)\|}<\frac{7}{6\|\ww_I\|}.
\end{align*}
We can apply JL Lemma (see for example (\cite{arriaga1999algorithmic} corollary 2), onto the kernel space, to construct a projection $T :H\to \mathbb{R}^d$ where $d=O(\|\ww\|^2\log 1/(\delta))$ such that w.p $(1-\delta)$, a uniformly random sample from the hypercube will satisfy:
\begin{align*}
\frac{1}{3\|\ww_I\|}<\frac{5}{12\|\ww_I\|}<\frac{c_I(\xx)T(\ww_I)\cdot T(\phi(\xx))}{\|T(\ww_I)\|\cdot\|T(\phi(\xx))\|}<\frac{7}{12\|\ww_I\|}<\frac{4}{3\|\ww_I\|}.
\end{align*}
Choosing $\delta=O(2^{-n})$ and applying union bound over all literals of size $v$, we obtain a subspace $d=O(2^{o(\sqrt{n})}n)$ such that for every $\xx$ in the hypercube.

\begin{align*}
| c_{I}(\xx)- \alpha_I\cdot T(\phi(\xx))| < \frac{1}{3}
\end{align*}
Where $\alpha_I = \|\ww_I\|\frac{T(\ww_I)}{\|T(\ww_I)\|}$.
Next consider the $d$ mappings $g_i(\xx) = \left(T(\phi(\xx))\right)_i$. We've shown that for some linear combination
\begin{align*}
|c_I(\xx) - \sum \alpha_{I,i} g_i(\xx)|<\frac{1}{3}
\end{align*}
Taken together we have shown that for an arbitrary size $p$ and arbitrary number of literals $v$ there exists a set of mapping $g^{(p,v)}_1,\ldots, g^{(p,v)}_d$ with $d=2^{o(\sqrt{n})}$ that can approximate within $\epsilon= \frac{1}{3}$ accuracy each conjunction on samples of size $p$. We can extend each mapping $g^{(p,v)}$ to the whole hypercube by considering
\begin{align*}
g^{(p,v)}(\xx) =\begin{cases} g^{(p,v)}(\xx) & \|\xx\|=p \\ 0 & \mathrm{o.w}\end{cases}
\end{align*}
Thus, taking a union of all $g^{(p,v)}$ we obtain a set of $O(n^2 2^{o(\sqrt{n})})$ mappings that can approximate each conjunction, uniformly over the hypercube. This contradicts the result of \cite{klivans2007lower} such that for every $2^{o(\sqrt{n})}$ dimensional subspace $V$, there's some conjunction which cannot be approximated by any element of $V$.
\end{proof}

Applying a minmax argument we can restate the result as follows

\begin{lem}\label{lem:lb}
For every fixed \regular kernel $k$, there exists a distribution $D$ over $\X_n$ and a conjunction $c(\xx)$ so that:
\begin{align*}\min_{\|\ww\|<B} \EE{|c(\xx)-\dotp{\ww}{\phi(\xx)}|}<\frac{1}{12}\end{align*} 
then $B=2^{(\Omega(\sqrt{n}))}$.
\end{lem}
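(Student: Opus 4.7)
The plan is to derive \cref{lem:lb} from \cref{lem:kernelklivans} by a straightforward min-max argument. The previous lemma gives a worst-case guarantee on a layer (no bounded-norm $\ww$ approximates $c$ in $\ell_\infty$ over $S_{p,n}$); here we want an average-case guarantee for a specific distribution. Since the $\ell_\infty$ norm over $S_{p,n}$ equals the supremum of $\EE_{\xx \sim D}[\cdot]$ over distributions supported on $S_{p,n}$, we will exchange the min and max to turn the existence of a bad point into the existence of a bad distribution.

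Concretely, fix the \regular kernel $k$ and let $c \in \con_n$ and $p$ be the conjunction and layer provided by \cref{lem:kernelklivans}. Define
\[ F(\ww, D) \;=\; \EE_{\xx \sim D}\bigl[\bigl|c(\xx) - \dotp{\ww}{\phi(\xx)}\bigr|\bigr], \]
for $\ww$ ranging over the closed ball $\{\ww \in H : \|\ww\| \le B\}$ and $D$ over the simplex $\Delta(S_{p,n})$ of distributions on $S_{p,n}$. The map $F$ is convex in $\ww$ (an expectation of absolute values of affine functions) and linear, hence concave, in $D$. By the representer theorem we may restrict $\ww$ to the finite-dimensional subspace $\sspan\{\phi(\xx) : \xx \in S_{p,n}\}$, which makes the norm ball compact; $\Delta(S_{p,n})$ is also compact and convex. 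Von Neumann's min-max theorem therefore yields
\[ \min_{\|\ww\| \le B}\; \max_{D \in \Delta(S_{p,n})} F(\ww, D) \;=\; \max_{D \in \Delta(S_{p,n})}\; \min_{\|\ww\| \le B} F(\ww, D). \]

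Maximizing the linear functional $D \mapsto F(\ww, D)$ over the simplex is attained at a vertex, i.e.\ a point mass, so the left-hand side equals $\min_{\|\ww\| \le B} \max_{\xx \in S_{p,n}} |c(\xx) - \dotp{\ww}{\phi(\xx)}|$. Whenever $B = 2^{o(\sqrt{n})}$, \cref{lem:kernelklivans} asserts that this quantity exceeds $1/6 > 1/12$. Hence the right-hand side also exceeds $1/12$, producing a distribution $D$ (dependent on $k$) such that every $\ww$ with $\|\ww\| \le B$ satisfies $\EE_{\xx \sim D}[|c(\xx) - \dotp{\ww}{\phi(\xx)}|] > 1/12$. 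Contrapositively, if some $\ww$ with $\|\ww\| < B$ achieves expected error below $1/12$, then $B = 2^{\Omega(\sqrt{n})}$.

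There is no serious obstacle here: the entire content of the lemma is the quantifier swap, and compactness/convexity hold by virtue of working in a finite-dimensional RKHS subspace over the finite layer $S_{p,n}$. The only mild subtlety is that the lemma is stated with a strict inequality $\|\ww\| < B$; we handle this by applying the argument to the closed ball of radius $B' < B$ and letting $B' \to B$, which costs nothing in the constant.
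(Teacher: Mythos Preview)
Your proposal is correct and follows essentially the same approach as the paper: both negate the statement and apply the minimax principle to swap the order of $\min_{\ww}$ and $\max_D$, reducing to \cref{lem:kernelklivans}. Your version is more careful in verifying the hypotheses of von Neumann's theorem (compactness via the representer theorem, convexity--concavity) and in handling the strict inequality on $\|\ww\|$, whereas the paper's proof simply invokes ``convexity of the objective in terms of $\ww$ and $D$'' and the minimax principle without further detail.
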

\begin{proof}
Indeed, the negation of the statement would yield that letting $\mathcal{D}$ be the family of all distributions over $\X_n$, then:

\begin{align*}\max_{D\sim \mathcal{D}}\min_{\|\ww\|<B} \mathbb{E}_{\xx\sim D}|c(\xx)-\dotp{\ww}{\phi(\xx)}|<\frac{1}{12}\end{align*} 
Exploiting the convexity of the objective in terms of $\ww$ and $D$ we can apply the minimax principle and obtain a contradiction to \cref{lem:kernelklivans}.
\end{proof}
\subsection{Putting it all together}
The proof is an immediate corollary of \cref{lem:lb} and the existence of a universal kernel as presented in \cref{thm:main}
\subsection{Learning Conjunctions via \regular kernels}\label{sec:upper-bound}
Given our lower bound for learning conjunctions through kernels, the first natural question is whether the upper bound $2^{\tilde{O}(\sqrt{n} \log{(1/\epsilon)})}$ is attainable using \regular kernel methods. The $L_1$ regression algorithm introduced in \cite{kalai2008agnostically} employs an observation of \cite{paturi1992degree} that conjunctions can be approximated in monomial space of degree $\tilde{O}(\sqrt{n} \log{(1/\epsilon)})$ to learn in time $2^{\tilde{O}(\sqrt{n} \log{(1/\epsilon)})}$. They also make the observation, that the algorithm may be implemented by an SVM-like convex formulation -- however their analysis relies on the dimension of the linear classifier being small. We show that using a slightly modified version of the polynomial kernel, standard SVM analysis can achieve the same learnability result. Such an analysis implies, in particular, thatwill succeed in achieving the same performance. 

We use a similar analysis to show an improved bound under distributional assumptions. We begin by stating the main fact exploited by all algorithms for learning conjunctions

\begin{fact}\cite{paturi1992degree}\label{fact:paturi}
For every conjunction $c(\xx)$ over the hypercube $\X_n$ there exists a polynomial $p_{I}(\xx)= \sum_{I\subseteq \{0,1\}^n} \alpha_{I} \prod_{i\in I} \s{x}{i}$ of degree $O(n^{\sqrt{n}\log 1/\epsilon})$. whose coefficient  satisfy $\sum \alpha_{I}^2 = 2^{\tilde{O}(\sqrt{n} \log{(1/\epsilon)})}$.
\end{fact}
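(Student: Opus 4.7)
The plan is to follow the classical approximation-theoretic construction of Paturi, using shifted/scaled Chebyshev polynomials to obtain a univariate approximant and then lifting it to a multilinear polynomial while tracking the $\ell_2$ norm of the coefficients.

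First I would reduce the statement to a univariate approximation problem. Fix a conjunction $c_I(\xx) = \prod_{i\in I} \xx_i$ with $k = |I|$, and write $t(\xx) := \sum_{i\in I}\xx_i$. On the Boolean hypercube we have $c_I(\xx)=1$ iff $t(\xx)=k$, and $c_I(\xx)=0$ otherwise. Thus it suffices to build a univariate polynomial $q:\R\to\R$ of degree $d$ such that $|q(k)-1|\le\epsilon$ and $|q(j)|\le\epsilon$ for $j\in\{0,1,\dots,k-1\}$; then $p(\xx):=q(t(\xx))$ approximates $c_I$ pointwise on $\X_n$ to within $\epsilon$.

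Second I would construct $q$ from the degree-$d$ Chebyshev polynomial of the first kind $T_d$. The key property of $T_d$ is that $|T_d(z)|\le 1$ for $z\in[-1,1]$, while $T_d(1+\gamma)\ge \tfrac12 e^{d\sqrt{2\gamma}}$ for small $\gamma>0$. Apply an affine change of variable that sends the set $\{0,1,\dots,k-1\}$ into $[-1,1]$ and sends the point $t=k$ to $1+\gamma$ with $\gamma=\Theta(1/k)$. Then $T_d$ evaluated at the shifted argument is bounded by $1$ on the first set and at least $e^{d/\sqrt{k}}/2$ at the point corresponding to $t=k$. Choosing $d=\Theta(\sqrt{k}\,\log(1/\epsilon))=\tilde O(\sqrt{n}\log(1/\epsilon))$ and rescaling $q=T_d/T_d(1+\gamma)$ gives the required univariate approximant.

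Third I would lift $q$ to a multilinear polynomial and bound its coefficients. Substituting $t(\xx)=\sum_{i\in I}\xx_i$ into $q$ yields a polynomial in the $\xx_i$ of total degree at most $d$; using $\xx_i^2=\xx_i$ on the hypercube we multilinearize, obtaining $p(\xx)=\sum_{S\subseteq [n],\,|S|\le d}\alpha_S\prod_{i\in S}\xx_i$. To bound $\sum_S\alpha_S^2$ I would (i) use the standard bound $\sum_j|c_j|\le 2^{O(d)}$ on the monomial coefficients $c_j$ of $T_d$ after the affine rescaling, (ii) expand each power $t(\xx)^j=(\sum_{i\in I}\xx_i)^j$ via the multinomial theorem and multilinearize, and (iii) observe that the contribution to any particular $\alpha_S$ is a sum over $j\ge|S|$ of the $c_j$ times a combinatorial count of ordered sequences in $I^j$ whose support equals $S$; this count is at most $|I|^j\le n^d$. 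The total number of monomials of degree $\le d$ is at most $n^d=2^{\tilde O(\sqrt n\log(1/\epsilon))}$, and each coefficient is at most $2^{O(d)}\cdot n^d=2^{\tilde O(\sqrt n\log(1/\epsilon))}$, whence $\sum_S\alpha_S^2\le 2^{\tilde O(\sqrt n\log(1/\epsilon))}$ as claimed.

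The main obstacle is the third step: a naive expansion of $q(t(\xx))$ loses polynomial factors at every stage, and one must be careful to collect cross terms so that the final bound on $\sum_S \alpha_S^2$ stays $2^{\tilde O(\sqrt n\log(1/\epsilon))}$ rather than degrading to $2^{\tilde O(\sqrt n\log^2(1/\epsilon))}$ or worse. This is handled by working with the normalized Chebyshev polynomial directly, using the explicit bound $\|T_d\|_1\le 2^{O(d)}$ on the sum of absolute values of its coefficients after the affine change of variable, and by observing that the $\ell_2$ norm in the multilinear basis is controlled by the $\ell_1$ norm in the monomial basis of $q$ times the maximum $\ell_2$-mass produced by a single monomial $t^j$ after multilinearization.
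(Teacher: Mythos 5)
The paper gives no proof of this statement---it is quoted as a Fact from \cite{paturi1992degree}---and your Chebyshev-polynomial construction (map $\{0,\dots,k-1\}$ into $[-1,1]$, exploit the growth of $T_d$ at $1+\Theta(1/k)$ to get a degree-$\tilde{O}(\sqrt{n}\log(1/\epsilon))$ univariate approximant, then multilinearize and bound each of the at most $n^{O(d)}$ coefficients by $2^{O(d)}n^{O(d)}$) is exactly the standard argument behind that citation, and your accounting of the $\ell_2$ coefficient mass is correct. Note only that the statement as printed is garbled: $O(n^{\sqrt{n}\log 1/\epsilon})$ is the number of monomials rather than the degree, and the pointwise $\epsilon$-approximation guarantee is omitted; you have correctly proved the intended statement.
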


\begin{theorem}\label{thm:cupper}
For every layer of the hypercube $S_{p,n}$, 
There is a \regular kernel $k$ and an embedding $\phi :S_{p,n} \to H$ such that for every conjunction $c_{I}(\xx)$ there is $\|\ww\| = 2^{\tilde{O}(\sqrt{n} \log{(1/\epsilon)})}$ such that
\begin{align*}
|c_{I}(\xx)- \dotp{\ww}{\phi(\xx)}| < \epsilon
\end{align*}
\end{theorem}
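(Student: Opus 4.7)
The plan is to build the kernel directly from the Paturi polynomial stated in \cref{fact:paturi}. Let $d = \Theta(\sqrt{n}\log(1/\epsilon))$ be the degree from that fact, and consider the unnormalized feature map $\tilde\phi(x) \in \mathbb{R}^{\binom{n}{\le d}}$ whose $J$-th coordinate is $\prod_{i \in J} x_i$ for every $J \subseteq [n]$ with $|J| \le d$. The associated kernel over $\X_n$ is $\tilde k(x,y) = \sum_{J: |J|\le d} \prod_{i\in J} x_iy_i$, and since the $x_iy_i$ are $0/1$ on $\X_n$, this equals $\sum_{t=0}^d \binom{\langle x,y\rangle}{t}$, which depends only on $\langle x,y\rangle$ and is therefore \regular. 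I would then define $k = \tilde k / M$ on $S_{p,n}$ where $M = \sum_{t=0}^d \binom{p}{t}$, so that $k(x,x)=1$ for all $x \in S_{p,n}$, and take $\phi = \tilde\phi/\sqrt{M}$ as the associated embedding.

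Next I would invoke \cref{fact:paturi}: for any conjunction $c_I$ on $\X_n$ there exist coefficients $\{\alpha_J\}_{|J| \le d}$ with $\sum \alpha_J^2 \le 2^{\tilde O(\sqrt{n}\log(1/\epsilon))}$ such that the polynomial $p_I(x) = \sum_J \alpha_J \prod_{i\in J} x_i$ $\epsilon$-approximates $c_I$ pointwise. Setting $\ww$ to have coordinates $\ww_J = \sqrt{M}\,\alpha_J$ gives $\langle \ww, \phi(x)\rangle = \sum_J \alpha_J \prod_{i\in J} x_i = p_I(x)$, so the approximation property transfers directly to the RKHS representation.

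The remaining step is to bound $\|\ww\|^2 = M \cdot \sum_J \alpha_J^2$. The Paturi coefficient term is already $2^{\tilde O(\sqrt n \log 1/\epsilon)}$, so it suffices to show $\log M = \tilde O(\sqrt{n}\log(1/\epsilon))$. This follows from the standard estimate $M \le (d+1)\binom{p}{d} \le (d+1)(ep/d)^d$, which for $d = \Theta(\sqrt{n}\log(1/\epsilon))$ and $p \le n$ gives $\log M = O(d\log(n/d)) = \tilde O(\sqrt{n}\log(1/\epsilon))$. Multiplying the two contributions yields the claimed norm bound $\|\ww\| = 2^{\tilde O(\sqrt{n}\log(1/\epsilon))}$.

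There is no serious obstacle: the only mildly delicate point is verifying that the normalization factor $M$ does not dominate the Paturi bound, which is handled by the elementary binomial estimate above; the rest is packaging. Note also that the construction tolerates the case $p > n/2$ by the same symmetric-flip reduction $x_i \mapsto 1-x_i$ used earlier in the paper, so a single \regular kernel suffices per layer as required by the statement.
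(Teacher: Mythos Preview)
Your proposal is correct and follows essentially the same approach as the paper: both construct the kernel $k(x,y)=\frac{1}{M}\sum_{t\le d}\binom{\langle x,y\rangle}{t}$ with $d=\Theta(\sqrt{n}\log(1/\epsilon))$ and $M=\sum_{t\le d}\binom{p}{t}$, recognize it as the (normalized) monomial embedding up to degree $d$, and then plug in the Paturi polynomial from \cref{fact:paturi}. Your write-up is in fact slightly more careful than the paper's, which contains a sign/direction slip in the normalization (it writes $\|p\|_H^2=\frac{1}{N_p}\sum\alpha_I^2$ where it should be $N_p\sum\alpha_I^2$), though the final $2^{\tilde O(\sqrt n\log(1/\epsilon))}$ bound is the same either way.
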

\begin{proof}
Our choice of kernel is inspired by the basis kernels of the Johnson Scheme. Namely, set $T_n=O(\sqrt{n}\log 1/\epsilon)$. we choose as kernel
\begin{align*}
k(\s{x}{i}\cdot \s{x}{j}) = \frac{1}{N_p}\cdot \sum_{t\le T_n} {{\s{x}{i}\cdot \s{x}{j}}\choose t}
\end{align*}
where $N_p = \sum_{t\le T_n} {p\choose t} = O(n^{\sqrt{n}\log 1/\epsilon})$.
One can show that for any two points $\s{x}{i}$ and $\s{x}{j}$

\begin{align*}
k(\s{x}{i}\cdot \s{x}{j}) = \frac{1}{N_p} \sum_{|I|\le T_{n}} \prod_{k\in I} \s{x}{i}_k\cdot \s{x}{j}_k
\end{align*}

Let $H$ be the associated Hilbert space with the kernel $k$, then one can observe that the kernel $k$ embeds the sample points in the space of monomials together with the standard scalar product normalized by $\frac{1}{N_p}$. by fact \ref{fact:paturi}, we know that there exists $p\in H$ whose $\ell_2$ norm over the coefficient is at most $2^{\tilde{O}(\sqrt{n} \log{(1/\epsilon)})}$. which in turns implies that $\|p\|^2_{H} = \frac{1}{N_p} |\sum \alpha_{I}^2| = 2^{\tilde{O}(\sqrt{n} \log{(1/\epsilon)})}$.
\end{proof}
\ignore{
\paragraph{Learning Conjunctions under sparseness assumptions}
By \thmref{thm:lower} the optimal achievable rate for learning conjunctions via \regular kernel is attained by the result above. In what follows we consider the case where the distribution is supported on $s$ sparse vector, namely $P(\sum x_i \le s )=1$ for some $s \ll n$. For simplicity, we will assume that $P(\sum x_i = s )=1$. Under the sparseness assumptions, the above approach can be simplified to using degree at most $s$ polynomials, thus leading to a running time of $O(n^{s})$ (which will also be achieved by the aforementioned kernel methods). However, as we next state, under the sparseness assumption there is a kernel which can achieve margin of $O(2^{s})$ (instead of $\approx n^{s}$). Since such a kernel exists \algname will enjoy the improved rate of $O(2^{s})$ over existing methods. This illustrates that even though in worst case we should not expect \algname to perform better then known approach, it could improve over existing result in a task specific manner.
\begin{theorem}\label{thm:upper}
Let $H$ be the hypothesis class of conjunctions and assume $\P$ is a distribution supported on $s$-sparse vectors i.e. $\P(\sum \s{x}{i} = s) =1$. Then for every $c_{I}\in \conj$ there exist $f_{H,\ww}\in \C(O(2^{s}))$ such that a.s.
\begin{align*}
\dotp{\ww}{\phi_{H}(\xx)}=c_I(\xx).
\end{align*}
As a corollary, we can learn the class in time $\mathrm{poly}(2^s,\frac{1}{\epsilon}, n)$.
\end{theorem}

For a conjunction $c_I$ with $\ell$ literals, we let $k_{\ell}$ be defined by
\[ k(\s{x}{i},\s{x}{j}) = {{s}\choose{\ell}}^{-1}\cdot {{\s{x}{i}\cdot \s{x}{j}}\choose{\ell}}.\]
To show that the kernel learns the class of conjunctions with $\ell$ literals in time $\mathrm{poly}(2^s,\frac{1}{\epsilon})$ it would be enough to show that if $H$ is the associated Hilbert space then there is $\ww\in H$ such that $\|\ww\|\le 2^s$ and
\begin{align*}
\dotp{\ww}{\phi(\s{x}{i})} = c_{I}.
\end{align*}
We can then employ surrogate loss function) to learn the class (since a surrogate convex loss function upper bounds the zero one error). Indeed, given the conjunction $c_{I}$ let $\mathbf{c}$ be a vector such that $\mathbf{c}_i = 1$ for all $i\in I$ and $\mathbf{c}_i=0$ else. If we let $\ww={s\choose \ell}\phi(\mathbf{c})$ then one can verify that:
\begin{align*}
\dotp{\ww}{\phi(\xx)} = {{\mathbf{c}\cdot \xx}\choose{\ell}} = 
\begin{cases}
1 &  \sum_{i\in I} \s{x}{i} =\ell \\ 
0 & \mathrm{else}  
\end{cases} = c_{I}(\xx).
\end{align*}
The element $\phi(\mathbf{c})$ belongs to $H$ and we have that

\begin{align*}
\|{s\choose \ell} \phi(\mathbf{c})\|^2 = {s\choose \ell}^2{s\choose \ell}^{-1} {{\mathbf{c}\cdot \mathbf{c}}\choose{\ell}}= {s\choose \ell} = O(2^s).
\end{align*}

\ignore{
\begin{proof}
Fix a conjunction $c_I$ and let $k=|I|$. By \cite{o2003new}, for some $\tau>2$ we have that if  $S(\xx)=R(X)^{\log 1/\epsilon}$ and 
\begin{align*}
R(\xx)= \left(\frac{1}{\tau} C_{\sqrt{k}}\left(\frac{\sum_{i\in I} \s{x}{i} + k\epsilon}{k-1+2k\epsilon}\right) \right),
\end{align*}
where $C_{\sqrt{k}}$ is the $\sqrt{k}$--Chebyshev polynomial of the first kind, then: 
\begin{align*}
| S(\xx) - c_{I}(\xx)| < \epsilon,
\end{align*}
Given $S$, first we will show that there exists a \regular kernel $k_1$, embedding $\phi_1: \X \to H_1$ and $\|\ww_1\|_{H_1}=O\left(2^{\sqrt{n}}\right)$ such that $\dotp{\ww_1}{\phi_1(\xx)}=R(\xx)$. 

For that we consider the kernel $k(x_i,x_j)= \frac{1}{1-\frac{1}{2 n}\s{x}{i}\cdot \s{x}{j}}$ introduced in \cite{shalev2011learning}. Let $H_1$ be the Hilbert space associated with $k_1$. Consider the polynomial $p(x) = \frac{1}{\tau} C_{\sqrt{k}}(a)$. 

By Lemma 1. in \cite{shalev2011learning}, if $p(a)=\sum \beta_j a^j$ and $\sum \beta_j^2 (2n)^j <B$ then for every $\|\uu\|_2\le 2$ there exists $\|\ww_1\|_H = O(B)$ such that $\dotp{\ww}{\phi(\xx)} = p(\uu\cdot \xx )$.\footnote{ \cite{shalev2011learning}, state the result for the case we choose the hyper-parameter $\nu=\frac{1}{2}$ and have the kernel $k(x_i,x_j)=\frac{1}{1-\frac{1}{2} x_i\cdot x_j}$, in that case, the result will hold with  $\sum \beta_j^2 2^j <B$, we in contrast choose $\nu=\frac{1}{2n}$ hence the difference.}  
For the Chebyshev's polynomial coefficients we have (again by \cite{shalev2011learning}, Lemma 6) $|t_{\sqrt{n},j}|\le \frac{e^{\sqrt{n}+j}}{\sqrt{2\pi}}$. Taken together we have that
the coefficient of $p$ equal $\beta_j \le \frac{t_{\sqrt{n},j}}{\tau}$ and we have:

\begin{align*}
\sum_{j\le \sqrt{n}} \beta^2_j (2n)^j< \frac{1}{\tau^2}\sum \frac{1}{2\pi}e^{2\sqrt{n}+j(+\log 2n)} = \tilde{O}(2^{\sqrt{n}})
\end{align*}
\end{proof}

Embedding $\xx \to (\xx,1) \in \X_{n+1}$ we can see that $R(\xx) = p( \uu\cdot (\xx,1))$ where $\uu_{i}= \frac{1}{k-1+2k\epsilon}$ for every $i\in I$ and $\uu_{n+1} = \frac{k\epsilon}{k-1+2k\epsilon}$.
Thus letting $\phi_1: \X \to H$ be the embedding defined by the kernel:
 \begin{align*} \phi_1(\s{x}{i})\cdot \phi_1(\s{x}{j}) =k_1(\s{x}{i},\s{x}{j}) = \frac{1}{1- \frac{1}{2n} (\s{x}{i}\cdot \s{x}{j} +1)}\end{align*}
We have that for the kernel $k_1$ and the embedding $\phi_1$, for some $\ww_1\in H_1$ with $\|\ww_1\|=\tilde{O}(2^{\sqrt{n}})$ we have that $\dotp{\phi_1(\xx)}{\ww_1}_{H_1} = R(\xx)$.

We next embed $H_1$ into a new Hilbert space $H_2$ through a polynomial kernel. 
Specifically, consider the embedding $\phi_2 : H_1\to H_2$ induced by a \regular kernel $k_2(\uu_i,\uu_j) = (\dotp{\uu_i}{\uu_j}_{H_1})^{\log 1/\epsilon}$, where $\uu_i,\uu_j \in H_1$.

It is easy to see that for every $\xx \in \X$ we have that $\dotp{\phi_2(\ww_1)}{\phi_2(\phi_1(\xx))}_{H_2} = S(\xx)$, and further $\|\phi_2(\ww_1)\|^2_{H_{2}}= (\|\ww_1\|^2_{H_1})^{\log 1/\epsilon} = O(2^{2 \sqrt{n}\log 1/\epsilon})$. Overall we have $\ww_2 = \phi_2(\ww_1)$ such that $\|\ww_2\|_{H_2}= \tilde{O}(2^{\sqrt{n}\log 1/\epsilon})$ and 
\begin{align*} \dotp{\ww_2}{\phi_2(\phi_1(\xx))}_{H_2} = 
\dotp{\phi_2(\ww_1)}{\phi_2(\phi_1(\xx))}_{H_2}=
\left(\dotp{\ww_1}{\phi_1(\xx)}_{H_1}\right)^{\log 1/\epsilon} = (R(\xx))^{\log 1/\epsilon}=
S(\xx). 
\end{align*} It is easy to verify that the embedding $\phi_2\circ \phi_1$ is induced by the kernel $k(x_i,x_j) = k_2( k_1 (\s{x}{1}\cdot \s{x}{2}))$, hence it is a \regular kernel.}}

\end{document}